\documentclass[11pt,letterpaper]{article}
\usepackage[T1]{fontenc}
\usepackage[utf8]{inputenc}
\usepackage[margin=1in]{geometry}
\usepackage[authoryear,round]{natbib}

\usepackage{amsmath,amsfonts,bm}

\def\1{\bm{1}}

\DeclareMathAlphabet{\mathsfit}{\encodingdefault}{\sfdefault}{m}{sl}
\SetMathAlphabet{\mathsfit}{bold}{\encodingdefault}{\sfdefault}{bx}{n}

\newcommand{\E}{\mathbb{E}}

\DeclareMathOperator{\softmax}{softmax}

\newcommand{\KL}{D_{\mathrm{KL}}}
\newcommand{\Var}{\mathrm{Var}}

\newcommand{\ProximalLogMoment}{F.1}
\newcommand{\ProximalTransfer}{F.2}
\newcommand{\ProximalAngular}{F.9}

\usepackage{amsmath,amssymb,amsfonts,amsthm,mathtools}
\usepackage{booktabs}
\usepackage{graphicx}
\usepackage{caption}
\usepackage{float}
\floatstyle{ruled}
\newfloat{algorithm}{tbp}{loa}
\floatname{algorithm}{Algorithm}
\floatstyle{plain}
\newcounter{algline}
\newcommand{\algline}[1][]{\refstepcounter{algline}\if\relax\detokenize{#1}\relax\else\label{#1}\fi\makebox[1.4em][r]{\scriptsize\thealgline:}\>}
\usepackage{microtype}
\usepackage{xcolor}
\usepackage{hyperref}
\hypersetup{hidelinks}
\usepackage{url}
\newtheorem{theorem}{Theorem}
\newtheorem{proposition}[theorem]{Proposition}
\newtheorem{lemma}[theorem]{Lemma}

\theoremstyle{definition}
\newtheorem{definition}{Definition}
\newtheorem{assumption}{Assumption}
\theoremstyle{remark}
\newtheorem{remark}[theorem]{Remark}

\newcommand{\piref}{\pi_{\mathrm{ref}}}
\newcommand{\Tref}{T_{\mathrm{ref}}}
\newcommand{\pib}{\pi_{\mathrm{beh}}}
\newcommand{\pibt}[1]{\pi_{\mathrm{beh},#1}}
\newcommand{\Jbeh}{\mathcal J_{\mathrm{beh}}}
\newcommand{\Rth}{R_{\theta}}
\newcommand{\qwenone}{Qwen3-1.7B-Base}
\newcommand{\qwenfour}{Qwen3-4B-Base}
\newcommand{\actorlr}{10^{-6}}

\title{\Large Gibbs-Gap Policy Optimization for Off-Policy RL with Verifiable Rewards}

\author{Xuanchang Zhang \qquad Tong Zhang\\
\normalfont University of Illinois Urbana-Champaign}
\date{}
\hypersetup{
  pdftitle={Gibbs-Gap Policy Optimization for Off-Policy RL with Verifiable Rewards},
  pdfauthor={Xuanchang Zhang and Tong Zhang}
}

\begin{document}
\maketitle

\begin{abstract}
Reinforcement learning with verifiable rewards often trains language models on
responses generated by a different, possibly stale behavior policy. Correcting
this mismatch with learner-to-behavior importance weights can introduce high
variance for long responses. We introduce Gibbs-Gap Policy Optimization
(GGPO), which minimizes a log-mean-exp contrast of KL-adjusted rewards among
responses to the same prompt, without learner-to-behavior importance weights.
We derive GGPO from KL-regularized policy optimization: on policy, its population
loss is exactly the KL-regularized optimality gap, and under suitable coverage the
learner's single-response expected reward is at least the behavior policy's
best-of-$K$ reward, minus terms for the raw finite-group loss, regularization,
and finite training data. We characterize reference refresh through exact
Gibbs-target composition and a finite-data final-policy reward bound under
local reference control and tokenwise behavior mismatch. This refresh is
empirically important: in an ablation, fixed-reference GGPO scores $32.96$ on
the five-benchmark problem-weighted average, versus $46.58$ with the default
refresh schedule. Experiments on mathematical reasoning further show that GGPO
outperforms GRPO and AGRO under training--inference mismatch and on fixed
offline data, while maintaining its performance as controlled policy staleness
grows.\footnote{Code is available at \url{https://github.com/thunderous77/GGPO}.}
\end{abstract}

\section{Introduction}

Reinforcement learning with verifiable rewards (RLVR) improves language-model
reasoning by using automatic checks to evaluate task outcomes
\citep{deepseekai2025deepseekr1}. In group-based methods, the model generates
several responses to each prompt, and differences in their rewards provide
a learning signal \citep{shao2024deepseekmath,yu2025dapo}. These groups also
suggest a natural performance target. If selecting the highest-reward
response from a group improves on drawing a single response, we would like
to train a learner whose individual responses approach the reward of that
selection procedure. This motivation also underlies reward-ranked
fine-tuning methods such as RAFT \citep{dong2023raft} and reasoning
variants such as RAFT++ \citep{xiong2025minimalist}. In practice, however,
the policy that generates the
group may differ from the learner being trained. During asynchronous
generation, responses can arrive from an earlier policy version; even within
a rollout batch, sequential learner updates can make the remaining responses
stale \citep{noukhovitch2025asynchronous,zheng2025stabilizing}. We refer to
the policy generating these responses as the behavior policy $\pib$, and
to the policy being trained as the learner $\pi_\theta$. The distinction is
also relevant in offline training, where sampled responses remain fixed
while the learner continues to change. Moreover, synchronizing policy
parameters does not necessarily eliminate the mismatch: training and
inference can use different implementations or numerical precision
\citep{minimax2025m1,zhong2026diagnosing}.

A standard way to account for this mismatch is importance sampling, which
weights behavior-policy data by the learner-to-behavior likelihood ratio
$\pi_\theta/\pib$. For an autoregressive response, this ratio is a product
of conditional token ratios. Small differences at individual token positions
can therefore accumulate over a long response, producing highly variable
weights and making the update sensitive to a few sampled responses.
Practical methods control these contributions through clipping, truncation,
or masking
\citep{schulman2017ppo,yu2025dapo,zhong2026diagnosing,lingteam2025everystep,zheng2025gspo}.
These choices involve both where to apply the correction, such as at the
token or response level, and how strongly to limit it. They can reduce
variance, but also introduce bias. This tradeoff motivates an alternative:
learn directly from behavior-policy response groups through an objective
that retains a quantitative connection to the learner's reward, without
requiring learner-to-behavior importance weights in the update.

We introduce Gibbs-Gap Policy Optimization (GGPO), which learns from
contrasts in KL-adjusted rewards among responses to the same prompt, without
learner-to-behavior importance weights. Under bounded coverage, its
finite-group loss controls the gap between the learner's single-response
expected reward and the reward obtained by best-of-$K$ selection from the
behavior policy. Thus the response groups serve both as training data and
as a reward benchmark for the learner. Groups of order $K$ suffice for this
comparison, and a uniform generalization bound extends it to finite data.

The construction follows from KL-regularized duality, which identifies a
Gibbs policy whose adjusted reward is constant across responses. GGPO
measures departures from this condition through contrasts within each group.
On policy, its population loss equals the regularized optimality gap.
For finite groups, we establish the reward comparison directly, without
requiring accurate estimation of that population loss.

A fixed reference defines a single Gibbs target. Refreshing the reference
to the current learner changes that target, allowing successive rounds to
build on earlier progress. We characterize how exact targets compose across
rounds and bound the reward of averaged outputs when updates are imperfect,
under coverage and bounded adjusted-reward spread. Under binary rewards and
local reference control, a finite-data guarantee bounds the final policy's
reward using aggregate tokenwise mismatch and achieved fitting accuracy.
Under its assumptions, the expected reward of one response from the trained
policy can approach that of selecting the best of $K$ responses from the
initial policy.
Section~\ref{sec:integration-convergence} summarizes this guarantee;
Appendix~\ref{sec:integration-formal-setup} gives the precise result.

GGPO builds on earlier work on off-policy consistency. AGRO
\citep{tang2025agro} minimizes the variance of the same adjusted reward and
identifies the same regularized optimum under common support. Although the
two objectives share this target, they measure departures from it differently.
We compare their objectives and guarantees in Section~\ref{sec:related-work}
and evaluate GGPO against GRPO and AGRO under training--inference mismatch,
controlled policy staleness, and offline training.

Our contributions are:
\begin{itemize}
  \item \textbf{An off-policy group objective with a direct reward guarantee.}
  GGPO minimizes the log-mean-exp contrast of KL-adjusted rewards within
  behavior-policy response groups, without learner-to-behavior importance
  weights. On policy, its population value is exactly the KL-regularized
  optimality gap. Off policy, coverage bounds that gap. Its raw finite-group
  loss bounds the gap between the learner's single-response expected reward and the
  behavior policy's best-of-$K$ reward, with explicit terms for the loss,
  regularization, reference--behavior mismatch, and group size. A uniform empirical bound and
  oracle inequality extend this comparison to finite data.
  \item \textbf{A characterization of periodic reference refresh.}
  When each round exactly fits its Gibbs target, $T$ refresh rounds at
  temperature $\beta$ implement one Gibbs update at temperature $\beta/T$.
  This explains how refresh changes the target across rounds. For imperfect
  updates, a finite-data
  guarantee controls the final policy reward using aggregate tokenwise
  mismatch and achieved inner optimization accuracy.
  An ablation shows that periodic refresh substantially improves GGPO under
  the default training setting.
  \item \textbf{Evaluation across behavior--learner mismatch regimes.}
  On five mathematical-reasoning benchmarks and two model sizes, GGPO
  outperforms GRPO and its closest objective-level comparator AGRO under
  training--inference mismatch. It maintains its accuracy as the synchronization
  delay grows from $N=8$ to $N=128$, while GRPO and AGRO decline. At $N=128$,
  token-level TIS improves GRPO, while GGPO achieves higher accuracy under
  the same synchronization delay.
  On a fixed offline rollout pool, GGPO has the highest MATH-500 accuracy at every
  checkpoint from 160 updates onward.
\end{itemize}

\section{Related work}
\label{sec:related-work}

\paragraph{KL-regularized and direct policy objectives.}
Gibbs variational identities connect KL regularization to exponential tilting
\citep{richemond2017variational,geist2019regularized}.
These identities also provide a starting point for direct policy objectives.
Objectives derived from related policy identities include DPO
\citep{rafailov2023dpo}, InfoNCA/NCA \citep{chen2024nca}, REBEL
\citep{gao2024rebel}, and $A^*$-PO \citep{brantley2025astarpo}.
ConSPO \citep{zhang2026conspo} contrasts verified positive and negative
responses using length-normalized sequence log probabilities in an
InfoNCE-style objective. GGPO instead applies a log-mean-exp gap to
KL-adjusted rewards. In this construction, the behavior policy generates
the samples, while a distinct reference policy provides the KL anchor.
InfoNCA uses learner-to-reference logits with reward-softmax labels;
GGPO instead minimizes a KL divergence from uniform group weights to an
adjusted-reward tilt.

\paragraph{Group RLVR and behavior--learner mismatch.}
GRPO \citep{shao2024deepseekmath} uses within-group advantages in a PPO
\citep{schulman2017ppo} surrogate. Building on this group-based approach, DAPO \citep{yu2025dapo}, Dr.GRPO
\citep{liu2025understanding}, and GSPO \citep{zheng2025gspo} modify sampling,
normalization, or PPO-style ratio aggregation.
MaxRL \citep{tajwar2026maxrl} derives success-normalized on-policy updates
from a maximum-likelihood formulation. Its centered estimator is related to
a special binary-reward limit of GGPO's update coefficients, detailed in
Appendix~\ref{app:gradient}.

When the behavior policy differs from the learner, the update must also
account for how the responses were sampled. Under this mismatch,
asynchronous RLHF \citep{noukhovitch2025asynchronous} studies staleness and
IMPALA \citep{espeholt2018impala} uses V-trace. In LLM training,
\citet{zhong2026diagnosing} studies truncated old-trainer-to-rollout
correction weights alongside a PPO ratio whose denominator is the old
trainer policy. This correction and PPO denominator serve different roles
from GGPO's KL reference. Other methods mask ratio-based
contributions \citep{lingteam2025everystep,li2026trustmasking} or restructure
learner-to-behavior ratios \citep{zhang2026rethinking}. These corrections trade
importance-weight variance against clipping, truncation, or masking bias
\citep{huang2026stable}. GGPO does not evaluate learner-to-behavior correction
ratios, while its population guarantee depends on behavior coverage.

In concurrent work, \citet{marek2026scorecentering} center
token-level policy scores under the sampling distribution to remove an
additional drift term induced by training--inference mismatch. This additive
correction can be combined with importance weighting. GGPO derives a
response-group objective from KL-regularized duality and contrasts
KL-adjusted rewards using a separately maintained reference. Its update uses
likelihoods of sampled responses under the learner and reference, rather than
the sampler's next-token distribution.

\paragraph{Objectives without learner-to-behavior importance weighting.}
PCL/Trust-PCL \citep{nachum2017pcl,nachum2018trustpcl} optimize squared soft
consistency residuals on off-policy trajectories. In language-model training, Kimi k1.5
\citep{kimiteam2025kimi} uses squared Gibbs-consistency regression with an
iteratively updated reference. Contrastive Policy Gradient
\citep{fletberliac2024copg} uses a pairwise quadratic objective under fixed
sampling distributions. OAPL \citep{ritter2026offpolicy} regresses an estimated
optimal advantage using the behavior policy as its KL anchor; its
learner-to-behavior log ratio is a regression term, rather than an
importance weight. AsymRE
\citep{arnal2025asymre} scales positive and
negative feedback asymmetrically. DRO
\citep{richemond2024dro} applies a squared penalty to the optimality residual
while jointly estimating the value offset. These methods show that off-policy learning can be approached through
consistency conditions, without learner-to-behavior correction ratios.
GGPO follows this general approach, using a contrastive loss whose value
can be related directly to reward.

\paragraph{Relation to AGRO.}
AGRO \citep{tang2025agro} is our closest objective-level comparator and an
earlier example with an off-policy consistency guarantee. It minimizes the
behavior variance of the same KL-adjusted reward. Theorem~2 of
\citet{tang2025agro} establishes that the regularized optimal policy is the
unique global minimizer under common support, shared by GGPO.

GGPO's distinction is the direct alignment of its loss value with the
KL-regularized objective. Its exponential form is derived from the exact
duality residual. This gives a direct interpretation of the loss value:
at the population level and when behavior equals learner,
the loss equals the regularized optimality gap for every positive candidate
policy. Off policy, the connection becomes a
coverage-dependent upper bound. AGRO's variance loss, scaled by $1/\beta$,
is the local quadratic approximation to this population contrastive loss when
adjusted-reward spread is small relative to $\beta$
(Appendix~\ref{app:variance}). GGPO's finite-group loss additionally admits a
direct best-of-$K$ reward bound and an empirical generalization bound.
The finite-group variance comparison also yields a reward bound in terms of
AGRO's population variance objective (Eq.~\eqref{eq:agro-reward-bound}).
Appendices~\ref{app:gradient} and~\ref{app:variance} give the gradient and
finite-group weighting comparisons. The experiments compare the practical
consequences under matched settings.

We also acknowledge related unpublished work by Audrey Huang and collaborators
on coverage collapse and reference re-anchoring in policy optimization after
pre-training (private communication with Nan Jiang, October 2026).

\section{From KL regularization to a contrastive loss}
\label{sec:theory}
\label{sec:behavior-profile}
We treat a complete response as one action in a contextual bandit. Fix a
prompt $x$, a finite action space $\mathcal A$, reward $r(x,a)$, temperature
$\beta>0$, and a reference policy $\piref$ with positive support. Define
\begin{equation}\label{eq:main-regularized-value}
 V_\beta(\pi;x)=\E_{a\sim\pi}r(x,a)-\beta\KL(\pi\|\piref).
\end{equation}
Let $V_\beta^*(x)=\max_\pi V_\beta(\pi;x)$ and
$\pi^*(\cdot\mid x)=\arg\max_\pi V_\beta(\pi;x)$ over all policies $\pi$ on $\mathcal A$.
The unique maximizer is the Gibbs policy
\begin{equation}\label{eq:main-gibbs}
 \pi^*(a\mid x)=\frac{\piref(a\mid x)e^{r(x,a)/\beta}}
 {\E_{a'\sim\piref}e^{r(x,a')/\beta}}.
\end{equation}
The Gibbs variational identity \citep{richemond2017variational} gives a
useful way to recognize this optimum. When $\pi_\theta$ is the Gibbs
policy, its adjusted reward $\Rth(x,a)=r(x,a)-\beta\log[\pi_\theta(a\mid x)/\piref(a\mid x)]$
is constant across responses to the same prompt. This suggests learning by
comparing adjusted rewards: if they differ among sampled responses, the
learner has not yet satisfied this optimality condition.

We can test this condition using behavior-policy samples, even when the
behavior policy differs from the learner. To quantify the differences, define
the population contrastive loss
\begin{equation}\label{eq:main-population-loss}
 \Jbeh(\pi_\theta;x)=\beta\log\E_{a\sim\pib}e^{\Rth(x,a)/\beta}
                  -\E_{a\sim\pib}\Rth(x,a).
\end{equation}
The following proposition states its dual interpretation and its relation
to the original KL-regularized objective.

\begin{proposition}[Population duality and regularized optimality]
\label{prop:population-duality}
Suppose $\pib$, $\pi_\theta$, and $\piref$ are positive on $\mathcal A$. Define
$h(z)=e^z-z-1$ and
$q_\theta(a\mid x)\propto\pib(a\mid x)e^{\Rth(x,a)/\beta}$. Then
\begin{equation}\label{eq:main-profiled-duality}
 \Jbeh(\pi_\theta;x)
 =\min_{\nu\in\mathbb R}\beta\E_{a\sim\pib}
 h\!\left(\frac{\Rth(x,a)-\nu}{\beta}\right)
 =\beta\KL(\pib\|q_\theta)\ge0.
\end{equation}
With $C_x=\max_a\pi_\theta(a\mid x)/\pib(a\mid x)$,
\begin{equation}\label{eq:main-population-coverage}
 V_\beta^*(x)-V_\beta(\pi_\theta;x)
 =\beta\KL(\pi_\theta\|\pi^*)\le C_x\Jbeh(\pi_\theta;x).
\end{equation}
Over all positive policies, the minimum contrastive loss is zero and its
unique minimizer is $\pi^*$. When $\pib=\pi_\theta$, $\Jbeh$ equals the exact
KL-regularized optimality gap.
\end{proposition}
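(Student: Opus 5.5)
The argument splits into three pieces: the profiled duality, the regularized-gap identity with its coverage bound, and the characterization of the minimizer. Fix $x$ and suppress it from the notation.

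\emph{Profiled duality.} For fixed $\nu$, expand
\[
\beta\E_{\pib}h((\Rth-\nu)/\beta)=\beta e^{-\nu/\beta}\E_{\pib}e^{\Rth/\beta}-\E_{\pib}\Rth+\nu-\beta .
\]
This is strictly convex in $\nu$. Setting the derivative to zero gives $e^{\nu/\beta}=\E_{\pib}e^{\Rth/\beta}$, so $\nu^\star=\beta\log\E_{\pib}e^{\Rth/\beta}$. Substituting back, the first term becomes $\beta$, which cancels the $-\beta$, and we are left with exactly $\Jbeh$. For the KL form, write $q_\theta(a)=\pib(a)e^{\Rth(a)/\beta}/Z$ with $Z=\E_{\pib}e^{\Rth/\beta}$. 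Then
\[
\KL(\pib\|q_\theta)=\E_{\pib}[\log Z-\Rth/\beta],
\]
and multiplying by $\beta$ gives $\Jbeh$. Nonnegativity follows from Gibbs' inequality, or equally from $h\ge0$.

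\emph{Gap identity.} Use the explicit form \eqref{eq:main-gibbs}, so that $\log\pi^*=\log\piref+r/\beta-\log Z^*$ with $Z^*=\E_{\piref}e^{r/\beta}$. Expanding $V_\beta(\pi_\theta)=\E_{\pi_\theta}[r-\beta\log(\pi_\theta/\piref)]$ gives
\[
V_\beta(\pi_\theta)=\beta\log Z^*-\beta\KL(\pi_\theta\|\pi^*).
\]
Taking $\pi_\theta=\pi^*$ shows $V_\beta^*=\beta\log Z^*$, and the identity follows. A useful observation is that $\Rth=\beta\log Z^*+\beta\log(\pi^*/\pi_\theta)$. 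Hence the tilt defining $q_\theta$ is $q_\theta\propto\pib\,\pi^*/\pi_\theta$, since additive constants in $\Rth$ cancel in the normalization.

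\emph{Coverage bound.} This is the main step. Write $w=\pi^*/\pi_\theta$, so $\E_{\pi_\theta}w=1$ and $\beta\KL(\pi_\theta\|\pi^*)=\beta\E_{\pi_\theta}[-\log w]$. Since $\E_{\pi_\theta}(w-1)=0$, we have
\[
\E_{\pi_\theta}[-\log w]=\E_{\pi_\theta}[w-1-\log w]=\E_{\pi_\theta}h(\log w).
\]
The function $h$ is nonnegative, so $\pi_\theta\le C_x\pib$ gives
\[
\E_{\pi_\theta}h(\log w)\le C_x\E_{\pib}h(\log w).
\]
Now take $\nu=\beta\log Z^*$ in the profiled duality. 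Then $(\Rth-\nu)/\beta=\log w$, so
\[
\Jbeh\le\beta\E_{\pib}h(\log w).
\]
This is the wrong direction for chaining. Instead I want $\beta\E_{\pib}h(\log w)$ bounded by $\Jbeh$ itself. That requires the best $\nu$, not the particular choice $\nu=\beta\log Z^*$.

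To handle this, redo the first step with a free constant $c$. Write $w_c=e^{-c}w$ and note that $\E_{\pi_\theta}[w_c-1-\log w_c]$ is minimized at $e^{c}=\E_{\pi_\theta}w=1$. So for every $c$,
\[
\KL(\pi_\theta\|\pi^*)=\min_c\E_{\pi_\theta}h(\log w-c)\le\E_{\pi_\theta}h(\log w-c)\le C_x\E_{\pib}h(\log w-c).
\]
Choosing the $c$ that corresponds to the optimal $\nu$ makes the right-hand side equal $C_x\Jbeh/\beta$, which gives \eqref{eq:main-population-coverage}. The key is that both sides of the comparison are profiled over the same additive offset, and the pointwise domination $\pi_\theta\le C_x\pib$ transfers through for each fixed offset.

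\emph{Minimizer and on-policy case.} $\Jbeh=0$ holds iff $q_\theta=\pib$, i.e.\ iff $\Rth$ is constant on $\mathcal A$ (using positivity of $\pib$). Constant $\Rth$ means $\pi_\theta\propto\piref e^{r/\beta}$, which is $\pi^*$. Alternatively, $\Jbeh=0$ together with $C_x<\infty$ gives $\KL(\pi_\theta\|\pi^*)=0$.

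When $\pib=\pi_\theta$, we have $q_\theta\propto\pi^*$, so $q_\theta=\pi^*$. The KL form then gives $\Jbeh=\beta\KL(\pi_\theta\|\pi^*)$, which is the exact regularized optimality gap. Equivalently, $C_x=1$ and both inequalities above are equalities.
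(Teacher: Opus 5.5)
Your proposal is correct and follows essentially the same route as the paper. Both proofs profile the offset to get $\nu_\theta=\beta\log\E_{\pib}e^{\Rth/\beta}$, write the regularized gap as the minimum over offsets of $\beta\E_{\pi_\theta}h((\Rth-\nu)/\beta)$, evaluate it at $\nu_\theta$, and transfer to $\pib$ pointwise using $h\ge0$ and $\pi_\theta\le C_x\pib$. The only cosmetic difference is that the paper obtains the offset-minimization identity from its scalar dual $\Phi_x$ (Lemma~\ref{lem:scalar-duality}), whereas you derive it directly from $\E_{\pi_\theta}[\pi^*/\pi_\theta]=1$.
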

The ratio $C_x$ enters the analysis but is not evaluated as an update weight.
\noindent{For long responses, the worst-case sequence ratio
$C_x$ in Eq.~\eqref{eq:main-population-coverage} can be large.
Lemma~\ref{lem:integration-moment-transfer} in
Appendix~\ref{app:integration-proof} gives an alternative bound
that converts finite-group loss into a proximal objective gap using
aggregate tokenwise mismatch. Under its local
reference assumptions, this can reduce the coverage dependence of the
error bound without requiring a smaller $C_x$.}
Appendix~\ref{app:main-population-proof} proves the proposition. Training uses
finite groups, whose expected loss differs from $\Jbeh$ and requires the
separate guarantee in Section~\ref{sec:group-bon-motivation}.

\section{Gibbs-Gap Policy Optimization}
\label{sec:method}

We first describe the finite-group objective, its gradient, and the training
procedure. We then state the reward guarantees for finite groups and periodic
reference refresh.

\subsection{From the population objective to finite groups}

Given $m$ responses $a_{1:m}\sim\pib(\cdot\mid x)$, let $r_i:=r(x,a_i)$
and define their KL-adjusted rewards
$R_i:=r_i-\beta\log[\pi_\theta(a_i\mid x)/\piref(a_i\mid x)]
=\Rth(x,a_i)$. Replacing the behavior expectations in
Eq.~\eqref{eq:main-population-loss} with their empirical averages yields the
single-group loss
\begin{equation}
 \widehat{\mathcal J}_m(\pi_\theta;x,a_{1:m})
 =\beta\log\!\left(\frac1m\sum_{i=1}^m e^{R_i/\beta}\right)
 -\frac1m\sum_{i=1}^mR_i\ge0.
 \label{eq:empirical_group_loss}
\end{equation}
This log-mean-exp gap compares the adjusted rewards within the sampled
group. It is zero if and only if all responses in the group have the same
adjusted reward, matching the optimality condition used to construct the
population objective.

Let $\rho$ be the prompt distribution. For fixed $\pib$ and $\piref$,
the expected finite-group target and its prompt average are
\begin{equation}
\begin{aligned}
H_m(\pi_\theta;\pib,x)
&:=\E_{a_{1:m}\sim\pib^m}\widehat{\mathcal J}_m(\pi_\theta;x,a_{1:m}),\\
L_m(\pi_\theta)
&:=\E_{X\sim\rho}H_m(\pi_\theta;\pib,X).
\end{aligned}
\label{eq:finite-group-population}
\end{equation}
We suppress the fixed $\beta$ and reference in the loss notation, writing
$H_m(\pi_\theta;\pib,x;\piref)$ when specifying the reference is necessary.
Although the group loss replaces expectations by sample averages, its
expectation is not generally the population loss: the logarithm is nonlinear.
For i.i.d. groups,
$H_m\le\Jbeh$, the two share the same zero set for every $m\ge2$, and
$H_m\to\Jbeh$ as the group size grows
(Proposition~\ref{prop:finite-group-target}). Sharing the same zero set means that both losses identify the same
unrestricted target. It does not, however, make their values interchangeable
in a reward bound. We therefore compare finite-group loss directly with a
reward benchmark.

\subsection{Gradient and normalized training loss}
\label{sec:gcpo-gradient}

Minimizing the group loss gives a weighted contrast between responses.
To see how each response contributes to the update, define the within-group tilt
\begin{equation}
 q_i=\frac{e^{R_i/\beta}}
 {\sum_j e^{R_j/\beta}}.
 \label{eq:q-weights}
\end{equation}
Let $U_m$ denote the uniform distribution on the $m$ sample indices. Then
$\widehat{\mathcal J}_m/\beta=\KL(U_m\|q)$, so the finite-group loss penalizes deviations
of the adjusted-reward tilt from uniformity. Holding rewards, actions, and
reference likelihoods fixed during an actor update gives
\begin{equation}
 \nabla_\theta\widehat{\mathcal J}_m
 =-\beta\sum_{i=1}^m\left(q_i-\frac1m\right)
 \nabla_\theta\log\pi_\theta(a_i\mid x).
 \label{eq:gcpo-gradient}
\end{equation}
Under gradient descent, responses with $q_i>1/m$ receive positive update
coefficients, while those with $q_i<1/m$ receive negative coefficients. The behavior policy
determines which responses appear in the group; the update uses no
learner-to-behavior correction weight. The coefficient can also be written
\begin{equation}
 q_i=\softmax_i\!\left(\frac{r_i}{\beta}
 -\log\frac{\pi_\theta(a_i\mid x)}{\piref(a_i\mid x)}\right).
 \label{eq:q-logits}
\end{equation}
The reward term in Eq.~\eqref{eq:q-logits} separates reward classes, while the
learner-to-reference ratio redistributes weight among similarly rewarded
responses. This separates two effects in the update: rewarding successful
responses and adjusting their relative likelihoods with respect to the reference.
With binary rewards and small $\beta$, $q$ concentrates on correct
responses. Incorrect responses receive the same limiting negative coefficient,
while the weights among correct responses follow a softmax of their negative
learner-to-reference log ratios. GGPO therefore gives larger coefficients to
correct responses whose likelihood has grown less relative to the reference,
and the net coefficient summed over correct responses is largest when only
one response in the group is correct. Figure~\ref{fig:group-weight} compares these coefficients analytically.
At the population level, the frozen-behavior gradient
of $\Jbeh$ contrasts the behavior distribution with its induced exponential
tilt. When $\pib=\pi_\theta$, $\Jbeh$ equals the exact regularized gap, while
its frozen-behavior gradient follows the forward-KL/cross-entropy direction
toward $\pi^*$.
Appendix~\ref{app:gradient} gives the population relation, limiting cases,
and coefficient analysis; Appendix~\ref{app:implementation} reports
weight-concentration diagnostics.

For a minibatch of $G_{\mathrm{mb}}$ retained groups, let
$\widehat{\mathcal J}_m^{\,g}:=\widehat{\mathcal J}_m(\pi_\theta;x_g,a^g_{1:m})$. To reuse
GRPO's training hyperparameters, we use
\begin{equation}
 L_{\mathrm{GGPO}}
 =\left(\frac{1}{G_{\mathrm{mb}}}\sum_{g=1}^{G_{\mathrm{mb}}}\widehat{\mathcal J}_m^{\,g}\right)
 \frac{1}{\beta\bar\ell\sqrt{m-1}},
 \label{eq:gcpo_loss_norm}
\end{equation}
where $\bar\ell$ is the minibatch mean response length. This positive
normalization leaves the direction of each realized minibatch gradient
unchanged. It removes the exterior $\beta$ and converts the sequence-level
group average to a token-mean coefficient scale, allowing us to use the
same training hyperparameters across methods. Appendix~\ref{app:gradient}
gives the full calculation.

Training retains only mixed-reward groups, so the minibatches used by the
optimizer differ from the unfiltered groups in the analysis.
For deterministic binary rewards, filtering and positive normalization
preserve the zero set on behavior support for each prompt with
$0<\Pr_{a\sim\pib(\cdot\mid x)}[r(x,a)=1]<1$
(Proposition~\ref{prop:mixed-group-zero-set}). The resulting training
objective averages over the retained prompt distribution.
The population and empirical best-of-$K$ bounds concern the raw, unfiltered loss.

\subsection{Practical GGPO algorithm}
\label{sec:ggpo-algorithm}

Let $N$ be the number of learner minibatches consumed from one frozen behavior
collection. This number is the synchronization delay $N$: the learner
continues updating while the behavior collection remains fixed.
We synchronize the rollout parameters with the learner, collect
$N G_{\mathrm{mb}}$ mixed-reward groups, and partition them into $N$ disjoint
minibatches. Each minibatch receives one optimizer update and is consumed once.
Parameter synchronization does not remove the training--inference mismatch
discussed in Section~\ref{sec:experiments}.

Periodic reference refresh is part of the GGPO training algorithm used in our
main experiments. Every $\Tref$ optimizer updates, we set
$\piref\leftarrow\pi_\theta$ and hold it fixed between refreshes. This resets the learner-to-reference log
ratio in Eq.~\eqref{eq:q-logits}, keeping the anchor local to the current
training stage (Appendix~\ref{app:anchoring}). AGRO shares the same
reference-dependent Gibbs target, so this motivation applies to its reference
updates as well.

Algorithm~\ref{alg:ggpo} gives the original, unclipped training procedure.
The optional token-clipping variant is described below. The reference
schedule uses the global optimizer-update counter, independently of collection
boundaries. If a refresh occurs within a collection, the remaining responses
stay fixed and the next minibatch uses the new reference.
Collection allows at most 32 generation batches, including the initial
batch, and stops with an error if the target cannot be filled
(Appendix~\ref{app:data-evaluation}).

\begin{algorithm}[t]
\caption{Gibbs-Gap Policy Optimization with periodic reference refresh}
\label{alg:ggpo}
\small
\textbf{Input:} Initial policy $\pi_0$, training prompts $\mathcal D$, binary verifier $r$;
 group size $m\ge2$, groups per minibatch $G_{\mathrm{mb}}$, synchronization delay $N$;
 temperature $\beta>0$, reference interval $\Tref$, total optimizer updates $U$.\par
\textbf{Output:} Final learner $\pi_\theta$.\par
\setcounter{algline}{0}
\begin{tabbing}
\hspace{1.8em}\=\hspace{1.2em}\=\hspace{1.2em}\=\hspace{1.2em}\=\kill
\algline Initialize $\pi_\theta\gets\pi_0$, $\piref\gets\pi_0$, and $u\gets0$.\\
\algline \textbf{while} $u<U$ \textbf{do}\\
\algline \>Synchronize rollout parameters with $\pi_\theta$; freeze the resulting $\pib$.\\
\algline \>Initialize the ordered collection $\mathcal C\gets\varnothing$.\\
\algline \>\textbf{while} $|\mathcal C|<N G_{\mathrm{mb}}$ \textbf{do}\\
\algline \>\>Take the next prompt batch from the shuffled dataloader of $\mathcal D$.\\
\algline \>\>Sample $m$ responses per prompt under $\pib$.\\
\algline \>\>Evaluate $r_i=r(x,a_i)$; append complete groups with $0<\sum_i r_i<m$ to $\mathcal C$.\\
\algline \>\textbf{end while}\\
\algline \>Keep the first $N G_{\mathrm{mb}}$ groups; partition into $N$ disjoint minibatches.\\
\algline \>\textbf{for each} minibatch of $G_{\mathrm{mb}}$ groups \textbf{do}\\
\algline \>\>\textbf{if} $u=U$ \textbf{then return} $\pi_\theta$.\\
\algline \>\>For every response, compute masked sequence log probabilities:\\
\algline \>\>\>$s_{g,i}\gets\sum_t M_{g,i,t}\log\pi_\theta(a^g_{i,t}\mid x_g,a^g_{i,<t})$;\\
\algline \>\>\>$s^{\mathrm{ref}}_{g,i}\gets\sum_t M_{g,i,t}\log\piref(a^g_{i,t}\mid x_g,a^g_{i,<t})$.\\
\algline[algline:ggpo-residual] \>\>Form $R_{g,i}\gets r(x_g,a^g_i)-\beta(s_{g,i}-s^{\mathrm{ref}}_{g,i})$.\\
\algline[algline:ggpo-group-loss] \>\>For each group, $\widehat{\mathcal J}_m^{\,g}\gets\beta\log(\frac1m\sum_i e^{R_{g,i}/\beta})-\frac1m\sum_i R_{g,i}$.\\
\algline \>\>Compute $\bar\ell\gets(G_{\mathrm{mb}}m)^{-1}\sum_{g,i,t}M_{g,i,t}$.\\
\algline[algline:ggpo-loss] \>\>Set $L\gets\sum_g\widehat{\mathcal J}_m^{\,g}/(G_{\mathrm{mb}}\beta\bar\ell\sqrt{m-1})$.\\
\algline[algline:ggpo-update] \>\>Take one AdamW update using $\nabla_\theta L$; set $u\gets u+1$.\\
\algline \>\>\textbf{if} $u\bmod\Tref=0$ \textbf{then}\\
\algline \>\>\>Copy the current learner parameters to $\piref$.\\
\algline \>\>\textbf{end if}\\
\algline \>\textbf{end for}\\
\algline \textbf{end while}\\
\algline \textbf{return} $\pi_\theta$.
\end{tabbing}
\noindent{Optional token clipping: replace the adjusted reward
in line~\ref{algline:ggpo-residual} by Eq.~\eqref{eq:integration-optional-clipping}
and the loss in lines~\ref{algline:ggpo-group-loss}--\ref{algline:ggpo-loss}
by the detached-coefficient surrogate in Eq.~\eqref{eq:clip-surrogate}.}
\end{algorithm}

Here $M_{g,i,t}$ is the valid-response-token mask. Rewards, sampled tokens,
reference probabilities, and masks are held fixed in differentiation; gradients
flow directly through the learner probabilities in the log-mean-exp loss.
The weights $q_i$ explain this gradient but need not be constructed as a separate
policy-gradient surrogate. We evaluate the log-mean-exp stably, and compute
$\bar\ell$ across the complete learner minibatch, including all devices and
microbatches.

\begingroup
\paragraph{Optional token clipping.}
The main experiments use the original update in Algorithm~\ref{alg:ggpo}.
The clipping experiment in Section~\ref{sec:token-clipping-experiment}
replaces the adjusted reward in line~\ref{algline:ggpo-residual} by
\begin{equation}\label{eq:integration-optional-clipping}
 \bar R_{g,i}=r(x_g,a_i^g)-\beta\sum_t M_{g,i,t}
 \operatorname{clip}\!\left(
 \log\frac{\pi_\theta(a^g_{i,t}\mid x_g,a^g_{i,<t})}
 {\piref(a^g_{i,t}\mid x_g,a^g_{i,<t})},-c_{\mathrm{clip}},c_{\mathrm{clip}}\right).
\end{equation}
Here $c_{\mathrm{clip}}>0$ is the token log-ratio clipping radius; the experiment uses
$c_{\mathrm{clip}}=0.05$. These adjusted rewards determine detached response coefficients
for the surrogate in Eq.~\eqref{eq:clip-surrogate}, replacing the loss
construction in lines~\ref{algline:ggpo-group-loss}--\ref{algline:ggpo-loss}.
Line~\ref{algline:ggpo-update} then updates with this surrogate.
The coefficients are clipped through their adjusted rewards, but the
surrogate is differentiated using unclipped learner log probabilities.
Consequently, saturated tokens retain their score gradients. Direct differentiation through a
clipped log-mean-exp loss would give a different update. The appendix
specifies the coefficients and normalization.
\endgroup

The default setting uses $G_{\mathrm{mb}}=32$, $m=8$, $N=8$, $\beta=0.01$, and
$\Tref=160$. Appendices~\ref{app:data-evaluation} and~\ref{app:implementation}
specify the refill budget, masking, EOS, padding, and truncation details.
Fixed-pool offline training replaces behavior collection with successive
minibatches from a precollected pool; each is still consumed once, and reference
refresh follows the same optimizer-update schedule.

\subsection{\texorpdfstring{Best-of-$K$}{Best-of-K} reward and the finite-group loss}
\label{sec:group-bon-motivation}
For each prompt, best-of-$K$ sampling draws $K$ independent responses from
a policy $\pi$ and returns the one with the highest verifier reward. Its expected
reward, averaged over the prompt distribution $\rho$, is
\begin{equation}
 J_K(\pi):=\E_{X\sim\rho}\E_{A_{1:K}\sim\pi(\cdot\mid X)^K}
             \left[\max_{i\le K}r(X,A_i)\right].
 \label{eq:best-of-k-reward}
\end{equation}
The comparison of interest is between this selection procedure and drawing
a single response from the learner. A single response from a policy $\pi$
has expected reward
$J(\pi)=\E_{X\sim\rho,A\sim\pi(\cdot\mid X)}r(X,A)$; in particular,
$J_1(\pi)=J(\pi)$. The selection budget $K$ is distinct from the
synchronization delay $N$ (Section~\ref{sec:ggpo-algorithm}).
For rewards in $[0,1]$,
Proposition~\ref{prop:gcpo-minimizer} (Appendix~\ref{app:group-bon-proof})
gives the bound
\begin{equation}
 J_K(\pib)-J(\pi_\theta)
 \le\kappa_{K,m}\{\beta(\log m+\Gamma)+C L_m(\pi_\theta)\}.
 \label{eq:main-finite-group-bon}
\end{equation}
Here $\Gamma$ bounds the spread (maximum minus minimum) of
$\log(\piref/\pib)$ across responses to a fixed prompt and vanishes when
$\piref=\pib$, while $C$ uniformly bounds $\pi_\theta/\pib$ over the
hypothesis class. For the individual candidate $\pi_\theta=\pib$, the
ratio is one; a class-wide bound may still exceed one.  The tail-comparison multiplier $\kappa_{K,m}$, defined in
Appendix~\ref{app:group-bon-proof}, satisfies
$\kappa_{K,m}\le\max\{1,K/(m-1)\}$. In particular,
$m\ge K+1$ gives $\kappa_{K,m}=1$.
Theorem~\ref{thm:group-bon} in Appendix~\ref{app:group-generalization}
converts it into an empirical bound based on raw loss evaluated on
i.i.d. unfiltered groups.

\subsection{Reference refresh guarantees}
\label{sec:reference-refresh}

A fixed reference defines a single KL-regularized Gibbs target in
Eq.~\eqref{eq:main-gibbs}, independent of group size. Reference refresh changes
this target by using the current learner as the next reference. Exact
target composition makes this effect explicit: successive refreshes increase
the effective reward tilt.

\paragraph{Refresh composes temperatures.}
Let $\pi_0$ be the initial policy and $\pi_t$ the output of reference-update round $t$. If each round exactly attains its Gibbs target relative to the previous
output, Eq.~\eqref{eq:main-gibbs} gives
\begin{equation}
 \pi_T(a\mid x)\propto\pi_0(a\mid x)e^{Tr(x,a)/\beta}.
 \label{eq:refresh-temperature}
\end{equation}
Thus $T$ exact rounds at temperature $\beta$ implement one Gibbs update
at temperature $\beta/T$. The following result extends the best-of-$K$
comparison to reference refresh when the round losses are nonzero.
\begin{proposition}[Reference refresh with imperfect rounds, informal]
\label{prop:refresh-loss-informal}
For rewards in $[0,1]$, consider $T$ reference-update rounds with group
size $m\ge2$. Assume positive policies, uniform behavior coverage of
both the round outputs and the best-of-$K$ comparator, and bounded
regularized residual range, as specified in
Proposition~\ref{prop:refresh-loss}, Appendix~\ref{app:refresh-bound}.
Let $\bar\pi_T$ be the uniform mixture of the round outputs and
$\bar L_T$ their average population raw GGPO loss. Then
\[
 J_K(\pi_0)-J(\bar\pi_T)
 \le \frac{\beta\log K}{T}
       +O\!\left(\sqrt{\beta\bar L_T}\right),
\]
where the hidden constant depends only on the coverage and
residual-range bounds and the group size.
\end{proposition}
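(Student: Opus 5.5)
The approach is a mirror-descent/online-learning argument on the log-policy scale, with per-round errors controlled by the raw loss. For round $t$, let $\pi_{t-1}$ be the reference and $\pi^*_t\propto\pi_{t-1}e^{r/\beta}$ its exact Gibbs target. Let $\sigma$ denote the best-of-$K$ comparator policy of $\pi_0$, i.e.\ the law of $\arg\max_{i\le K}r(x,A_i)$ with $A_{1:K}\sim\pi_0^K$, ties broken uniformly. Then $\E_\sigma r$ is exactly the inner expectation of $J_K(\pi_0)$. Moreover $\sigma\le K\pi_0$ pointwise, because the selected index is one of $K$ draws. Hence $\KL(\sigma\|\pi_0)\le\log K$, and this is the source of the $\beta\log K/T$ term.

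The key per-round identity is the three-point/Gibbs variational identity. For any $\sigma$ and any round,
\[
\E_\sigma r-\E_{\pi_t}r=\beta\bigl[\KL(\sigma\|\pi_{t-1})-\KL(\sigma\|\pi^*_t)\bigr]-\beta\bigl[\KL(\pi_t\|\pi_{t-1})-\KL(\pi_t\|\pi^*_t)\bigr].
\]
To obtain a telescoping sum I would replace $\KL(\sigma\|\pi^*_t)$ by $\KL(\sigma\|\pi_t)$. The identity then becomes
\[
\E_\sigma r-\E_{\pi_t}r\le\beta[\KL(\sigma\|\pi_{t-1})-\KL(\sigma\|\pi_t)]+\beta\,\E_\sigma\log(\pi^*_t/\pi_t)+\beta\KL(\pi_t\|\pi^*_t),
\]
after dropping $-\beta\KL(\pi_t\|\pi_{t-1})\le0$. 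I would check the signs carefully, but the structure is standard.

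Next, write $\log(\pi^*_t/\pi_t)=(R_t-\nu_t)/\beta$, where $R_t$ is the adjusted reward of $\pi_t$ relative to $\pi_{t-1}$ and $\nu_t$ is a normalizer. The error terms are then
\[
\E_\sigma(R_t-\nu_t)-\E_{\pi_t}(R_t-\nu_t).
\]
Uniform behavior coverage of both $\sigma$ and $\pi_t$ by the round behavior $\pib$ bounds each of these by a constant times $\E_{\pib}|R_t-\nu_t|$, with $\nu_t$ chosen as the optimal $\nu$ in Proposition~\ref{prop:population-duality}.

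The step I expect to be the main obstacle is converting the raw finite-group loss into an $L^1$ (or $L^2$) deviation of $R_t$ around a constant. I would proceed in two moves:
\begin{itemize}
\item First, pass from $H_m$ to the population loss. Proposition~\ref{prop:finite-group-target} gives only $H_m\le\Jbeh$, which points the wrong way. Instead I would use that the pairwise structure of $\widehat{\mathcal J}_m$ lower-bounds the expected within-group variance. Because bounded residual range makes $h(z)\ge c\,z^2$ on the relevant interval, with $c$ depending on that range, we get $\widehat{\mathcal J}_m\ge c'\beta^{-1}\cdot\widehat{\Var}(R)$. Taking expectations, $\Var_{\pib}(R_t)\lesssim\frac{m}{m-1}\beta H_m$.
\item Second, apply Cauchy--Schwarz: $\E_{\pib}|R_t-\E R_t|\le\sqrt{\Var}=O(\sqrt{\beta H_m})$.
\end{itemize}
The same quadratic comparison bounds $\beta\KL(\pi_t\|\pi^*_t)$ by $O(\beta H_m)$. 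That term is lower order than $\sqrt{\beta H_m}$ once the loss is bounded, and in any case is dominated by the hidden constant.

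Finally, sum over $t=1,\dots,T$. The KL terms telescope to at most $\beta\KL(\sigma\|\pi_0)\le\beta\log K$, since $\KL(\sigma\|\pi_T)\ge0$. Dividing by $T$ and using linearity of $J$ in the mixture, $J(\bar\pi_T)=\frac1T\sum_tJ(\pi_t)$, gives
\[
J_K(\pi_0)-J(\bar\pi_T)\le\frac{\beta\log K}{T}+\frac{C}{T}\sum_t\sqrt{\beta H_m^{(t)}}.
\]
Jensen's inequality (concavity of the square root), applied jointly over rounds and prompts, turns the last term into $O(\sqrt{\beta\bar L_T})$. The prompt expectation of $J_K$ passes through because $\sigma$ is defined promptwise.
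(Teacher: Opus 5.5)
Your argument is correct and is essentially the paper's proof. Because $\beta\log(\pi_t^*/\pi_t)=R_t-\beta\log Z_t$, the constant cancels, and your per-round relation is exactly the paper's identity $J(\sigma)-J(\pi_t)=\beta[\mathcal K(\sigma,\pi_{t-1})-\mathcal K(\sigma,\pi_t)-\mathcal K(\pi_t,\pi_{t-1})]+\E_X[\E_{\sigma}R_t-\E_{\pi_t}R_t]$. From there the paper uses the same ingredients you do: the finite-group bound $H_m\ge(1-1/m)e^{-D}\Var_{\pib}(R_t)/(2\beta)$, coverage with Cauchy--Schwarz, telescoping to $\beta\log K$, and Jensen.

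Three small corrections, none of which breaks the argument:

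\begin{itemize}
\item Your separate bound on $\beta\KL(\pi_t\|\pi_t^*)$ is redundant. That term is already the $-\E_{\pi_t}(R_t-\nu_t)$ part of the error. In any case it is $O(H_m)$, not $O(\beta H_m)$.
\item $\nu_t$ is not a free parameter you get to ``choose''; it is fixed at $\beta\log Z_t$. What lets you recenter at $\E_{\pib}R_t$ is that the error is a difference of two expectations, so it is invariant under shifting $R_t$ by a constant.
\item Your triangle-inequality use of coverage gives a factor $2C$, where the paper's Cauchy--Schwarz on $\sigma/\pib-\pi_t/\pib$ gives $\sqrt{2C}$. This does not matter for the $O(\cdot)$ statement.
\end{itemize}
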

The mixture $\bar\pi_T$ selects a round uniformly and samples one response
from its output policy. The loss $\bar L_T$ averages the population group
loss in Eq.~\eqref{eq:finite-group-population} across rounds, using behavior
$\pibt{t}$ and reference $\pi_{t-1}$ in round $t$. Here $T$ counts reference
updates, while $K$ is the number of initial-policy responses used by the
best-of-$K$ comparator. More refresh rounds reduce the initial-reference
penalty $\beta\log K/T$, while accurate fitting controls the remaining error.
The regularized residual is reward minus $\beta$ times the log ratio of
the round output to its reference. Its range is the spread of these values
across responses to a fixed prompt; bounding that range and the coverage
ratios controls the constant hidden in the loss term.
Appendix~\ref{app:refresh-bound} gives the precise statement and proof.

The distinction between the mixture and the final policy matters for
deployment. At zero loss, exact composition gives the same
$\beta\log K/T$ bound for the final output $\pi_T$. For imperfect updates,
the next section gives a final-policy guarantee under binary rewards and
aggregate tokenwise mismatch control.

\begingroup
\section{Final-policy reward under reference refresh}
\label{sec:integration-convergence}
Reference refresh changes the Gibbs target as training progresses. With binary
rewards, each exact update multiplies the odds of a correct response by
$e^{1/\beta}$. Thus a correct response that already has positive probability under
the initial policy can receive progressively more probability over successive
rounds. In practice, each round only approximates its Gibbs target, so the
question is whether these fitting errors allow the improvement to continue.
The result below connects the accuracy of individual updates to the final
policy's single-response reward.

This requires relating two distributions: GGPO's loss is measured on responses
from the behavior policy, while reward is evaluated under the returned learner.
The bounds in Eqs.~\eqref{eq:main-population-coverage}
and~\eqref{eq:main-finite-group-bon} use worst-case sequence coverage
to make this connection.
Here we use moments of the learner-to-behavior response likelihood ratio
to relate the two distributions. Small conditional token log ratios can
keep these moments moderate even when the worst-case sequence ratio is
large. The guarantee accounts for imperfect fitting and finite data.

\paragraph{Analyzed procedure.}
Start from $\pi_0$. In round $t$, freeze reference $\pi_{t-1}$, behavior
$\pibt{t}$, and candidate class $\Pi_t$ before sampling. Draw $n$ fresh,
independent prompts from $\rho$. With group size $m\ge2$, sample $m$
independent responses under $\pibt{t}$ for each prompt draw, without filtering.
Let $\widehat L_t(\pi)$ be the average raw GGPO loss in
Eq.~\eqref{eq:empirical_group_loss} over these $n$ groups. Fit this loss
over $\Pi_t$, assuming the returned policy $\pi_t\in\Pi_t$ satisfies
\[
 \widehat L_t(\pi_t)-\inf_{\pi\in\Pi_t}\widehat L_t(\pi)\le\xi.
\]
Here $\xi\ge0$ bounds the achieved empirical optimization error in each
round, as specified in Eq.~\eqref{eq:integration-error-assumptions}.
Use $\pi_t$ as the next reference. Data may be reused within a round.
Let $T$ denote the total number of reference rounds.

\begin{theorem}[Final-policy reward guarantee, informal]
\label{thm:integration-convergence-informal}
Consider the reference-update procedure above with binary rewards and
responses of at most $H$ tokens. Assume common positive support, a
sufficiently large temperature $\beta$ of order $H$, and small
conditional token log ratios relative to each reference.
Let $\pi_\star$ be a comparator whose success probability at each prompt
is at most $C_\star$ times that of the initial policy $\pi_0$. Assume a
uniform bound on a moment of order at least two of the learner-to-behavior
response ratio, or on the ratio itself. Assume the representation,
achieved optimization accuracy, and loss-cover conditions of
Theorem~\ref{thm:integration-convergence}.

For any target error $0<\varepsilon<1$ and failure probability $0<\delta<1$,
suitable numbers of reference rounds and fresh prompt draws give, with
probability at least $1-\delta$ over the sampled prompts and responses,
\[
 J(\pi_T)\ge J(\pi_\star)-\varepsilon-\mathcal E_{\rm fit},
 \qquad J(\pi)=\E_{X\sim\rho,y\sim\pi(\cdot\mid X)}r(X,y).
\]
Here $\mathcal E_{\rm fit}$ is the fitting residual in
Eq.~\eqref{eq:integration-final} or~\eqref{eq:integration-uniform-reward},
according to the coverage condition. For fixed moment order and coverage
coefficient, sufficient prompt and response counts are polynomial in $H$,
$C_\star$, and $1/\varepsilon$, with logarithmic dependence on the
loss-cover size and $1/\delta$.
\end{theorem}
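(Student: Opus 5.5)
The argument tracks a potential across reference rounds, in the style of mirror descent with the KL-regularized step size, and reduces each round's error to its finite-group loss. Write $p_t(x)=\Pr_{y\sim\pi_t(\cdot\mid x)}[r(x,y)=1]$ and consider the potential $\Phi_t=\E_{X\sim\rho}\KL(\pi_\star(\cdot\mid X)\|\pi_t(\cdot\mid X))$.

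\textbf{One exact round.} If round $t$ were exact, then $\pi_t=\pi^*_t\propto\pi_{t-1}e^{r/\beta}$. The three-point identity for the Gibbs update then gives
\[
\beta\{\Phi_{t-1}-\Phi_t\}=J(\pi_\star)-\beta\E_X\log\E_{\pi_{t-1}}e^{r/\beta}.
\]
For binary rewards, the last term is at most $J(\pi_t)$ plus a term of order $1/\beta$ times a variance. I would telescope this over $T$ rounds and use the comparator condition to bound the initial potential. The condition $p_\star\le C_\star p_0$ lets us replace $\pi_\star$ by its conditional-on-success version: if $\pi_\star$ places its success mass according to $\pi_0$ restricted to correct responses, then $\Phi_0\lesssim\log C_\star$ on the success part. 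This yields the exact-case rate $J(\bar\pi_T)\ge J(\pi_\star)-\beta\log C_\star/T-O(1/\beta)$.

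\textbf{Inexact rounds.} To handle inexactness, I would add to the telescoping identity the error term
\[
\beta\E_X\E_{\pi_\star}\log(\pi^*_t/\pi_t),
\]
and bound it by $\beta\KL(\pi_t\|\pi^*_t)$-type quantities. Proposition~\ref{prop:population-duality} identifies $\beta\KL(\pi_t\|\pi_t^*)$ as the regularized gap, which is controlled by the population loss through coverage. Rather than the worst-case $C_x$, I would use the moment-transfer Lemma~\ref{lem:integration-moment-transfer}. Hölder's inequality with the learner-to-behavior ratio moment of order $p\ge2$ converts the finite-group loss $H_m$ into the proximal gap with a power $1-1/p$ loss. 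The small token log ratios and $\beta\gtrsim H$ ensure that the sequence log ratio $\log(\pi_t/\pi_{t-1})$ is $O(H c)$. As a result, the adjusted-reward spread is $O(\beta)$, the $h$-function in the duality is locally quadratic, and $H_m$ and $\Jbeh$ agree up to constants (Proposition~\ref{prop:finite-group-target}).

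\textbf{Final policy and finite data.} The telescoping bounds the average iterate, so the mixture-to-final step is where monotonicity is needed. With binary rewards, each near-exact update multiplies success odds by about $e^{1/\beta}$. A fitting error of size $\eta$ can decrease $J$ by at most $O(\sqrt{\eta})$ per round, via Pinsker's inequality on $\KL(\pi_t\|\pi^*_t)$. Hence $J(\pi_T)\ge\max_t J(\pi_t)-\sum_t O(\sqrt{\eta_t})$, or better, $J(\pi_T)\ge J(\bar\pi_T)$ minus the accumulated fitting residual. That accumulated residual is $\mathcal E_{\rm fit}$. For finite data, I would apply Hoeffding's inequality to the bounded raw group loss in each round, with a union bound over the loss cover and over the $T$ rounds at level $\delta/T$. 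Together with the optimization accuracy $\xi$ and the representation condition, this gives population loss $\le\inf_{\Pi_t}L+\xi+O(\sqrt{\log(N_{\rm cov}T/\delta)/n})$. Finally I would choose $T\asymp\beta\log C_\star/\varepsilon$ with $\beta\asymp H$, and $n$ so that the statistical term is at most $\varepsilon$ after the moment transfer. Both choices are polynomial in $H$, $C_\star$, and $1/\varepsilon$.

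\textbf{Main obstacle.} The hardest step is the mixture-to-final-policy transfer, i.e., showing that fitting errors do not compound multiplicatively across $T$ rounds. This is necessary because coverage must hold relative to a changing reference. The first thing to try is to use the local token-ratio assumption to keep every $\pi_t$ within a bounded multiplicative band of $\pi_{t-1}$. That makes per-round drops additive in $J$, and each round's $\KL$ error can then be charged against the potential decrease.
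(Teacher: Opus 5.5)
There is a genuine gap, and it sits in the inexact telescoping rather than in the mixture-to-final step. Your error term $\beta\,\E_X\E_{\pi_\star}\log(\pi^*_t/\pi_t)$ is an expectation under the \emph{comparator}. Everything the hypotheses let you control, via Proposition~\ref{prop:population-duality} and Lemma~\ref{lem:integration-moment-transfer}, is the learner-side gap $\Delta_t=\beta\,\mathcal K(\pi_t,p^*_{\pi_{t-1}})$. Passing from one to the other requires a density-ratio bound of $\pi_\star$ (or of your success-conditional $\tilde\pi_\star$) against $\pi_t$, and none is available. $C_\star$ compares only success probabilities with $\pi_0$, $\mathcal C_p$ compares learner with behavior, and the paper notes that $\pi_\star$ need not obey the local constraint. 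The only ratio bound local control yields is $\tilde\pi_\star/\pi_t\le C_\star e^{tH/\beta}$. At the $T\gtrsim\beta\log(C_\star/\varepsilon)$ needed even in the exact case, this is of order $(C_\star/\varepsilon)^H$, exponential in $H$, so the polynomial sample claim would be lost. Projecting to success indicators does not fix this by itself: the comparator weight $\tilde p\le C_\star p_0$ is tied to $p_0$, not to $p_t$, and fitting errors can push $p_t$ below $p_0$. The missing ingredient is what the paper imports as Lemma~\ref{lem:integration-angular}. It is a deterministic bound $J(\pi_\star)-J(\pi_T)\le s+\frac{2\pi^2C_\star}{\beta}\bigl(\sum_t\sqrt{\Delta_t}\bigr)^2$ for $T\ge\lceil\beta\log(3C_\star/s)\rceil$, using only the learner-side gaps and initial success coverage. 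It applies to the final policy directly and needs no comparator density ratio. The paper's proof is then three steps. First, the Bernstein oracle inequality (Lemma~\ref{lem:integration-oracle}) gives $L_t(\pi_t)\le\varepsilon_A+24\beta z/n$ simultaneously in all rounds. Second, Lemma~\ref{lem:integration-moment-transfer} converts this to a bound on $\Delta_t$. Third, Eq.~\eqref{eq:integration-angular-uniform} with $s=\varepsilon/2$ and $T=\lceil\beta\log(6C_\star/\varepsilon)\rceil$ finishes.

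The step you call the main obstacle is actually easy, and your own sketch essentially handles it. Exact binary tilts never lower success probability, so Pinsker gives $J(\pi_t)\ge J(\pi_{t-1})-\sqrt{\Delta_t/(2\beta)}$. The final policy therefore loses at most $\sum_t\sqrt{\Delta_t/(2\beta)}$ relative to the mixture. Even a completed version of your route would give a square-root-type residual and a $T$ polynomial rather than logarithmic in $1/\varepsilon$. That is not the $\mathcal E_{\rm fit}$ of Eq.~\eqref{eq:integration-final} that the statement names. Two smaller corrections. First, Proposition~\ref{prop:finite-group-target} gives only $H_m\le\Jbeh$; the reverse comparison comes from the variance sandwich in Eq.~\eqref{eq:integration-curvature}, under the score-range bound $1+2H\le\beta$. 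Second, for $\Phi_0\le\log C_\star$, give the modified comparator success probability $\max\{p_\star,p_0\}$, with $\pi_0$-conditional mass within each reward class. Then the failure part of the binary KL is nonpositive; with $p_\star$ itself it can be large when $p_0$ is near one.
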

Theorem~\ref{thm:integration-convergence} in
Appendix~\ref{sec:integration-formal-setup} gives the full finite-moment
statement, including the assumptions, sample budget, and fitting residual.
Remark~\ref{rem:integration-bounded-coverage} gives the bounded-coverage case.

\paragraph{Interpretation of the theorem.}
The guarantee concerns the expected reward of a single response from the
final policy $\pi_T$. This expected reward is at least the comparator's
reward minus $\varepsilon+\mathcal E_{\rm fit}$. The coverage constant $C_\star$ describes how much improvement is
requested relative to initialization. For example, oracle best-of-$K$ sampling
from $\pi_0$ satisfies the coverage condition with $C_\star=K$, because the
probability of finding a correct response in $K$ draws is at most $K$ times
the single-draw success probability. Thus the expected reward of one response from the trained
policy can approach that of selecting the best of $K$ responses from
the initial policy. The training cost is accounted for separately.

The target error $\varepsilon$ is controlled by the number of reference
rounds and fresh prompt draws. The residual $\mathcal E_{\rm fit}$ captures
representation error, achieved empirical optimization error, and loss-cover
approximation error, whose combined size is $\varepsilon_A$ in
Eq.~\eqref{eq:integration-approximation}. Increasing the sample size alone
does not remove persistent fitting error. For fixed temperature and
coverage, the bound multiplies fitting error by a factor quadratic in $T$.
Longer runs therefore require more accurate fits to keep the same reward bound. Under bounded coverage, the residual is linear in $\varepsilon_A$. At finite order $p$, it also includes a term
proportional to $\varepsilon_A^{1-1/p}$, whose coefficient decreases as
the learner-to-behavior coverage moment approaches one. The residual
vanishes when the three fitting errors are zero.

The sample bound counts $n$ independent prompt draws per round, each with
$m$ responses, giving $Tn$ prompt groups and $mTn$ scored responses.
Prompts may repeat under independent sampling, but additional groups on
a fixed prompt pool do not supply fresh prompt draws.
Loss-class complexity enters because the learner is chosen using the data:
the loss cover approximates all candidate policies' group-loss functions
uniformly, and its size measures how many functions are needed at the
chosen accuracy. Under these sampling and policy constraints, the guarantee
applies to any solver attaining the stated empirical accuracy; it does not
establish convergence of AdamW.
Under the token conditions in Appendix~\ref{sec:integration-formal-setup},
the moment coefficient is
bounded by $c_\nu=\exp((1+\nu^{-1})V_{\log})$ for $0<\nu\le1$, where
$V_{\log}$ bounds the sum of conditional token log-ratio second moments
along each response path. The statistical group budget has accuracy dependence
$1/\varepsilon+c_\nu/\varepsilon^{1+\nu}$, apart from round and problem
factors. Bounded full-response coverage gives the direct $1/\varepsilon$
dependence. A smaller $\nu$ reduces the
accuracy exponent but can increase $c_\nu$ and requires tighter token
ratios. The benefit therefore depends on both the token mismatch and the
chosen moment order.

\paragraph{Behavior lag and practical clipping.}
If behavior is a stored policy at most $d$ reference rounds old, local
reference control gives $|\log(\pi_{t,h}/\pi_{\mathrm{beh},t,h})|\le d/\beta$
and allows $V_{\log}=Hd^2/\beta^2$. This supplies the stated log-moment
coefficient when $d/\beta\le\min\{\log2,3\nu/2\}$. The same lag bound
also gives direct sequence coverage $\mathcal C_\infty\le e^{Hd/\beta}$,
which can be used in the bounded-coverage branch without this token cap.
In particular, sampling from the current reference has $d=1$ and
$\mathcal C_\infty\le e^{H/\beta}<\sqrt e$ under the stated temperature
condition, so its group budget has linear accuracy dependence apart from
round and problem factors.
The lag counts reference rounds rather than the experimental synchronization
delay $N$. Appendix~\ref{sec:integration-local-relaxation} gives the relaxed
constraint and an alternative bound from token spans.

The theorem's small token log ratios keep every candidate policy close
to its frozen reference at each response prefix. The precise constraint
in Eq.~\eqref{eq:integration-local} bounds the actual conditional
probabilities and permits the exact Gibbs target, as shown in
Appendix~\ref{sec:integration-formal-setup}.
This motivates practical token clipping, which instead limits log-ratio
contributions to response coefficients, with some bias, and does not
enforce the same policy constraint. Section~\ref{sec:token-clipping-experiment}
evaluates this stabilization, and Appendix~\ref{app:token-clipping}
specifies its implementation. The theorem analyzes constrained fitting
of the raw loss.

\endgroup
\section{Experiments}
\label{sec:experiments}

We compare GRPO, AGRO, and GGPO in default training ($N=8$), including
pass@$K$ and sampling cost, then examine reference refresh, controlled
policy staleness ($N=8$ to $128$), offline training on fixed data, and
optional token clipping.
These settings examine how the methods respond to behavior--learner mismatch.
Training--inference mismatch remains present even immediately after
the rollout parameters are synchronized with the learner.
Appendices~\ref{app:additional-experiments}
and~\ref{app:experiments} give additional results and implementation details.

\subsection{Experimental setup}
\label{sec:setup}

\textbf{Models, data, and evaluation.}
We evaluate mathematical reasoning with \qwenone{} and \qwenfour{}
\citep{yang2025qwen3}. Training uses ALP-Math, which merges a
difficulty-filtered OpenR1-derived corpus \citeyearpar{weqweasdasOpenR1Scores} with the math split of Guru-RL
\citep{cheng2025guru}. Following standard
RLVR protocols \citep{deepseekai2025deepseekr1}, rewards are binary and verified
by symbolic equivalence with Math-Verify \citep{kydlicek2025mathverify}. Under
the inherited training configuration described below, progress is monitored on MATH-500
\citep{hendrycks2021math,lightman2024verify}. Monitoring results were not used
for hyperparameter or checkpoint selection. For the final comparison, we
report mean@32, the
mean accuracy over 32 newly sampled responses per problem, on MATH-500,
\href{https://huggingface.co/datasets/math-ai/aime24}{AIME 2024},
\href{https://huggingface.co/datasets/math-ai/aime25}{AIME 2025},
OlympiadBench \citep{he2024olympiadbench}, and Minerva Math
\citep{lewkowycz2022solving}. Mean@32 estimates single-response mean reward
under the evaluation rollout policy. Training curves also report MATH-500
best@32, the fraction of problems with at least one correct response among
the 32 samples, equivalent to pass@32 for binary rewards
(Appendix~\ref{app:data-evaluation}).
The problem-weighted average over the five benchmarks gives
every evaluated problem equal weight (counts in
Appendix~\ref{app:data-evaluation}).

\textbf{Seeds and checkpoint reporting.}
All five-benchmark tables report final checkpoints; default and controlled
staleness runs use 8{,}000 optimizer updates. Each checkpoint is evaluated
once with 32 responses per problem. We report means and sample standard
deviations over training seeds 42, 1234, and 2027 for the \qwenone{} default
comparison (Table~\ref{tab:main-results}, Figure~\ref{fig:main-dynamics}),
offline accuracy (Table~\ref{tab:offline-results}, Figure~\ref{fig:offline},
left), and $N=128$ staleness, including GRPO+TIS
(Table~\ref{tab:lag-results-full}). Curves summarize each logged or evaluated
update. Other conditions and offline diagnostics use seed 42. When a
comparison reuses default results, those results retain their three-seed
means, as noted in table captions.

\textbf{Baselines.}
AGRO \citep{tang2025agro} is our direct objective-level comparator: it
shares GGPO's KL-adjusted reward, Gibbs target, and independently controlled
reference schedule. OAPL \citep{ritter2026offpolicy} ties its KL anchor and
value target to the rollout policy, coupling target refresh to behavior
synchronization. Matching our independent reference schedule would therefore
require modifying its target estimator. We discuss OAPL as related work and
focus the controlled objective comparison on AGRO, with GRPO
\citep{shao2024deepseekmath} and GRPO+TIS providing policy-gradient baselines.
In the default runs, GRPO uses token-level learner-to-behavior PPO
ratios with the log probabilities recorded during rollout as the denominator
and no additional off-policy correction.
We include GRPO+TIS \citep{zhong2026diagnosing} under the largest
synchronization delay, $N=128$, and in fixed-data offline training to evaluate
importance-corrected policy-gradient updates in these settings. It uses
token-level truncated importance sampling with a cap of 2
(Appendix~\ref{sec:grpo-tis}).

\textbf{Hyperparameters.}
By default, we set $\beta=0.01$ for all methods, following the offline experiments of
AGRO \citep{tang2025agro}, and draw $m=8$ responses per prompt.
Appendix~\ref{sec:group-weight-ablation} also evaluates $\beta=0.001$, used
in AGRO's on-policy experiments. The methods
share the same batch geometry and prompt order, and reuse the learning rate
and optimizer settings of our GRPO baseline. They also
use the same dynamic difficulty-filtering rule \citep{yu2025dapo,bae2026filtering}.
By default, our GRPO implementation keeps its initial reference fixed, while
AGRO and GGPO refresh the reference every $\Tref=160$
optimizer updates, matching our checkpointing cadence
(Section~\ref{sec:ggpo-algorithm}).
Appendices~\ref{sec:reference-regularization} and~\ref{sec:group-weight-ablation}
compare reference cadences and group weights. The complete reference, batch, decoding, refill, and
optimizer settings are given in Appendix~\ref{app:experiments}.

\textbf{Training budget.}
Method comparisons match optimizer update counts and retained responses per
update. Mixed-reward filtering and refill can nevertheless require different
numbers of generated responses across methods. Figure~\ref{fig:sampling-budget}
reports this sampling demand. Response counts do not equalize generated
tokens or GPU-hours, since those costs also depend on response length and
execution costs.

\subsection{Default training setting}
\label{sec:default-training}

Under default training ($N=8$; Section~\ref{sec:ggpo-algorithm}), all
three methods' token-mean K3 stays on the order of $10^{-3}$ throughout
training (Figure~\ref{fig:main-dynamics}, panel 1).
Table~\ref{tab:main-results} compares their final checkpoints at both model sizes.

\begin{table}[t]
  \centering
  \footnotesize
  \setlength{\tabcolsep}{3pt}
  \caption{\textbf{Comparison under the default training setting ($N=8$).}
  Final-checkpoint mean@32 accuracy after 8{,}000 optimizer updates on five mathematical-reasoning
  benchmarks (\%). The final column is the problem-weighted average over the
  five benchmarks, which gives every evaluated problem equal weight. For
  \qwenone{}, values are the mean $\pm$ sample standard deviation across
  training seeds 42, 1234, and 2027. For Avg., these statistics are computed
  across the per-seed problem-weighted averages. \qwenfour{} uses seed 42.
  Bold marks the best value in each model-size block.}
  \label{tab:main-results}
  \resizebox{\textwidth}{!}{
  \begin{tabular}{@{}lcccccc@{}}
    \toprule
    Model and method & MATH-500 & AIME 2024 & AIME 2025 & OlympiadBench & Minerva Math & Avg. \\
    \midrule
    \qwenone{} + GRPO & 68.86 $\pm$ 0.32 & 9.03 $\pm$ 0.90 & 5.14 $\pm$ 0.63 & 32.13 $\pm$ 0.27 & 28.33 $\pm$ 0.40 & 42.64 $\pm$ 0.18 \\
    \quad + AGRO & 70.68 $\pm$ 0.33 & 9.27 $\pm$ 1.15 & 6.56 $\pm$ 0.81 & 35.73 $\pm$ 0.30 & 29.30 $\pm$ 0.44 & 45.07 $\pm$ 0.19 \\
    \quad + GGPO & \textbf{72.43 $\pm$ 0.29} & \textbf{11.11 $\pm$ 0.89} & \textbf{8.30 $\pm$ 0.68} & \textbf{37.11 $\pm$ 0.25} & \textbf{30.64 $\pm$ 0.36} & \textbf{46.58 $\pm$ 0.16} \\
    \midrule
    \qwenfour{} + GRPO & 80.71 & \textbf{16.88} & 11.88 & 46.63 & 40.29 & 55.51 \\
    \quad + AGRO & 80.55 & \textbf{16.88} & 12.40 & 47.21 & 40.71 & 55.81 \\
    \quad + GGPO & \textbf{81.83} & \textbf{16.88} & \textbf{14.38} & \textbf{48.33} & \textbf{41.85} & \textbf{56.98} \\
    \bottomrule
  \end{tabular}
  }
\end{table}

\paragraph{Results.}
GGPO achieves the highest problem-weighted average at both model sizes
(Table~\ref{tab:main-results}). On \qwenone{}, it exceeds AGRO by 1.5 points
and GRPO by 3.9 points. The single-seed \qwenfour{} comparison shows the same
ordering of problem-weighted accuracy at the larger model size.
Figure~\ref{fig:main-dynamics} shows that GGPO initially improves more slowly
on MATH-500 mean@32, but overtakes both baselines later in training. GGPO
maintains higher policy entropy, while its best@32 continues to improve
later in training.
Appendix~\ref{sec:difficulty} reports performance stratified by the base
model's per-problem success rate.

\begin{figure}[htb]
  \centering
  \includegraphics[width=\textwidth]{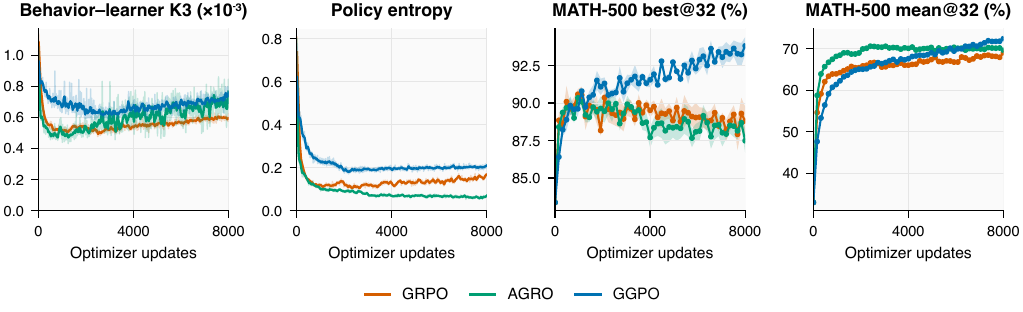}
  \captionsetup{font=small}
  \caption{\textbf{Training dynamics under the default training setting
  ($N=8$) on \qwenone{}.} From left to right: the token-mean K3
  diagnostic of behavior--learner mismatch
  \citep{schulman2020klapprox} ($\times10^{-3}$), policy entropy, MATH-500
  best@32 (\%), and MATH-500 mean@32 (\%), each against optimizer updates
  (GRPO orange, AGRO green, GGPO blue). K3 is computed on mixed-reward-filtered
  training minibatches. In the first two panels, faint lines
  are the logged series (one point per eight updates) and solid lines a
  nine-point moving average. At each reported update, curves summarize seeds
  42, 1234, and 2027 by their mean, with shaded bands spanning one sample standard
  deviation across seeds; MATH-500 is evaluated every 160 optimizer updates.}
  \label{fig:main-dynamics}
\end{figure}

\begin{samepage}
\paragraph{Pass@$K$ and best-of-$K$ comparison.}
On \qwenone{}, GGPO leads in problem-weighted five-benchmark pass@$K$
for every evaluated $K\in\{1,4,8,16,32\}$ with training seed 42
(Table~\ref{tab:passk-results}).
At $K=32$, it reaches $73.90\%$, compared with $69.39\%$ for the strongest
baseline and $62.68\%$ for the base model.
Its seed-42 pass@1 ($46.60\%$) exceeds the base model's pass@4 ($38.57\%$)
and approaches its pass@8 ($47.43\%$). Thus, in this comparison, a single
GGPO response succeeds more often than best-of-four sampling from the base
model, illustrating the comparison in Section~\ref{sec:integration-convergence}.
Pass@1 equals mean@32 under our estimator; the $46.60\%$ value uses seed 42,
whereas Table~\ref{tab:main-results} reports the three-seed mean of $46.58\%$.
Appendix~\ref{sec:passk} gives the full pass@$K$ results.
\par
\end{samepage}

\paragraph{Sampling cost.}
GGPO reaches higher final MATH-500 mean@32 with fewer generated training
responses than either baseline in the reported default-setting comparison
(Figure~\ref{fig:sampling-budget}).
This complements the comparison at matched optimizer updates and retained
responses; token and computation costs also depend on response lengths
and execution costs (Appendix~\ref{sec:sampling-budget}).

\begin{figure}[!htb]
  \centering
  \includegraphics[width=0.80\textwidth]{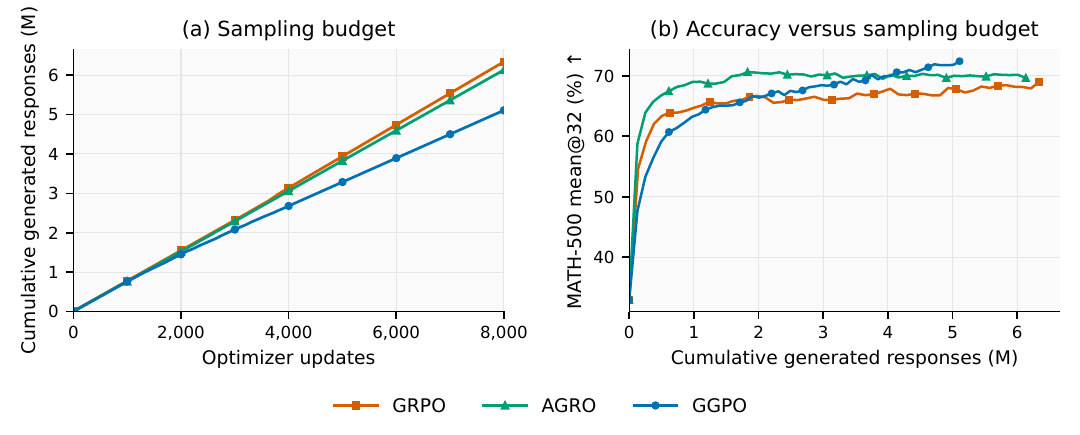}
  \caption{\textbf{Sampling budget under the default training setting on
  \qwenone{}.} Left: cumulative generated training responses, in millions,
  against optimizer updates. Right: MATH-500 mean@32 accuracy against the same
  cumulative response count. Each curve ends at 8{,}000 optimizer updates
  (GRPO orange squares, AGRO green triangles, GGPO blue circles).
  Counts include responses discarded by mixed-reward filtering and
  generated during refill. GGPO finishes with the highest accuracy and
  the lowest response count.}
  \label{fig:sampling-budget}
\end{figure}

\subsection{Reference refresh}
\label{sec:reference-refresh-main}

The default comparison uses a fixed reference for GRPO and refresh every
$\Tref=160$ updates for AGRO and GGPO.
Table~\ref{tab:reference-refresh-summary} compares all three methods across
these reference schedules at $\beta=0.01$.
GGPO's problem-weighted average is $32.96\%$ with a fixed reference
(seed 42), compared with $46.58\%$ with refresh every 160 updates
(three-seed mean); refresh every 320 updates reaches $44.96\%$ (seed 42). AGRO also benefits from refresh, while refreshing GRPO
does not close the accuracy gap.
GGPO's weaker fixed-reference result shows that reference refresh is an
important part of the complete method.
Appendix~\ref{sec:reference-regularization} gives per-benchmark results,
the $\beta=0.1$ conditions, and reference-drift diagnostics.

\begin{table}[htb]
  \centering
  \small
  \setlength{\tabcolsep}{12pt}
  \caption{\textbf{Reference refresh on \qwenone{}.}
  Problem-weighted five-benchmark mean@32 (\%) after 8{,}000 updates,
  with $\beta=0.01$. A dagger marks default settings reported as means
  over seeds 42, 1234, and 2027; other cells use seed 42.
  Table~\ref{tab:reference-refresh} gives the full results.}
  \label{tab:reference-refresh-summary}
  \begin{tabular}{@{}lccc@{}}
    \toprule
    Method & Fixed reference & $\Tref=160$ & $\Tref=320$ \\
    \midrule
    GRPO & $42.64^\dagger$ & 42.40 & 42.48 \\
    AGRO & 41.48 & $45.07^\dagger$ & 43.86 \\
    GGPO & 32.96 & $46.58^\dagger$ & 44.96 \\
    \bottomrule
  \end{tabular}
\end{table}

\subsection{Controlled policy staleness}
\label{sec:async}

With synchronization delay $N$, each frozen behavior collection of $32N$
prompts is split into $N$ disjoint 32-prompt minibatches, each used for one
optimizer update before the behavior policy is synchronized. Sample age is
therefore $0,\ldots,N-1$
optimizer updates. Training--inference mismatch remains present as a fixed background.
Following controlled asynchronous RLHF evaluations
\citep{noukhovitch2025asynchronous,huang2026stable}, we vary $N$ over the
staleness ladder $N\in\{8,16,32,64,128\}$ with method-specific
hyperparameters fixed; the $N=8$
condition reuses the \qwenone{} default-setting run. This varies the age
of the sampled responses without changing the method-specific hyperparameters.
Reference handling is
unchanged from the default setting at every delay;
Appendix~\ref{app:implementation} states how a refresh boundary inside a
collection is handled.

Figure~\ref{fig:async} summarizes performance with the problem-weighted
average of Section~\ref{sec:setup}; Appendix~\ref{sec:staleness-details} gives
the per-benchmark table and the training-time diagnostics against $N$.

\begin{figure}[htb]
  \centering
  \includegraphics[width=0.65\linewidth]{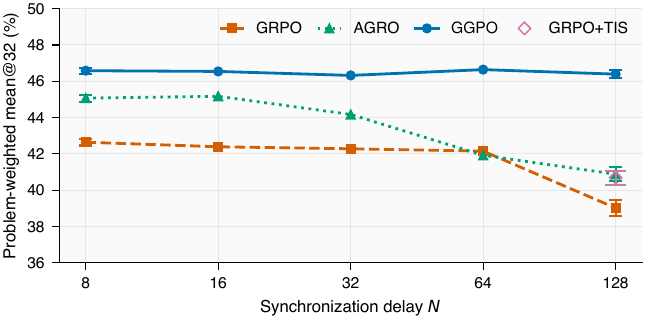}
  \captionsetup{font=small}
  \caption{\textbf{Controlled policy staleness on \qwenone{}.} Problem-weighted
  final-checkpoint mean@32 accuracy over the five benchmarks after 8{,}000
  optimizer updates, against synchronization delay $N$ (equally spaced
  positions; values in Table~\ref{tab:lag-results-full}). GRPO+TIS is evaluated
  at $N=128$. Results at $N=8,128$
  are three-seed means, with error bars showing $\pm1$ sample standard
  deviation; $N=16,32,64$ use seed 42.}
  \label{fig:async}
\end{figure}

\paragraph{Results.}
GGPO maintains stable performance as policy staleness increases from $N=8$
to $N=128$, with problem-weighted accuracy of $46.32\%$--$46.64\%$.
AGRO loses 4.2 points from $N=8$ to $N=128$, while
GRPO is essentially unchanged through $N=64$ before declining at $N=128$.
At the largest synchronization delay, $N=128$, token-level TIS raises GRPO's
problem-weighted accuracy from $39.02\%$ to $40.69\%$, while GGPO reaches
$46.39\%$ under the same delay
(Figure~\ref{fig:async} and Table~\ref{tab:lag-results-full}).
The training-time diagnostics show increasing behavior--learner mismatch for
GGPO from $N=8$ to $N=128$: its K3 rises and normalized effective sample size
(ESS) falls, while both diagnostics remain nearly unchanged for uncorrected GRPO
(Figure~\ref{fig:staleness-mismatch}).

GRPO's nearly constant K3 is consistent with the PPO surrogate suppressing
updates that would move learner-to-behavior ratios farther beyond its
clipping interval. This clipping changes the objective but imposes no
hard constraint on the policy ratios. GGPO uses no learner-to-behavior importance
weights (Section~\ref{sec:gcpo-gradient}); its rising K3 and stable accuracy
are consistent with learning as the learner departs from the collection's
behavior policy. AGRO also avoids learner-to-behavior importance weights,
but its accuracy declines at larger delays.
These comparisons still allow the behavior policy to catch up with the
learner every $N$ updates. The offline setting in
Section~\ref{sec:offline-main} removes this resynchronization, allowing
us to examine learning from a single fixed behavior pool.

\subsection{Offline training on fixed data}
\label{sec:offline-main}

In the offline setting, GRPO, GRPO+TIS, AGRO, and GGPO perform 400 optimizer
updates on the same fixed rollout pool within each seed without generating
new responses. This setting tests learning from fixed data as
behavior--learner mismatch accumulates without behavior resynchronization.
Appendix~\ref{app:protocols} gives the protocol and
Table~\ref{tab:offline-results} the accuracy at every evaluated checkpoint.

\begin{figure}[htb]
  \centering
  \includegraphics[width=\textwidth]{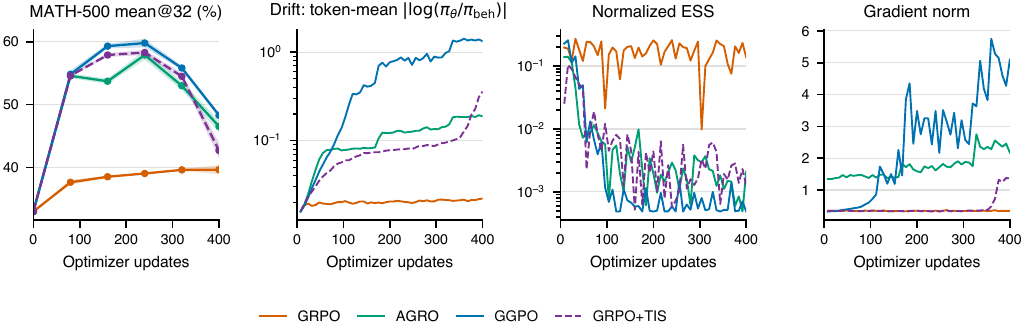}
  \captionsetup{font=small}
  \caption{\textbf{Offline training on \qwenone{}.} Within each seed, all
  methods use the same fixed rollout pool, batch geometry, prompt order, and
  400-update budget (GRPO orange, GRPO+TIS purple dashed, AGRO green,
  GGPO blue). From left: MATH-500 mean@32 (\%), with the reported mean
  accuracy and bands spanning one reported sample standard deviation
  (Table~\ref{tab:offline-results}); then, for seed 42,
  the token-mean absolute learner-to-behavior log ratio
  (log scale), the normalized ESS of the diagnostic sequence-level
  learner-to-behavior importance weights over each logged block of 2{,}048
  responses (log scale), and the actor gradient norm
  (Appendix~\ref{app:data-evaluation}). The mismatch diagnostics use
  mixed-reward-filtered training responses.}
  \label{fig:offline}
\end{figure}

\paragraph{Results.}
GGPO improves MATH-500 accuracy from $32.96\%$ to $59.79\%$ within
240 optimizer updates, leading the baselines at every evaluated checkpoint
from 160 updates onward
(Figure~\ref{fig:offline}; Table~\ref{tab:offline-results}).
GGPO, AGRO, and GRPO+TIS all peak at 240 updates and decline thereafter,
alongside increasing divergence from the fixed behavior policy.

The diagnostic panels show increasing drift and declining normalized ESS
for GRPO+TIS, AGRO, and GGPO, while uncorrected GRPO's drift stays near
the level set by the initial training--inference mismatch. For GGPO,
normalized ESS is already below $10^{-3}$ by 160 updates. This diagnostic
indicates that sequence-level importance weights on the filtered training
responses are highly concentrated at that point.

\subsection{Token clipping}
\label{sec:token-clipping-experiment}
The local reference constraint in Section~\ref{sec:integration-convergence}
motivates limiting individual token log-ratio contributions in practice.
In the default \qwenone{} setting, the optional clipped variant of
Algorithm~\ref{alg:ggpo}, with radius $c_{\mathrm{clip}}=0.05$, yields
higher early training rewards than the original update, with a smaller
gap later (Figure~\ref{fig:token-clipping},
Appendix~\ref{sec:token-clipping-results}).
This single-seed comparison measures reward on mixed-reward-filtered training
responses, so it does not establish improved held-out accuracy or verify the
theorem's uniform local reference constraint.
All other experiments use the original, unclipped update;
Appendix~\ref{app:token-clipping} gives implementation details.

\section{Conclusion}

GGPO learns from contrasts in KL-adjusted rewards among behavior-policy
responses without learner-to-behavior importance weights. Under suitable
coverage, its finite-group loss controls the gap between the learner's
single-response expected reward and the behavior policy's best-of-$K$ reward.
Reference refresh lets successive rounds build on earlier progress by
changing the Gibbs target as the learner improves.

Our experiments examine this approach on mathematical reasoning, where GGPO outperforms GRPO and AGRO under
training--inference mismatch while maintaining performance as policy
staleness grows. At $N=128$, GGPO also outperforms GRPO+TIS under the
same synchronization delay. On fixed offline data, it leads GRPO, GRPO+TIS,
and AGRO at every evaluated checkpoint from 160 updates onward.

{The theory motivates the algorithm by explaining how
small group loss and repeated reference updates improve reward, and how
tokenwise mismatch can give a more favorable dependence than sequence
coverage. The theorem provides sufficient conditions for convergence and explains
how errors in individual updates affect the final reward. Its parameter
settings need not coincide with those that work best in
practice. The experiments complement this analysis by evaluating GGPO
under practical training settings and substantial behavior mismatch.}

{Our results on mathematical reasoning with binary verifier
rewards motivate further study of richer reward signals and multi-turn
tasks. Extending the analysis to practical training choices and testing
GGPO across these settings are natural directions for future work.}

\section*{Acknowledgments}

{This work was supported in part by NSF grants 2416897, 2505932, and 2601666 and ONR grant N000142512318. Computing resources were provided by the National Center for Supercomputing Applications (NCSA) through Delta (NSF OAC-2005572) and DeltaAI (NSF OAC-2320345), as well as Illinois Computes and the NAIRR Pilot (NAIRR250157).}

Generative AI tools were used under the authors' direction for assistance with literature searches,
coding, conducting and analyzing experiments, deriving theoretical results and proofs, and polishing
the manuscript.

\bibliography{references}
\bibliographystyle{plainnat}

\appendix
\section{Proofs and extensions}
\label{app:proofs}

The proofs below explain the connections used in the main text, from the
population optimality condition to finite-group reward guarantees. We work
with a finite action space $\mathcal A$,
with policies positive on $\mathcal A$ and finite displayed expectations.
In particular, $0<Z_\beta=\E_{\piref}e^{r/\beta}<\infty$.
We suppress fixed prompts in pointwise derivations and write prompt averages
explicitly. Loss notation suppresses a fixed temperature and reference;
$H_m(\pi_\theta;\pib,x;\piref)$ specifies the reference when it varies.

\subsection{Proof of population duality and coverage}
\label{app:population-duality}
\label{app:main-population-proof}
This proves Proposition~\ref{prop:population-duality}, explaining both the
loss construction in Section~\ref{sec:theory} and the coverage condition in
its regularized-gap guarantee. The argument begins with the scalar dual,
whose residual provides a measure of departure from the Gibbs optimum.
Profiling this residual gives the contrastive loss, and coverage then
connects the behavior-policy expectation to the learner's regularized gap.

\begin{lemma}[Scalar duality and primal--dual gap]\label{lem:scalar-duality}
Fix a prompt $x$, temperature $\beta>0$, reward $r(x,a)$, and positive
reference policy $\piref(\cdot\mid x)$. Define
$V_\beta(\pi;x)=\E_{a\sim\pi}r(x,a)-\beta\KL(\pi\|\piref)$,
$V_\beta^*(x)=\max_\pi V_\beta(\pi;x)$, and
$Z_\beta=\E_{a\sim\piref}e^{r(x,a)/\beta}$.  For a positive candidate
$\pi_\theta$, define
$\Rth(x,a)=r(x,a)-\beta\log[\pi_\theta(a\mid x)/\piref(a\mid x)]$,
$h(z)=e^z-z-1$, and
\begin{equation}
\begin{aligned}
 \Phi_x(\nu)
   &:=\nu+\beta\bigl(\E_{a\sim\piref}e^{(r(x,a)-\nu)/\beta}-1\bigr),\\
 \underbrace{\max_\pi V_\beta(\pi;x)}_{\text{primal}}
   &=\beta\log Z_\beta
    =\underbrace{\min_{\nu\in\mathbb R}\Phi_x(\nu)}_{\text{scalar dual}},\\
 \Psi_x(\pi_\theta,\nu)
   &:=\Phi_x(\nu)-V_\beta(\pi_\theta;x).
\end{aligned}
 \label{eq:scalar-dual}
\end{equation}
Then $\Psi_x(\pi_\theta,\nu)\ge0$,
$\min_\nu\Psi_x(\pi_\theta,\nu)=V_\beta^*(x)-V_\beta(\pi_\theta;x)$, and
\begin{equation}
 \Psi_x(\pi_\theta,\nu)=\beta\E_{a\sim\pi_\theta}
 h((\Rth(x,a)-\nu)/\beta).
 \label{eq:residual-gap}
\end{equation}
\end{lemma}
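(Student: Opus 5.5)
The plan is to prove the residual identity \eqref{eq:residual-gap} first, by direct algebra, and then obtain the other claims from it. The identity needs the log-ratio rewrite $e^{(r-\nu)/\beta}\piref/\pi_\theta = e^{(\Rth-\nu)/\beta}$. Expanding the right-hand side of \eqref{eq:residual-gap} gives
\[
\beta\E_{\pi_\theta}h\bigl((\Rth-\nu)/\beta\bigr)=\beta\E_{\pi_\theta}e^{(\Rth-\nu)/\beta}-\E_{\pi_\theta}\Rth+\nu-\beta .
\]
We then simplify the pieces separately:
\begin{itemize}
\item Since $\pi_\theta$ is positive on the finite set $\mathcal A$, a change of measure gives $\E_{\pi_\theta}e^{(\Rth-\nu)/\beta}=\sum_a\piref(a)e^{(r(a)-\nu)/\beta}=\E_{\piref}e^{(r-\nu)/\beta}$.
\item By definition of the KL divergence, $\E_{\pi_\theta}\Rth=\E_{\pi_\theta}r-\beta\KL(\pi_\theta\|\piref)=V_\beta(\pi_\theta;x)$.
\end{itemize}
Substituting, the right-hand side becomes $\nu+\beta(\E_{\piref}e^{(r-\nu)/\beta}-1)-V_\beta(\pi_\theta;x)$, which is $\Phi_x(\nu)-V_\beta(\pi_\theta;x)=\Psi_x(\pi_\theta,\nu)$. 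Nonnegativity of $\Psi_x$ follows at once, because $h(z)=e^z-z-1\ge0$ for all real $z$.

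Next comes the scalar dual. $\Phi_x$ is strictly convex in $\nu$, with $\Phi_x'(\nu)=1-\E_{\piref}e^{(r-\nu)/\beta}$. Setting this to zero gives the unique minimizer $\nu^*=\beta\log Z_\beta$. At $\nu^*$ the expectation equals one, so $\Phi_x(\nu^*)=\beta\log Z_\beta$.

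For the primal value, apply the Gibbs variational identity with the Gibbs policy $\pi^*$ of \eqref{eq:main-gibbs}. For any policy $\pi$, we have $V_\beta(\pi;x)=\beta\log Z_\beta-\beta\KL(\pi\|\pi^*)$; this is verified by writing $\log(\pi/\piref)=\log(\pi/\pi^*)+r/\beta-\log Z_\beta$. Hence $\max_\pi V_\beta=\beta\log Z_\beta$, attained uniquely at $\pi^*$. This establishes the chain of equalities in \eqref{eq:scalar-dual}.

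Finally, $\min_\nu\Psi_x(\pi_\theta,\nu)=\min_\nu\Phi_x(\nu)-V_\beta(\pi_\theta;x)$, since $V_\beta(\pi_\theta;x)$ does not depend on $\nu$. This equals $V_\beta^*(x)-V_\beta(\pi_\theta;x)$.

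The main obstacle is the bookkeeping in the change of measure. One must confirm that the $\pi_\theta$-expectation of $e^{(\Rth-\nu)/\beta}$ collapses exactly to the $\piref$-expectation, which requires positivity of $\pi_\theta$ on all of $\mathcal A$ so that no mass is lost. It also requires checking that the "$-1$" and the "$\nu$" in $h$ line up with the constants in $\Phi_x$. Everything else is standard convex calculus on a finite action space, where all expectations are finite.
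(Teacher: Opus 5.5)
Your proof is correct and follows essentially the paper's argument: you use the same Gibbs variational identity for the primal value, the same first-order calculus for $\Phi_x$, and the same two facts for the residual identity, namely $\E_{\pi_\theta}\Rth=V_\beta(\pi_\theta;x)$ and the change of measure $\pi_\theta e^{(\Rth-\nu)/\beta}=\piref e^{(r-\nu)/\beta}$. The only cosmetic difference is that you obtain $\Psi_x\ge0$ from $h\ge0$ after proving the identity, whereas the paper obtains it from weak duality, $\Phi_x(\nu)\ge\min_\nu\Phi_x=V_\beta^*(x)\ge V_\beta(\pi_\theta;x)$.
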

\begin{proof}
The Gibbs policy
$\pi^*(a\mid x)=\piref(a\mid x)e^{r(x,a)/\beta}/Z_\beta$ satisfies
\[
 \beta\KL(\pi\|\pi^*)=\beta\log Z_\beta-V_\beta(\pi;x).
\]
Hence $V_\beta^*(x)=\beta\log Z_\beta$ and $\pi^*$ is its unique maximizer.
Moreover, $\Phi_x'(\nu)=1-Z_\beta e^{-\nu/\beta}$ and
$\Phi_x''(\nu)=Z_\beta e^{-\nu/\beta}/\beta>0$, so its minimum is
$\Phi_x(\beta\log Z_\beta)=V_\beta^*(x)$. Hence $\Phi_x(\nu)$ upper-bounds
the primal optimum for every $\nu$, so $\Psi_x$ is a nonnegative
primal--dual gap, and
$\min_\nu \Psi_x(\pi_\theta,\nu)=V_\beta^*(x)-V_\beta(\pi_\theta;x)$.
Finally, $\E_{\pi_\theta}\Rth=V_\beta(\pi_\theta;x)$ and
$\pi_\theta e^{(\Rth-\nu)/\beta}=\piref e^{(r-\nu)/\beta}$. Expanding
$h$ proves Eq.~\eqref{eq:residual-gap}.
\end{proof}

\begin{proof}[Proof of Proposition~\ref{prop:population-duality}]
Fix $x$ and use the quantities defined in Lemma~\ref{lem:scalar-duality}.
Replacing the learner expectation by $\pib$ and minimizing over $\nu$ gives
$\nu_\theta=\beta\log\E_{\pib} e^{\Rth/\beta}$: the first derivative is
$1-\E_{\pib} e^{(\Rth-\nu)/\beta}$ and the second is strictly positive.
Substitution gives $\Jbeh=\nu_\theta-\E_{\pib}\Rth$, the loss in
Eq.~\eqref{eq:main-population-loss}. Define
\begin{equation}
 q_\theta(a\mid x)
 =\frac{\pib(a\mid x)e^{\Rth(x,a)/\beta}}
 {\E_{a'\sim\pib}e^{\Rth(x,a')/\beta}}.
 \label{eq:population-tilt}
\end{equation}
Expanding its KL divergence gives $\beta\KL(\pib\|q_\theta)=\Jbeh$,
proving Eq.~\eqref{eq:main-profiled-duality}. Nonnegativity of $h$ and
$\pi_\theta/\pib\le C_x$ now imply
\[
 V_\beta^*-V_\beta(\pi_\theta;x)
 \le \Psi_x(\pi_\theta,\nu_\theta)
 \le C_x\beta\E_{\pib} h((\Rth-\nu_\theta)/\beta)=C_x\Jbeh.
\]
When $\pib=\pi_\theta$, profiling the same learner residual gives equality with the
regularized gap. Under positive support, zero loss forces $\Rth$ constant,
so $\pi_\theta\propto\piref e^{r/\beta}$ and hence $\pi_\theta=\pi^*$. Conversely,
this Gibbs policy has constant adjusted reward and zero loss.
\end{proof}

Averaging the coverage bound over prompts gives
$\E_X[V_\beta^*(X)-V_\beta(\pi_\theta;X)]
\le\E_X[C_X\Jbeh(\pi_\theta;X)]$; moving coverage outside the expectation
requires a uniform bound. Appendix~\ref{sec:integration-comparison}
compares sequence coverage with aggregate tokenwise mismatch.

\subsection{Finite-group guarantees}\label{app:group-bon-proof}
We establish the common zero set and population limit used by the
population and reference-refresh results, then state and prove the
best-of-$K$ bound of Section~\ref{sec:group-bon-motivation}.

\begin{proposition}[Finite-group target]
\label{prop:finite-group-target}
Suppose that $\Rth$ is bounded on the support of $\pib$ and that
$a_1,\ldots,a_m$ are sampled independently from $\pib$. Then, for every integer
$m\ge1$,
\begin{equation}
 0=H_1\le H_m\le H_{m+1}\le\Jbeh,
 \qquad
 H_m\longrightarrow\Jbeh.
 \label{eq:finite-group-monotonicity}
\end{equation}
For every $m\ge2$, $H_m=0$ if and only if $\Rth$ is constant on the support
of $\pib$, equivalently if and only if $\Jbeh=0$. If $\pi_\theta$, $\piref$,
and $\pib$ are positive on $\mathcal A$,
this zero condition also implies $\pi_\theta=\pi^*$.
\end{proposition}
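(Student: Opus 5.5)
Write $Y_i:=e^{R_i/\beta}$ with $R_i=\Rth(x,a_i)$, so that $\widehat{\mathcal J}_m/\beta=\log\bar Y_m-\frac1m\sum_i\log Y_i$, where $\bar Y_m$ is the sample mean. By boundedness of $\Rth$ on the support of $\pib$, the $Y_i$ take values in a compact interval $[y_-,y_+]\subset(0,\infty)$, so every expectation below is finite. Since $\E\frac1m\sum_i\log Y_i=\E_{\pib}\Rth/\beta$ for every $m$, we have $H_m/\beta=\E\log\bar Y_m-\E_{\pib}\log Y$. The whole statement therefore reduces to properties of the sequence $\E\log\bar Y_m$.

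\textbf{Values at $m=1$ and the upper bound.} For $m=1$ the group loss is $\log Y_1-\log Y_1=0$, so $H_1=0$. For the upper bound, Jensen's inequality applied to the concave $\log$ gives $\E\log\bar Y_m\le\log\E\bar Y_m=\log\E_{\pib}Y$. Hence $H_m\le\beta\log\E_{\pib}e^{\Rth/\beta}-\E_{\pib}\Rth=\Jbeh$.

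\textbf{Monotonicity.} Here I use the standard exchangeability argument. The identity $\bar Y_{m+1}=\frac1{m+1}\sum_{j=1}^{m+1}\bar Y^{(-j)}$ writes the full mean as the average of the $m+1$ leave-one-out means $\bar Y^{(-j)}$, each of which is a mean of $m$ i.i.d.\ draws. Concavity of $\log$ gives $\log\bar Y_{m+1}\ge\frac1{m+1}\sum_j\log\bar Y^{(-j)}$. Taking expectations, each term on the right has expectation $\E\log\bar Y_m$, so $\E\log\bar Y_m\le\E\log\bar Y_{m+1}$. The same identity, together with $\log\bar Y_m\ge\frac1m\sum_i\log Y_i$, gives $H_m\ge0$.

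\textbf{Convergence.} By the strong law of large numbers, $\bar Y_m\to\E_{\pib}Y$ almost surely. Because $\log\bar Y_m\in[\log y_-,\log y_+]$, bounded convergence gives $\E\log\bar Y_m\to\log\E_{\pib}Y$, hence $H_m\to\Jbeh$. I expect this step to be the only one needing care, and boundedness of $\Rth$ is exactly what makes it go through; with unbounded $\Rth$ one would need uniform integrability instead.

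\textbf{Zero sets.} Take $m\ge2$. The single-group loss is $\beta$ times a Jensen gap, and strict concavity of $\log$ makes it zero exactly when $Y_1=\dots=Y_m$. Since the loss is nonnegative, $H_m=0$ forces the loss to vanish almost surely. Take any two support points $a\ne a'$. The event $a_1=a$, $a_2=a'$ has positive probability, so it must give $\Rth(x,a)=\Rth(x,a')$; thus $\Rth$ is constant on the support of $\pib$. Conversely, if $\Rth$ is constant on the support, every group loss is zero, so $H_m=0$. For $\Jbeh$, Proposition~\ref{prop:population-duality} gives $\Jbeh=\beta\KL(\pib\|q_\theta)$. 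This is zero exactly when $q_\theta=\pib$ on the support, i.e.\ when $e^{\Rth/\beta}$ is constant there. So $H_m=0\iff\Rth$ constant on the support $\iff\Jbeh=0$.

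\textbf{Identification of $\pi^*$.} When all three policies are positive on $\mathcal A$, the support of $\pib$ is all of $\mathcal A$, and constant $\Rth$ means $\pi_\theta\propto\piref e^{r/\beta}$. By Eq.~\eqref{eq:main-gibbs} this is $\pi^*$. Equivalently, this is the uniqueness part of Proposition~\ref{prop:population-duality}.
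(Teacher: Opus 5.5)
Your proof is correct and follows essentially the same route as the paper: Jensen gives $0=H_1\le H_m\le\Jbeh$, the leave-one-out averaging identity gives monotonicity, bounded convergence gives $H_m\to\Jbeh$, and the strict-concavity equality case on the finite action space identifies the zero sets. The only cosmetic difference is that you read off the zero set of $\Jbeh$ from $\Jbeh=\beta\KL(\pib\|q_\theta)$ rather than from the equality case of the population Jensen inequality, which amounts to the same fact.
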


\begin{proof}
Let $X=e^{\Rth/\beta}$ and $\bar X_m=m^{-1}\sum_{i=1}^mX_i$ for i.i.d.
copies $X_i$. Then
\begin{equation}
 H_m=\beta(\E\log\bar X_m-\E\log X),\qquad
 \Jbeh=\beta(\log\E X-\E\log X).
 \label{eq:finite-group-log-moments}
\end{equation}
Jensen gives $0=H_1\le H_m\le\Jbeh$. Given $m+1$ samples, the
leave-one-out means $A_j=m^{-1}\sum_{i\ne j}X_i$ average to $\bar X_{m+1}$.
Thus $\log\bar X_{m+1}\ge(m+1)^{-1}\sum_j\log A_j$; taking expectations
gives $H_{m+1}\ge H_m$, since each $A_j$ is distributed as the size-$m$
sample mean $\bar X_m$. The bounded i.i.d. sample average
$\bar X_m$ converges to $\E X$, and
boundedness of $\Rth$ bounds $\log\bar X_m$, so dominated convergence
gives $H_m\to\Jbeh$.

For $m\ge2$, equality in the group Jensen inequality requires all sampled
$X_i$ to agree. Hence $H_m=0$ implies two independent copies always agree on
the sampled support, so $X$ and $\Rth$ are constant there. The converse is immediate; the
population Jensen equality gives the same zero set for $\Jbeh$.
Under positivity on $\mathcal A$, constancy gives
$\pi_\theta\propto\piref e^{r/\beta}$, proving the final claim.
\end{proof}

\paragraph{Best-of-$K$ reward bound.}
Write $J_K(\pi;x)$ and $J(\pi;x)=J_1(\pi;x)$ for the expected best-of-$K$ and single-response rewards of a policy $\pi$
conditional on prompt $x$, so $J_K(\pi)=\E_X J_K(\pi;X)$ in
Eq.~\eqref{eq:best-of-k-reward}, and let $L_m(\pi_\theta)=\E_{X\sim\rho}H_m(\pi_\theta;\pib,X)$.
For $p_{x,t}=\Pr_{A\sim\pib(\cdot\mid x)}[r(x,A)>t]$, define
\begin{equation}\label{eq:kappa-tail-comparison}
 \kappa_{K,m}
 :=\max\left\{1,\sup_{x,t:\,p_{x,t}>0}
 \frac{1-(1-p_{x,t})^K}{1-(1-p_{x,t})^{m-1}}\right\}
 \le \max\left\{1,\frac{K}{m-1}\right\}.
\end{equation}
The inequality follows because $[1-(1-p)^k]/k$ decreases in $k$.  When
$K>m-1$, the upper bound is approached as a relevant behavior-policy
upper-tail probability tends to zero, but the actual multiplier can be smaller.
\begin{proposition}[Best-of-$K$ and the GGPO minimizer]\label{prop:gcpo-minimizer}
Let $r\in[0,1]$, fix positive behavior and reference policies with
$\Gamma_x\le\Gamma$ at every prompt, where $\Gamma_x$ is the oscillation of
$\log[\piref(\cdot\mid x)/\pib(\cdot\mid x)]$, and suppose every
policy $\pi_\theta$ in a positive hypothesis class $\Pi$ satisfies $\pi_\theta/\pib\le C$ uniformly.
For $m\ge2$ and $K\ge1$,
\begin{equation}\label{eq:gcpo-population-bon}
 J_K(\pib)-J(\pi_\theta)\le\kappa_{K,m}\{\beta(\log m+\Gamma)+CL_m(\pi_\theta)\}.
\end{equation}
In particular, with $\alpha_m=\inf_{\pi_\theta\in\Pi}L_m(\pi_\theta)$, a population
minimizer, if one exists, satisfies this bound with $L_m$ replaced by $\alpha_m$; an approximate minimizer
with loss at most $\alpha_m+\zeta$ incurs an additional $C\kappa_{K,m}\zeta$.
\end{proposition}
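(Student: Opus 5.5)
The plan is to argue prompt by prompt and reduce the best-of-$K$ comparison to a best-of-$(m-1)$ comparison inside a single GGPO group; the multiplier $\kappa_{K,m}$ is what pays for that reduction. Fix $x$ and write $p_t=p_{x,t}$. Since $r\in[0,1]$, the layer-cake formula gives
\[
J_K(\pib;x)-J(\pi_\theta;x)=\int_0^1\Bigl[\bigl(1-(1-p_t)^K\bigr)-\Pr_{\pi_\theta}[r>t]\Bigr]\,dt .
\]
By the definition \eqref{eq:kappa-tail-comparison}, $1-(1-p_t)^K\le\kappa_{K,m}\{1-(1-p_t)^{m-1}\}$. Since $\kappa_{K,m}\ge1$, we may also write $\Pr_{\pi_\theta}[r>t]\ge\kappa_{K,m}\Pr_{\pi_\theta}[r>t]-(\kappa_{K,m}-1)\Pr_{\pi_\theta}[r>t]$. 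The second term only helps, provided we bound the positive part of the integrand; I will handle that bookkeeping by working with positive parts. It therefore suffices to prove the $K=m-1$ statement
\[
\int_0^1\Bigl[\Pr\bigl(\max_{j\le m-1}r(x,A_j)>t\bigr)-\Pr_{\pi_\theta}[r>t]\Bigr]_+dt\le\beta(\log m+\Gamma_x)+C\,H_m(\pi_\theta;\pib,x),
\]
and then average over $X\sim\rho$.

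For the per-group step, I use the identity $\widehat{\mathcal J}_m/\beta=\KL(U_m\|q)$ from Section~\ref{sec:gcpo-gradient}, with $q_i\propto e^{R_i/\beta}$ and $R_i=r_i-\beta\log(\pi_\theta/\piref)(a_i)$. Split the log-ratio as $\log(\pi_\theta/\pib)(a_i)-\log(\piref/\pib)(a_i)$. The second piece oscillates by at most $\Gamma_x$ across the group, so replacing $\piref$ by $\pib$ in $q$ costs at most $\beta\Gamma_x$ in any log-mean-exp comparison. That leaves the tilt $\tilde q_i\propto e^{r_i/\beta}\,\pib(a_i)/\pi_\theta(a_i)$. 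Applying Donsker--Varadhan (Gibbs variational inequality) to $\KL(U_m\|\tilde q)$ with the test function $f_i=r_i/\beta$, and using $e^{r_{\max}/\beta}\le\sum_i e^{r_i/\beta}$ (the source of $\log m$), should give an inequality of the form
\[
\frac{\beta}{m}\sum_{i}\log\frac{\pi_\theta(a_i)}{\pib(a_i)}\;\gtrsim\;\max_i r_i-\frac1m\sum_i r_i-\beta(\log m+\Gamma_x)-\widehat{\mathcal J}_m .
\]
Equivalently, a group whose best reward is high forces $\pi_\theta$ to put more mass than $\pib$ on its high-reward members.

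To turn this into a bound on $\Pr_{\pi_\theta}[r>t]$, I write $\Pr_{\pi_\theta}[r>t]=\E_{A\sim\pib}[(\pi_\theta/\pib)(A)\1\{r(A)>t\}]$. I then use exchangeability: designate one sample $a_1$ as the evaluated response and treat $a_2,\dots,a_m$ as the $m-1$ competitors. This is why the exponent is $m-1$ rather than $m$. The coverage constant $C$ enters exactly once, when an expectation weighted by $\pi_\theta/\pib$ is bounded by $C$ times the corresponding $\pib$-expectation of the nonnegative residual $h(\cdot)$. This mirrors the last display in the proof of Proposition~\ref{prop:population-duality}, now at the group level. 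Taking expectations over $a_{1:m}\sim\pib^m$ converts $\widehat{\mathcal J}_m$ into $H_m$. The corollary for minimizers and approximate minimizers is then immediate, since the right side is affine in $L_m$ with slope $C\kappa_{K,m}$.

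I expect the main obstacle to be the middle step: extracting a threshold-by-threshold (tail) inequality from a single scalar KL bound in a way that survives integration over $t$, rather than only an inequality between means. My first attempt will apply the variational inequality with the reward replaced by the indicator $\1\{r>t\}/\beta$ and show that the resulting deficits integrate over $t\in[0,1]$ to at most $\beta(\log m+\Gamma_x)+C H_m$. If that integration loses a factor, I will instead apply a single Donsker--Varadhan step to the full reward $r$ and use the layer-cake formula only at the end, accepting that this is where $\kappa_{K,m}$ has to be inserted.
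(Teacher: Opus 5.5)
\paragraph{Gap 1: the reduction from general $K$ to $K=m-1$.} This reduction fails whenever $\kappa_{K,m}>1$. Write $d_k=1-(1-p_t)^k$ and $v=\Pr_{\pi_\theta}[r>t]$. The tail comparison gives $d_K-v\le\kappa_{K,m}(d_{m-1}-v)+(\kappa_{K,m}-1)v$. The leftover $(\kappa_{K,m}-1)v\ge0$ is an additional error, not a help.

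Positive parts do not rescue it: $[d_K-v]_+\le\kappa_{K,m}[d_{m-1}-v]_+$ is false at any threshold with $d_{m-1}\le v<d_K$. For example, take binary rewards, $m=2$, $\pi_\theta=\pib$ with success probability $0.1$, and $K=10$. Your reduced integral is $0$, while $J_K(\pib;x)-J(\pi_\theta;x)\approx0.55$.

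The missing idea is to compare with the \emph{mixed} maximum. Use $d_K-v\le d_K(1-v)\le\kappa_{K,m}d_{m-1}(1-v)$, and note that $v+(1-v)d_{m-1}$ is exactly the tail of $\max_ir(x,A_i)$ with $A_1\sim\pi_\theta$ and $A_{2:m}\sim\pib$. So the intermediate target is $\E_{\pi_\theta\otimes\pib^{m-1}}[\max_ir_i]-\E_{\pi_\theta}r=\int_0^1(1-v)d_{m-1}\,dt$. This is nonnegative and dominates your $\int_0^1[d_{m-1}-v]_+\,dt$.

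This also dissolves the obstacle you anticipate. No threshold-by-threshold KL inequality is needed, only one mean inequality for this mixed maximum, and $\kappa_{K,m}$ enters solely through the elementary tail comparison above. Your exchangeability device (designating $a_1$ as the evaluated response) is the right mechanism. What it produces, however, is this mixed maximum, not best-of-$(m-1)$ from $\pib$.

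\paragraph{Gap 2: the per-group step must be importance-weighted.} As written it is not. Donsker--Varadhan applied to $\KL(U_m\|\tilde q)$ gives a logarithmic, uniformly weighted statement. At best it compares $\log w$ at the top-reward sample with the group average of $\log w_i$, where $w_i=\pi_\theta(a_i)/\pib(a_i)$. Nothing in your plan turns this into the linear weighted quantity that integrates to the $\pi_\theta$ side. What is needed is
\[
 S\max_ir_i-\frac1m\sum_iw_ir_i\le\beta S(\log m+\Gamma_x)+W\widehat{\mathcal J}_m,
 \qquad S=\frac1m\sum_iw_i,\quad W=\max_iw_i .
\]
This suffices because $\E_{\pib^m}[S\max_ir_i]=\E_{\pib^m}[w_1\max_ir_i]=\E_{\pi_\theta\otimes\pib^{m-1}}[\max_ir_i]$, $\E[\frac1m\sum_iw_ir_i]=\E_{\pi_\theta}r$, $\E S=1$, and $W\le C$.

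The paper obtains this by running the group-level primal--dual comparison under the self-normalized weights $P_i=w_i/(mS)$ instead of $U_m$. This is the group analogue of the last display of Proposition~\ref{prop:population-duality} that you allude to. Set $z_i=R_i/\beta$, $c=\log\frac1m\sum_je^{z_j}$, and $Q_i\propto e^{u_i/\beta}$ with $u_i=r_i+\beta\log[\piref(a_i)/\pib(a_i)]$. Then
\[
\KL(P\|Q)\le\E_Ph(z-c)\le\frac WS\cdot\frac1m\sum_ih(z_i-c)=\frac WS\cdot\frac{\widehat{\mathcal J}_m}{\beta},
\]
which is your ``$C$ enters once'' step. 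The finite Gibbs bound then gives $\max_iu_i-\E_Pu\le\beta\log m+\beta\KL(P\|Q)$.

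Note also where $\Gamma_x$ is paid: only when passing from $u$ back to $r$, so it is multiplied by $S$, which has mean one. If you first swap $\piref$ for $\pib$ inside the loss, as you propose, the extra $\beta\Gamma_x$ lands in the residual that is later multiplied by $W$. You would then end up with $C\beta\Gamma_x$ rather than $\beta\Gamma_x$.
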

\begin{proof}
Fix a prompt and write $C_x=\max_a\pi_\theta(a\mid x)/\pib(a\mid x)$.
We first prove the prompt-level inequality
\begin{equation}\label{eq:group-bon-reference}
 J_K(\pib;x)-J(\pi_\theta;x)
 \le\kappa_{K,m}\{\beta(\log m+\Gamma_x)+C_xH_m(\pi_\theta;\pib,x)\}.
\end{equation}
The multiplier defined in Eq.~\eqref{eq:kappa-tail-comparison} satisfies
the required tail comparison at every prompt.
Suppress the prompt. Put $w_i=\pi_\theta(a_i)/\pib(a_i)$, $S=m^{-1}\sum_iw_i$,
$W=\max_iw_i$, and $P_i=w_i/(mS)$. Let
$u_i=r_i+\beta\log[\piref(a_i)/\pib(a_i)]$ and
$z_i=u_i/\beta-\log w_i=R_i/\beta$. With $U$ uniform on sample indices,
$c=\log\E_Ue^z$, and $g(t)=e^t-t-1\ge0$, one has
$\E_Ug(z-c)=\widehat{\mathcal J}_m/\beta$.
Define $Q_i\propto e^{u_i/\beta}$ and $A=\E_Pe^z$. Since
$Q_i=P_ie^{z_i}/A$,
\[
 \E_Pg(z-c)-\KL(P\|Q)=Ae^{-c}-1-\log(Ae^{-c})\ge0.
\]
The bound $P_i/U_i\le W/S$ therefore gives
$\beta S\KL(P\|Q)\le W\widehat{\mathcal J}_m$.
The finite Gibbs identity yields
$\max_i u_i-\E_Pu\le\beta\log m+\beta\KL(P\|Q)$.
Indeed, expanding $\KL(P\|Q)$, using
$\sum_iP_i\log P_i\ge-\log m$, and lower-bounding the log-partition by
$\max_i u_i/\beta$ gives this inequality. The oscillation of $u-r$ is at
most $\beta\Gamma_x$, hence
$\max_i r_i-\E_Pr\le\max_i u_i-\E_Pu+\beta\Gamma_x$.
Multiplying by $S$ and using
$S\E_Pr=m^{-1}\sum_iw_i r_i$ together with the preceding KL bound gives
\begin{equation}\label{eq:group-weighted-proof}
 S\max_i r_i-\frac1m\sum_iw_i r_i
 \le\beta S(\log m+\Gamma_x)+W\widehat{\mathcal J}_m.
\end{equation}
For $F=\max_i r_i$, symmetry and importance weighting give
$\E_{\pib^m}[SF]=\E_{\pi_\theta\otimes\pib^{m-1}}F$, while $\E S=1$ and
$\E[m^{-1}\sum_iw_i r_i]=\E_{\pi_\theta}r$.
At a threshold $t\in[0,1]$, write $p=\pib\{r>t\}$, $v=\pi_\theta\{r>t\}$,
and $d_k=1-(1-p)^k$. By the defining tail comparison,
\[
 d_K-v\le d_K(1-v)\le\kappa_{K,m}d_{m-1}(1-v).
\]
The layer-cake identities here are
\[
 J_K(\pib;x)=\int_0^1d_K\,dt,
 \qquad J(\pi_\theta;x)=\int_0^1v\,dt,
\]
and, because the mixed maximum has tail
$1-(1-v)(1-p)^{m-1}=v+(1-v)d_{m-1}$,
\[
 \E[SF]-\E_{\pi_\theta}r
 =\int_0^1(1-v)d_{m-1}\,dt.
\]
Integrating the threshold inequality therefore bounds
$J_K(\pib;x)-J(\pi_\theta;x)$ by
$\kappa_{K,m}(\E[SF]-\E_{\pi_\theta}r)$. Take expectations in
Eq.~\eqref{eq:group-weighted-proof} and use $W\le C_x$. Averaging
Eq.~\eqref{eq:group-bon-reference} over prompts, with $C_x\le C$ and
$\Gamma_x\le\Gamma$, proves the proposition. The minimizer statements
follow by substituting the corresponding population losses.
\end{proof}

On policy, $\pi_\theta=\pib$ and $C_x=1$ at every prompt;
with $\piref=\pib$, $\Gamma=0$. The class-wide coverage bound $C$
can still exceed one for other candidates.
The minimum loss is zero when the Gibbs policy belongs to the admissible
class; a restricted class or coverage cap may prevent this, and the
regularization term in the reward bound remains even at zero loss.

\subsection{Generalization and an oracle inequality}\label{app:group-generalization}
Theorem~\ref{thm:group-bon} below converts expected group loss into an
observed-loss bound. The oracle inequality then
states what approximate empirical minimization achieves in a restricted
hypothesis class.
Fix a prompt distribution $\rho$, behavior $\pib$, reference $\piref$, temperature
$\beta>0$, and hypothesis class $\Pi$ before collecting $n$ i.i.d. unfiltered
groups. For $f_{\pi_\theta}(Z)=\widehat{\mathcal J}_m(\pi_\theta;Z)$, write
$\mathcal F=\{f_{\pi_\theta}:\pi_\theta\in\Pi\}$ and let
$L_m(\pi_\theta)=\E f_{\pi_\theta}=\E_{X\sim\rho}H_m(\pi_\theta;\pib,X)$ and
$\widehat L_n(\pi_\theta)=n^{-1}\sum_jf_{\pi_\theta}(Z_j)$.
Assume $0\le f_{\pi_\theta}\le\Lambda$ and that the induced loss family admits a finite
uniform $\eta$-cover in the distance of Eq.~\eqref{eq:integration-cover-metric}, with centers in the family: each loss is within $\eta$
of a center on every possible group. Write
\[
 z=\log\frac{2\mathcal N_\infty(\eta,\mathcal F)}{\delta},\qquad
 \gamma_n=4\Lambda z/n+3\eta.
\]
A sufficient envelope is $\Lambda=1+\beta B$ when
$\operatorname{osc}_a\log[\pi_\theta(a\mid x)/\piref(a\mid x)]\le B$ uniformly,
since adjusted rewards then have range at most $\Lambda$. These assumptions apply
to all possible groups, not only observed responses. Coverage remains separate.

\begin{theorem}[Empirical GGPO bound]\label{thm:group-bon}
Under Proposition~\ref{prop:gcpo-minimizer}'s assumptions and the sampling and
cover conditions above, with probability at least $1-\delta$, every
data-selected $\pi_{\hat\theta}\in\Pi$ satisfies
\begin{equation}\label{eq:group-bon-observed}
 J_K(\pib)-J(\pi_{\hat\theta})
 \le\kappa_{K,m}\{\beta(\log m+\Gamma)+C[2\widehat L_n(\pi_{\hat\theta})+\gamma_n]\}.
\end{equation}
\end{theorem}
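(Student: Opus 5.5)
Theorem~\ref{thm:group-bon} follows from Proposition~\ref{prop:gcpo-minimizer} once we have a uniform high-probability bound of the form
\[
 L_m(\pi_\theta)\le 2\widehat L_n(\pi_\theta)+\gamma_n
 \qquad\text{simultaneously for all }\pi_\theta\in\Pi.
\]
Proposition~\ref{prop:gcpo-minimizer} holds deterministically for every member of $\Pi$, because its right-hand side is monotone in $L_m$. If the uniform bound holds on an event of probability at least $1-\delta$, then on that event we may substitute it into Eq.~\eqref{eq:gcpo-population-bon} for whatever $\pi_{\hat\theta}$ the data select. Since $C\kappa_{K,m}\ge0$, this gives Eq.~\eqref{eq:group-bon-observed} directly. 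Uniformity is what lets the result cover data-dependent selection, so all the work is in the uniform deviation bound.

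For a single fixed loss $f\in\mathcal F$, with values in $[0,\Lambda]$ and mean $L$, I would use a multiplicative (Bernstein-type) lower-tail inequality. The variance satisfies $\Var f\le\Lambda L$. Bernstein then gives, with probability at least $1-\delta'$,
\[
 L\le\widehat L+\sqrt{2\Lambda L\log(1/\delta')/n}+\tfrac{2\Lambda\log(1/\delta')}{3n}.
\]
Applying $\sqrt{2\Lambda L z/n}\le L/2+\Lambda z/n$ and rearranging yields
\[
 L\le 2\widehat L+c\,\Lambda z/n
\]
for a constant $c\le 4$ once the lower-order terms are collected. A Chernoff bound for nonnegative bounded variables gives the same conclusion. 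I would take $\delta'=\delta/(2\mathcal N_\infty)$, which matches $z=\log(2\mathcal N_\infty/\delta)$; the factor $2$ leaves slack and does no harm. A union bound over the finitely many cover centers then gives the inequality simultaneously for every center, with probability at least $1-\delta$.

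To pass from the centers to all of $\mathcal F$, take an arbitrary $f_{\pi_\theta}$ and a center $g$ with $\sup_Z|f_{\pi_\theta}(Z)-g(Z)|\le\eta$. The cover metric is uniform over all possible groups, so
\[
 L(f)\le L(g)+\eta,
 \qquad
 \widehat L(g)\le\widehat L(f)+\eta .
\]
Combining these with the center bound gives
\[
 L(f)\le 2\widehat L(f)+2\eta+\eta+4\Lambda z/n = 2\widehat L(f)+\gamma_n .
\]
The constant $3\eta$ is accounted for as follows: $2\eta$ comes from doubling the empirical perturbation, and $\eta$ from the population perturbation. Because the centers lie in the family, each satisfies $0\le g\le\Lambda$, so the concentration step applies to them.

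The main obstacle is the bookkeeping of constants in the Bernstein step: landing exactly on $4\Lambda z/n$ with a multiplier of $2$ on the empirical loss, rather than something like $(1+\epsilon)$ with a different additive term. If the tidy version resists, I would first try the one-sided multiplicative Chernoff bound
\[
 \Pr\bigl[\widehat L\le L/2\bigr]\le e^{-nL/(8\Lambda)} ,
\]
and split into cases. If $L\le 8\Lambda z/n$, the claim holds trivially up to constants. Otherwise $\widehat L\ge L/2$ holds with high probability. I would then tighten whichever case is slack so the additive term matches $4\Lambda z/n$. The envelope remark ($\Lambda=1+\beta B$) is needed only to justify boundedness and is not part of the proof.
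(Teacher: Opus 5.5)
Your proposal is correct and matches the paper's proof essentially step for step. Both use Bernstein with the small-loss variance bound $\operatorname{Var} f\le\Lambda L_m$ and absorb the square-root term to get $L_m\le 2\widehat L_n+\tfrac{10}{3}\Lambda z/n\le 2\widehat L_n+4\Lambda z/n$, then take a union bound over the in-family cover centers, pay $3\eta$ to extend to every candidate, and substitute into Proposition~\ref{prop:gcpo-minimizer} on the uniform event. The only difference is that you use just the one-sided tail, which is all this theorem needs, whereas the paper proves both directions of Eq.~\eqref{eq:group-generalization} because it reuses them for the oracle inequality.
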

The bound applies to the output of GGPO; it requires no convexity or
successful empirical minimization. The reported loss uses unfiltered groups in
the original units, rather than the filtered, normalized training log.

\begin{proposition}[Raw-loss oracle inequality and reward consequence]\label{prop:gcpo-oracle}
Fix the behavior and reference policies and candidate class before drawing
$n$ independent unfiltered groups, with raw losses in $[0,\Lambda]$ and
the uniform loss cover specified above. Suppose the returned policy satisfies
\[
 \widehat L_n(\pi_{\hat\theta})
 \le\inf_{\pi_\theta\in\Pi}\widehat L_n(\pi_\theta)+\xi.
\]
Write $\alpha_m=\inf_{\pi_\theta\in\Pi}L_m(\pi_\theta)$ for the best
population raw loss in the class. With probability at least $1-\delta$,
\begin{equation}\label{eq:gcpo-raw-loss-oracle}
 L_m(\pi_{\hat\theta})\le4\alpha_m+2\xi+3\gamma_n.
\end{equation}
If the reward-comparison assumptions of
Proposition~\ref{prop:gcpo-minimizer} also hold, then on the same event,
\begin{equation}\label{eq:gcpo-oracle}
 J_K(\pib)-J(\pi_{\hat\theta})
 \le\kappa_{K,m}\{\beta(\log m+\Gamma)
       +C[4\alpha_m+2\xi+3\gamma_n]\}.
\end{equation}
\end{proposition}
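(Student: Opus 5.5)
We prove the oracle inequality \eqref{eq:gcpo-raw-loss-oracle} from a two-sided uniform deviation bound for nonnegative bounded losses over the finite cover. Then we plug it into Proposition~\ref{prop:gcpo-minimizer} to obtain \eqref{eq:gcpo-oracle}.

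\textbf{Step 1: concentration at the cover centers.} Let $\{g_1,\dots,g_N\}$, with $N=\mathcal N_\infty(\eta,\mathcal F)$, be the centers. Each $g_k$ is a loss in $\mathcal F$, so it takes values in $[0,\Lambda]$ and its variance is at most $\Lambda\E g_k$. For each $g_k$ we apply Bernstein's inequality in both directions, which gives deviations of size $\sqrt{2\Lambda\E g_k\,z/n}+\Lambda z/n$. A union bound over the $N$ centers and the two tails uses $\log(2N/\delta)=z$. We then absorb the square root with the AM--GM inequality $\sqrt{2\Lambda a z/n}\le a/2+\Lambda z/n$. On one event of probability at least $1-\delta$, this yields, for every center,
\[
 \E g_k\le 2\widehat{\E}g_k+4\Lambda z/n,
 \qquad
 \widehat{\E}g_k\le \tfrac32\E g_k+2\Lambda z/n .
\]
We expect Step~1 to be the main obstacle. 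The difficulty is choosing the Bernstein/AM--GM split so that the multiplicative constants come out as exactly $2$ and (after Step~3) $4$, with additive term $\gamma_n=4\Lambda z/n+3\eta$. Our first attempt is the multiplicative Chernoff form $\E g\le 2\widehat\E g+c\Lambda z/n$. If the second constant comes out at $2$ rather than $3/2$, we simply carry the looser constant and check that it still fits within $4\alpha_m$.

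\textbf{Step 2: transfer to the whole class.} Any $f\in\mathcal F$ lies within $\eta$ of some center $g_k$ on every group. Hence both $\E f$ and $\widehat\E f$ differ from the corresponding quantities for $g_k$ by at most $\eta$. For every $\pi\in\Pi$ we then get
\[
 L_m(\pi)\le 2\widehat L_n(\pi)+4\Lambda z/n+3\eta=2\widehat L_n(\pi)+\gamma_n .
\]
Because this inequality holds uniformly over $\Pi$, it also holds for the data-selected $\pi_{\hat\theta}$. This uniform statement is Theorem~\ref{thm:group-bon}'s ingredient. In the reverse direction, $\widehat L_n(\pi)\le\tfrac32L_m(\pi)+2\Lambda z/n+c\eta$ for a small constant $c$.

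\textbf{Step 3: chaining the inequalities.} Take $\pi^\circ\in\Pi$ with $L_m(\pi^\circ)\le\alpha_m+\epsilon$. Then
\[
 L_m(\pi_{\hat\theta})\le2\widehat L_n(\pi_{\hat\theta})+\gamma_n\le2\widehat L_n(\pi^\circ)+2\xi+\gamma_n\le 3L_m(\pi^\circ)+2\xi+\gamma_n+4\Lambda z/n+2c\eta .
\]
Since $3\le4$, the right side is at most $4\alpha_m+2\xi+3\gamma_n$ once we let $\epsilon\to0$. The loose constant $4$ leaves room for a cruder reverse bound. For the reward consequence, all statements hold on the same event, so we substitute this bound into Proposition~\ref{prop:gcpo-minimizer} with $L_m(\pi_{\hat\theta})$ replaced by $4\alpha_m+2\xi+3\gamma_n$. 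The coefficient $C$ multiplies this term, $\kappa_{K,m}$ multiplies the whole bracket, and this gives \eqref{eq:gcpo-oracle}.
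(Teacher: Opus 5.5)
Your proposal is correct and follows essentially the paper's argument. You apply Bernstein at each cover center with $\operatorname{Var}\le\Lambda L_m$, absorb the square root by AM--GM, union-bound with $z=\log(2\mathcal N_\infty/\delta)$, transfer by $\eta$ to all of $\Pi$, and chain through the solver-accuracy condition before substituting into Proposition~\ref{prop:gcpo-minimizer}. Your constants also check out: your reverse bound $\widehat L_n\le\tfrac32L_m+2\Lambda z/n+\tfrac52\eta$ yields $3\alpha_m$, which you loosen to $4\alpha_m$, whereas the paper rounds both directions to the symmetric form $2(\cdot)+4\Lambda z/n$.
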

\begin{proof}
We first compare population and empirical raw losses uniformly over the
candidate policies. A cover center is itself the raw loss of a candidate
policy, say $\pi'$. Its loss on one random group $Z$ is nonnegative and
bounded by $\Lambda$, so its variance satisfies
\[
 \operatorname{Var}(f_{\pi'}(Z))
 \le\E[f_{\pi'}(Z)^2]\le\Lambda L_m(\pi').
\]
Thus policies with small population loss also have small loss variance.
Bernstein's inequality gives, with failure probability at most $2e^{-z}$,
\[
 |\widehat L_n(\pi')-L_m(\pi')|
 \le\sqrt{2\Lambda L_m(\pi')z/n}+\frac{2\Lambda z}{3n}
 \le\frac{L_m(\pi')}{2}+\frac{5\Lambda z}{3n}.
\]
The last inequality absorbs the square-root term into half of the
population loss and a term of order $1/n$. Rearranging and rounding
$10/3$ up to $4$ gives both
\[
 L_m(\pi')\le2\widehat L_n(\pi')+4\Lambda z/n,\qquad
 \widehat L_n(\pi')\le2L_m(\pi')+4\Lambda z/n.
\]
The definition $z=\log[2\mathcal N_\infty(\eta,\mathcal F)/\delta]$
assigns failure probability $\delta/\mathcal N_\infty(\eta,\mathcal F)$
to each center. A union bound therefore makes these inequalities hold
for every center with probability at least $1-\delta$.

To extend them to a candidate $\pi_\theta$, choose a center $\pi'$ whose
raw loss differs by at most $\eta$ on every possible group. Both their
population and empirical losses then differ by at most $\eta$. Consequently,
\[
 L_m(\pi_\theta)\le L_m(\pi')+\eta
 \le2\widehat L_n(\pi')+4\Lambda z/n+\eta
 \le2\widehat L_n(\pi_\theta)+4\Lambda z/n+3\eta.
\]
The reverse direction has the same approximation cost. Recalling
$\gamma_n=4\Lambda z/n+3\eta$, we obtain, simultaneously for all
$\pi_\theta\in\Pi$,
\begin{equation}\label{eq:group-generalization}
 L_m(\pi_\theta)\le2\widehat L_n(\pi_\theta)+\gamma_n,\qquad
 \widehat L_n(\pi_\theta)\le2L_m(\pi_\theta)+\gamma_n.
\end{equation}
This event depends only on the sampled groups and holds regardless of
which policy the solver returns.

We now compare the returned policy with any candidate $\pi_\theta$.
The first inequality bounds the returned policy's population loss by its
empirical loss. The assumed solver accuracy compares that empirical loss
with the candidate's, and the second inequality returns to population loss:
\[
\begin{aligned}
 L_m(\pi_{\hat\theta})
 &\le2\widehat L_n(\pi_{\hat\theta})+\gamma_n\\
 &\le2\widehat L_n(\pi_\theta)+2\xi+\gamma_n\\
 &\le4L_m(\pi_\theta)+2\xi+3\gamma_n.
\end{aligned}
\]
Taking the infimum over candidates proves
Eq.~\eqref{eq:gcpo-raw-loss-oracle}; no attained minimizer is needed.
Substituting this raw-loss bound into
Proposition~\ref{prop:gcpo-minimizer} gives the reward consequence
in Eq.~\eqref{eq:gcpo-oracle}.
\end{proof}

\begin{proof}[Proof of Theorem~\ref{thm:group-bon}]
Use the uniform concentration bound in Eq.~\eqref{eq:group-generalization},
proved above before applying any solver-accuracy assumption.
Substitute its first inequality into Proposition~\ref{prop:gcpo-minimizer}.
Uniformity permits any data-selected output in the fixed class.
\end{proof}

The raw-loss conclusion of Proposition~\ref{prop:gcpo-oracle} also applies
within each refresh round, conditional on the preceding rounds.
Lemma~\ref{lem:integration-oracle} uses this conclusion with the local
loss bound $\Lambda=\beta$ and makes it simultaneous across rounds.

\paragraph{Prompt count and optimization setting.}
The oracle terms separate approximation, empirical optimization, and
statistics. These errors play different roles: optimizing the empirical
loss accurately cannot compensate for a class that cannot fit the target.
In particular, small $\xi$ alone is insufficient if the minimum loss
$\alpha_m$ is large. For an $\varepsilon$ statistical contribution in the observed-loss
bound, choose $\eta=\varepsilon/(6C\kappa_{K,m})$ and
$n\ge8\Lambda C\kappa_{K,m}\log[2\mathcal N_\infty(\eta,\mathcal F)/\delta]/\varepsilon$.
The oracle remainder has the same rate up to constants. These are conditional
small-loss rates; entropy is evaluated at the chosen accuracy, and coverage
or boundedness may depend on response length.
The analysis covers a fixed offline collection or a stage with fixed behavior
and reference policies, not arbitrary behavior and reference changes over a
full run. Additional response groups on a fixed set of sampled prompts do not
supply additional independent prompt draws from $\rho$. Independent prompt
draws count toward $n$ even when their values repeat.

\paragraph{Group size.}
With $m=K\ge2$, the multiplier is at most two; with $m=K+1$, it is one.
Thus groups of order $K$ directly support the best-of-$K$ target and can
reduce the required number of fresh prompt groups. The loss, its complexity,
and the temperature bias can also change with $m$, so monotone improvement
is not guaranteed. Total responses are $nm$. Appendix~\ref{sec:group-size-ablation}
studies group size empirically.

\begin{samepage}
\subsection{GGPO gradient, normalization, and filtering}
\label{app:gradient}

\paragraph{Off-policy tilt and on-policy limit.}
By Eq.~\eqref{eq:main-gibbs}, $e^{R_i/\beta}$ is proportional to
$\pi^*(a_i\mid x)/\pi_\theta(a_i\mid x)$. Because responses are sampled from
$\pib$, their normalized weighted empirical distribution converges~to
\[
 q_\theta(a\mid x)
 =\frac{\pib(a\mid x)\pi^*(a\mid x)/\pi_\theta(a\mid x)}
 {\E_{a'\sim\pib}[\pi^*(a'\mid x)/\pi_\theta(a'\mid x)]},
\]
\end{samepage}
the induced tilt in Eq.~\eqref{eq:population-tilt}. Holding $\pib$ fixed gives
\[
 \nabla_\theta\Jbeh
 =-\beta\left(\E_{q_\theta}\nabla_\theta\log\pi_\theta
 -\E_{\pib}\nabla_\theta\log\pi_\theta\right).
\]
When $\pib=\pi_\theta$, $q_\theta=\pi^*$ and the second expectation vanishes.
The objective value then equals the reverse-KL regularized gap, while the
frozen-$\pib$ derivative is the forward-KL/cross-entropy gradient toward $\pi^*$.
Weight concentration is summarized by
$\mathrm{ESS}_q=1/\sum_i q_i^2\in[1,m]$.

Let $\widehat{\mathcal J}_m^{\,g}:=\widehat{\mathcal J}_m(\pi_\theta;x_g,A_g)$, with
$A_g=a^g_{1:m}$, be the loss of group $g$, and let $\bar\ell$ be the mean response length in the
learner minibatch.
The implemented loss is given in Eq.~\eqref{eq:gcpo_loss_norm}.
If $s_i$ is the sequence log likelihood of response $i$, then
$\partial\widehat{\mathcal J}_m^{\,g}/\partial s_i=-\beta(q_i-1/m)$. After applying the
normalization above, its derivative is
$-(q_i-1/m)/(G_{\mathrm{mb}}\bar\ell\sqrt{m-1})$.
Writing the usual token-mean policy-gradient form as
$-c_i/(G_{\mathrm{mb}}m\bar\ell)$ therefore gives the group coefficient
\begin{equation}
 c_i=\frac{m(q_i-1/m)}{\sqrt{m-1}}.
 \label{eq:normalized-gcpo-gradient}
\end{equation}
The factors $1/\beta$ and $1/\bar\ell$ remove the raw $\beta$ prefactor and convert
the group average to a token mean; the factor $1/\sqrt{m-1}$ keeps the
coefficient scale similar to GRPO's standardized advantages.
Thus $\beta$ affects the implemented update through $q$, not through a
remaining global multiplier.

\paragraph{Binary-reward limit and weight concentration.}
Let $\mathcal S$ be the $k$ responses of a mixed-reward group with verifier
reward one, and write
$\Delta_i=\log[\pi_\theta(a_i\mid x)/\piref(a_i\mid x)]$.
For fixed finite $\Delta_i$, as $\beta\to0$ the inter-class logit gap in
Eq.~\eqref{eq:q-logits} dominates, so $q$ concentrates on $\mathcal S$ with
$q_i\to\softmax_{i\in\mathcal S}(-\Delta_i)$ and $q_i\to0$ otherwise. By
Eq.~\eqref{eq:normalized-gcpo-gradient}, every response outside $\mathcal S$ then
has $c_i\to-1/\sqrt{m-1}$, and the signed sum over correct responses is
\[
 \sum_{i\in\mathcal S}c_i\longrightarrow\frac{m-k}{\sqrt{m-1}}.
\]
For GRPO's reward advantages standardized using the sample standard
deviation with variance denominator $m-1$, and omitting the $10^{-6}$
stabilizer, the corresponding advantage sum is $\sqrt{k(m-k)(m-1)/m}$.
For AGRO, equal learner-to-reference log ratios within the group cancel
in leave-one-out centering, giving an advantage sum of $k(m-k)/(m-1)$.
Normalizing each curve to its value at $k=7$ gives
Figure~\ref{fig:group-weight}.

Because allocation inside $\mathcal S$ follows $q$ rather than being uniform, a
correct response can receive $c_i<0$. In this limit that happens exactly when
$(m-1)e^{-\Delta_i}<\sum_{j\in\mathcal S\setminus\{i\}}e^{-\Delta_j}$;
equivalently, writing
$\bar\Delta_{-i}=-\log[(k-1)^{-1}
\sum_{j\in\mathcal S\setminus\{i\}}e^{-\Delta_j}]$, when
$\Delta_i-\bar\Delta_{-i}>\log[(m-1)/(k-1)]$. A lone correct response
($k=1$) can never be penalized this way.

If the learner-to-reference log ratios $\Delta_i$ are equal across the
successful responses $\mathcal S$, the binary-reward limit above yields
\[
 c_i\longrightarrow\frac{1}{\sqrt{m-1}}
 \left(\frac{mr_i}{k}-1\right).
\]
The expression in parentheses is the centered success-normalized advantage
of MaxRL \citep{tajwar2026maxrl}.
MaxRL derives this update from an on-policy maximum-likelihood formulation,
whereas GGPO derives its log-mean-exp loss from KL-regularized duality and
permits behavior--learner mismatch without learner-to-behavior importance
weights. For equally rewarded responses,
\[
 \frac{q_i}{q_j}=\exp\!\left[-(\Delta_i-\Delta_j)\right]
\]
at every $\beta>0$. Thus, reference-dependent weighting persists even at
small temperatures when these log ratios differ.

\paragraph{Reference anchoring and refresh.}\label{app:anchoring}
The log-ratio difference $\Delta_i-\Delta_j$ controls the relative weights
of equally rewarded responses without attenuation by $\beta$.
At an exact policy-copy boundary, $\piref=\pi_\theta$, all $\Delta_i$ vanish,
so these responses receive equal weights. Subsequent learner updates
change the log ratios and favor responses with smaller $\Delta_i$.
This explains how reference refresh resets the within-class weighting.

The reward-only ablation sets $q_i=\softmax_i(r_i/\beta)$ in
Eq.~\eqref{eq:normalized-gcpo-gradient}, retaining the score-gradient update
and normalization. For mixed binary-reward groups, its coefficients approach
the MaxRL form above as $\beta\to0$, without requiring equal $\Delta_i$.
This removes the reference-dependent weighting that persists in full GGPO
at small temperatures.

\paragraph{Mixed-group filtering and normalization.}
The batch-dependent normalization in Eq.~\eqref{eq:gcpo_loss_norm} changes
the expected training loss, but its positive multiplier preserves zeros.
For a fixed prompt $x$ with
$0<\Pr_{a\sim\pib(\cdot\mid x)}[r(x,a)=1]<1$, define
$E_{\mathrm{mix}}=\{\min_i r(x,a_i)=0,\ \max_i r(x,a_i)=1\}$ and
\begin{equation}
 H_m^{\mathrm{mix}}(\pi_\theta;\pib,x)
 :=\E_{a_{1:m}\sim\pib^m}
 [\widehat{\mathcal J}_m(\pi_\theta;x,a_{1:m})\mid E_{\mathrm{mix}}].
 \label{eq:mixed-group-target}
\end{equation}

\begin{proposition}[Zero set under mixed-group filtering]
\label{prop:mixed-group-zero-set}
Suppose $m\ge2$, $\mathcal{A}$ is finite, the verifier reward is deterministic
and binary, and $0<\Pr_{a\sim\pib}[r(x,a)=1]<1$. Then
$H_m^{\mathrm{mix}}(\pi_\theta;\pib,x)=0$ if and only if $\Rth(x,a)$ is constant
on the support of $\pib$. Hence, for each such prompt, the filtered loss has the
same zero set on behavior support as the unfiltered target $H_m$ and the
population objective $\Jbeh$. Positive batch-dependent normalization
preserves this zero set over the retained prompt distribution.
\end{proposition}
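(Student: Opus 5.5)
The plan is to reduce the filtered statement to the unfiltered zero-set characterization in Proposition~\ref{prop:finite-group-target}, using that $\widehat{\mathcal J}_m\ge0$ pointwise and that $E_{\mathrm{mix}}$ has positive probability. Since $\mathcal A$ is finite and $0<p:=\Pr_{a\sim\pib}[r(x,a)=1]<1$, we have $\Pr(E_{\mathrm{mix}})=1-p^m-(1-p)^m>0$ for $m\ge2$. Hence the conditional expectation in Eq.~\eqref{eq:mixed-group-target} is well defined. It vanishes if and only if $\widehat{\mathcal J}_m(\pi_\theta;x,a_{1:m})=0$ for every group $a_{1:m}$ of positive probability in $E_{\mathrm{mix}}$. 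By strict convexity of $\exp$ (equality in the finite Jensen inequality), this holds if and only if all adjusted rewards $R_i$ in every such mixed group coincide.

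For the ``if'' direction: when $\Rth$ is constant on $\operatorname{supp}\pib$, every group drawn from $\pib$ has equal $R_i$. So $\widehat{\mathcal J}_m=0$ almost surely, and in particular $H_m^{\mathrm{mix}}=0$.

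For the ``only if'' direction, I will show that constancy on all mixed groups forces global constancy on the support. Split the support into $S_1=\{a:r(x,a)=1\}$ and $S_0=\{a:r(x,a)=0\}$, both nonempty by the prompt assumption, and take any $a\in S_1$ and $b\in S_0$. The group $(a,b,b,\dots,b)$ has positive probability under $\pib^m$ because $\mathcal A$ is finite and $\pib$ is positive on its support, and this group lies in $E_{\mathrm{mix}}$. Zero loss on it gives $\Rth(a)=\Rth(b)$. Since the pair $(a,b)$ was arbitrary, $\Rth$ takes one value $v$ on all of $S_1$ and the same value $v$ on $S_0$; for instance, for $a,a'\in S_1$, $\Rth(a)=\Rth(b)=\Rth(a')$ through any fixed $b\in S_0$. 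Hence $\Rth$ is constant on the support. This step is the only one with real content. The subtle point it handles is that mixed filtering discards all-correct groups, so same-class comparisons must be routed through an element of the other class. That detour is exactly why both $S_0$ and $S_1$ must be nonempty, and hence why the mixed-probability assumption is needed.

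The remaining claims then follow by combining this with Proposition~\ref{prop:finite-group-target}, which already identifies the zero sets of $H_m$ and $\Jbeh$ as constancy of $\Rth$ on the support of $\pib$. For the normalization claim, the training loss over the retained prompt distribution is a positive (batch-dependent) multiple of an average of nonnegative group losses. It is therefore zero exactly when every retained group loss vanishes almost surely. Applying the per-prompt equivalence to every prompt in the retained distribution shows that the zero set is unchanged.
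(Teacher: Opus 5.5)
Your proof is correct and follows essentially the same route as the paper. Nonnegativity plus the Jensen equality case reduce zero filtered loss to equal adjusted rewards on every positive-probability mixed group, cross-class pairs co-occur in such groups, same-class actions are linked through an opposite-class action, and a positive multiplier preserves zeros; your explicit witness group $(a,b,\dots,b)$ and the check $\Pr(E_{\mathrm{mix}})>0$ simply make precise what the paper states in one line.
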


\begin{proof}
The empirical loss is nonnegative and is zero exactly when all adjusted rewards
in the group agree. Any behavior-supported reward-zero action and reward-one
action can co-occur in a mixed group with positive probability, so a zero
conditional expectation forces their adjusted rewards to agree. Two supported
actions with the same reward can each be connected through any supported
action of the opposite reward. Hence all supported actions have the same
adjusted reward. The converse is immediate. Finally, a positive batch-dependent
multiplier cannot create or remove zeros of a nonnegative minibatch loss.
\end{proof}

\subsection{Reference refresh: exact and imperfect updates}
\label{app:refresh-bound}
We first show how exact Gibbs targets compose across reference updates.
We then prove Proposition~\ref{prop:refresh-loss} by accounting for the
residual error in each round and summing the resulting reward comparisons.

\paragraph{Fixed-reference bias and exact composition.}
For binary reward with reference success probability $s\in(0,1)$, the
zero-loss Gibbs policy succeeds with probability
$s e^{1/\beta}/(1-s+s e^{1/\beta})<1$. Best-of-$K$ from the same reference
succeeds with probability $1-(1-s)^K$, which can be strictly larger.
This gap remains at zero loss for every group size, including the
infinite-group limit. Reference refresh changes the target itself.

\begin{lemma}[Exact reference composition]
\label{lem:exact-refresh-composition}
Suppose the learner, reference, and behavior policies are positive on
$\mathcal A$. In every round $t$, suppose $\pi_t$ has zero population
finite-group loss for $m_t\ge2$ independent responses from $\pibt{t}$,
relative to reference $\pi_{t-1}$. Then temperatures $\beta_t>0$ compose as
$\pi_T(a\mid x)\propto\pi_0(a\mid x)
\exp\{r(x,a)\sum_{t=1}^T\beta_t^{-1}\}$. In particular, for constant
$\beta$ and the best-of-$K$ selection policy $u_K$ from $\pi_0$,
\[
 J_K(\pi_0)-J(\pi_T)\le \frac{\beta\log K}{T}.
\]
\end{lemma}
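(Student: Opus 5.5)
The plan has two parts: first identify each round's output exactly, then bound its reward against best-of-$K$.

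\textbf{Step 1: each round hits its Gibbs target.} In round $t$ the output $\pi_t$ has zero population finite-group loss with $m_t\ge2$. By Proposition~\ref{prop:finite-group-target}, applied with reference $\pi_{t-1}$ and behavior $\pibt{t}$ (all positive on $\mathcal A$), zero $H_{m_t}$ forces the adjusted reward $r-\beta_t\log(\pi_t/\pi_{t-1})$ to be constant on $\mathcal A$. Hence
\[
\pi_t(a\mid x)\propto\pi_{t-1}(a\mid x)e^{r(x,a)/\beta_t}.
\]

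\textbf{Step 2: compose the rounds.} Induct on $t$, absorbing the prompt-dependent normalizers into the proportionality constant. This gives
\[
\pi_T\propto\pi_0\exp\Bigl\{r\sum_{t}\beta_t^{-1}\Bigr\}.
\]
With constant $\beta$ this is the Gibbs policy relative to $\pi_0$ at temperature $\beta/T$.

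\textbf{Step 3: compare with best-of-$K$.} Fix a prompt and let $u_K$ be the law of the best-of-$K$ selection from $\pi_0$, with ties broken by taking the first maximizer. Two facts are needed.
\begin{itemize}
\item The density ratio is bounded: $u_K(a)\le K\pi_0(a)$. The selected action must equal some $A_i$, and a union bound gives $\Pr[\text{selected}=a]\le\sum_i\Pr[A_i=a]=K\pi_0(a)$. Therefore $\KL(u_K\|\pi_0)\le\log K$.
\item By construction, $\E_{u_K}r=J_K(\pi_0;x)$.
\end{itemize}
By Lemma~\ref{lem:scalar-duality} at temperature $\beta/T$, the Gibbs policy $\pi_T$ maximizes $V_{\beta/T}(\cdot\,;x)$ relative to $\pi_0$. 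Since $\KL(\pi_T\|\pi_0)\ge0$,
\[
J(\pi_T;x)\ge V_{\beta/T}(\pi_T;x)\ge V_{\beta/T}(u_K;x)\ge J_K(\pi_0;x)-\frac{\beta}{T}\log K.
\]
Averaging over $X\sim\rho$ yields the claimed bound.

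\textbf{Main obstacle.} The steps are routine; the one to check carefully is the KL bound for $u_K$. It needs the union-bound density argument rather than an explicit formula for $u_K$, which is messy when there are ties or non-binary rewards. The argument holds for general $r\in[0,1]$.
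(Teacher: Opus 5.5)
Your proposal is correct and follows the paper's proof essentially step for step. Proposition~\ref{prop:finite-group-target} identifies each round's output as the Gibbs update $\pi_t\propto\pi_{t-1}e^{r/\beta_t}$, iterating composes the temperatures, and variational optimality of $\pi_T$ at temperature $\beta/T$ relative to $\pi_0$ gives $J(u)-J(\pi_T)\le(\beta/T)\KL(u\|\pi_0)$ for any comparator. The paper then closes with the same observation you use: a selected action must appear among the $K$ draws, so $u_K\le K\pi_0$, hence $\KL(u_K\|\pi_0)\le\log K$, and $\E_{u_K}r=J_K(\pi_0)$.
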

\begin{proof}
Under positivity on $\mathcal A$, Proposition~\ref{prop:finite-group-target}
identifies zero population finite-group loss with the Gibbs policy. With
reference $\pi_{t-1}$ in round $t$, Eq.~\eqref{eq:main-gibbs} therefore gives
$\pi_t(a\mid x)\propto\pi_{t-1}(a\mid x)e^{r(x,a)/\beta_t}$.
For constant temperature, iterating yields Eq.~\eqref{eq:refresh-temperature}. More generally, temperatures
$\beta_t>0$ give effective temperature
$\beta_{\rm eff}=(\sum_{t=1}^T\beta_t^{-1})^{-1}$.
Write $\mathcal K(u,v)=\E_X\KL(u(\cdot\mid X)\|v(\cdot\mid X))$.
For constant $\beta$, the final policy maximizes
$J(\pi)-(\beta/T)\mathcal K(\pi,\pi_0)$, so for every comparator $u$,
\[
 J(u)-J(\pi_T)\le (\beta/T)\mathcal K(u,\pi_0).
\]
Let $u_K$ be the distribution of the selected response among $K$ independent
draws from $\pi_0$, using a fixed tie-breaking rule. Selection of action $a$
requires at least one draw of $a$, hence $u_K(a\mid x)\le K\pi_0(a\mid x)$.
Consequently $J(u_K)=J_K(\pi_0)$ and $\mathcal K(u_K,\pi_0)\le\log K$,
which proves the reward bound.
\end{proof}

\paragraph{Imperfect reference updates.}
\begin{proposition}[Reference refresh with imperfect rounds]
\label{prop:refresh-loss}
Let $r\in[0,1]$, $m\ge2$, and $u_K$ be the best-of-$K$ selection policy
from a positive initial policy $\pi_0$. For each positive round output $\pi_t$ and
positive behavior $\pibt{t}$, assume uniformly over prompts and actions that
\begin{equation}
 \frac{\pi_t(a\mid x)}{\pibt{t}(a\mid x)}\le C,
 \qquad \frac{u_K(a\mid x)}{\pibt{t}(a\mid x)}\le C,
 \qquad
 \operatorname{osc}_a\!\left\{
 \frac{r(x,a)}{\beta}
 -\log\frac{\pi_t(a\mid x)}{\pi_{t-1}(a\mid x)}\right\}\le D.
 \label{eq:refresh-population-assumptions}
\end{equation}
Here $\operatorname{osc}_a$ denotes maximum minus minimum over responses
$a$ at a fixed prompt. Define the average output policy and average population loss by
\[
 \bar\pi_T=\frac1T\sum_{t=1}^T\pi_t,
 \qquad
 \bar L_T=\frac1T\sum_{t=1}^T
 \E_X H_m(\pi_t;\pibt{t},X;\pi_{t-1}).
\]
Then
\begin{equation}
 J_K(\pi_0)-J(\bar\pi_T)
 \le\frac{\beta\log K}{T}
       +\sqrt{\frac{4\beta C e^D}{1-1/m}\,\bar L_T}.
 \label{eq:refresh-loss}
\end{equation}
\end{proposition}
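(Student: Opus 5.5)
The plan is a mirror-descent telescoping argument over the $T$ rounds, with comparator $u=u_K$. Each round's per-action residual is controlled through the raw group loss.

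\textbf{Step 1: one-round inequality.} For round $t$, define the per-action residual
\[
 e_t(a\mid x)=\frac{r(x,a)}{\beta}-\log\frac{\pi_t(a\mid x)}{\pi_{t-1}(a\mid x)},
\]
which is exactly $\Rth/\beta$ for round $t$ with reference $\pi_{t-1}$. Then $\log\pi_t=\log\pi_{t-1}+r/\beta-e_t$. For any comparator $u$, at a fixed prompt, this gives
\[
 \KL(u\|\pi_{t-1})-\KL(u\|\pi_t)=\frac{\E_u r}{\beta}-\E_u e_t .
\]
Also $0\le\KL(\pi_t\|\pi_{t-1})=\E_{\pi_t}r/\beta-\E_{\pi_t}e_t$. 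Subtracting the second relation from the first yields
\[
 \frac{\E_u r-\E_{\pi_t}r}{\beta}\le \KL(u\|\pi_{t-1})-\KL(u\|\pi_t)+\E_u e_t-\E_{\pi_t}e_t .
\]
The left side is unchanged if $e_t$ is replaced by $e_t-\nu$ for any constant $\nu$.

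\textbf{Step 2: telescope over rounds.} Summing over $t$ and averaging over prompts telescopes the KL terms to at most $\mathcal K(u,\pi_0)$. With $u=u_K$, the proof of Lemma~\ref{lem:exact-refresh-composition} gives $\mathcal K(u_K,\pi_0)\le\log K$ and $J(u_K)=J_K(\pi_0)$. Linearity gives $J(\bar\pi_T)=T^{-1}\sum_tJ(\pi_t)$. Together these yield
\[
 J_K(\pi_0)-J(\bar\pi_T)\le\frac{\beta\log K}{T}+\frac{\beta}{T}\sum_t\E_X\bigl[\E_{u_K}\tilde e_t-\E_{\pi_t}\tilde e_t\bigr],
\]
where $\tilde e_t=e_t-\E_{\pibt{t}}e_t$ is centered under the behavior policy.

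\textbf{Step 3: control the error term by behavior variance (main obstacle).} Write $b=\pibt{t}$. Express the difference as $\E_b[\tfrac{u_K-\pi_t}{b}\tilde e_t]$ and apply Cauchy--Schwarz. Both ratios lie in $[0,C]$, so
\[
 \Bigl(\tfrac{u_K-\pi_t}{b}\Bigr)^2\le C\,\tfrac{|u_K-\pi_t|}{b},
\]
whose $b$-expectation is at most $2C$. Hence the bracket is at most $\sqrt{2C\,\Var_b(e_t)}$.

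It remains to lower-bound the raw group loss by variance. With $z_i=e_t(a_i)$, the single-group loss divided by $\beta$ equals $\min_\nu m^{-1}\sum_i h(z_i-\nu)$, exactly as in Proposition~\ref{prop:population-duality}. The minimizing $\nu$ lies within the range of the $z_i$, so every $|z_i-\nu|\le D$. On this range I will use $h(z)\ge\tfrac12e^{-D}z^2$; this holds because $h(z)\ge z^2/2$ for $z\ge0$ and $h(z)\ge \tfrac12 z^2e^{z}$ for $z<0$. Also $m^{-1}\sum_i(z_i-\nu)^2$ is at least the biased sample variance, whose expectation is $(1-1/m)\Var_b(e_t)$. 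Therefore
\[
 \Var_b(e_t)\le\frac{2e^D H_m(\pi_t;b,x;\pi_{t-1})}{\beta(1-1/m)} .
\]
Substituting, each round's error term is at most $\beta^{-1}\sqrt{4\beta Ce^D H_m/(1-1/m)}$ pointwise in $x$.

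\textbf{Step 4: average.} Apply Jensen's inequality for the concave square root, first over prompts and then over rounds. This converts the per-round sum into $\sqrt{4\beta Ce^D\bar L_T/(1-1/m)}$ and gives Eq.~\eqref{eq:refresh-loss}.

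I expect the delicate points to be in Step 3. The constant $4$ must come from the $2C$ bound on the squared ratio difference, not from handling the $u_K$ and $\pi_t$ terms separately, which would give $8$. The curvature bound $h\ge\tfrac12e^{-D}z^2$ must be applied with the oscillation assumption, which holds uniformly over all actions, so that it covers every sampled group.
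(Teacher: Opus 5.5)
Your proposal is correct and follows essentially the same route as the paper: the per-round identity relating $J(u_K)-J(\pi_t)$ to telescoping KL terms plus $\E_{u_K}R_t-\E_{\pi_t}R_t$, the bound $\mathcal K(u_K,\pi_0)\le\log K$, a behavior-centered Cauchy--Schwarz step with the second-moment bound $2C$, the variance lower bound $H_m\ge(1-1/m)e^{-D}\Var_{b}(R_t)/(2\beta)$, and Jensen over prompts and then rounds. The one difference is minor: you derive the variance bound from the profiled form $\min_\nu m^{-1}\sum_i h(z_i-\nu)$ together with $h(z)\ge\tfrac12 e^{-D}z^2$ on $|z|\le D$, whereas the paper integrates the log-mean-exp curvature $g''(s)\ge e^{-D}\Var_{\rm group}(z)$ over $s\in[0,1]$; both give the same constant, and a small wording fix is that in Step 1 it is the error term $\E_u e_t-\E_{\pi_t}e_t$, not the left side, that is unchanged when $e_t$ is shifted.
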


For round $t$, let $R_t=r-\beta\log(\pi_t/\pi_{t-1})$ and
$L_t(\pi_t)=\E_XH_m(\pi_t;\pibt{t},X;\pi_{t-1})$.
Use the common coverage, residual-range, and group-size bounds $C$, $D$,
and $m$ from the proposition.
\begin{proof}[Proof of Proposition~\ref{prop:refresh-loss}]
First, an adjusted reward $R$ with $\operatorname{osc}(R/\beta)\le D$
obeys the finite-group variance bound
\begin{equation}
 H_m\ge\frac{(1-1/m)e^{-D}}{2\beta}\operatorname{Var}_{\pib}(R).
 \label{eq:refresh-variance}
\end{equation}
For a realized group, set $z_i=R_i/\beta$ and
$g(s)=\log(m^{-1}\sum_i e^{sz_i})$. For $s\in[0,1]$, the normalized weight
$e^{sz_i}/\sum_j e^{sz_j}$ of every group member is at least $e^{-D}/m$.
The minimum-mean-square characterization of variance therefore gives
$g''(s)=\operatorname{Var}_s(z)\ge e^{-D}\operatorname{Var}_{\rm group}(z)$.
Integrating with weight $1-s$ bounds the group loss below by
$\beta e^{-D}\operatorname{Var}_{\rm group}(z)/2$. Its expectation is
Eq.~\eqref{eq:refresh-variance}, since empirical variance with denominator $m$
has expectation $(1-1/m)\operatorname{Var}_{\pib}(R/\beta)$.

Let $\bar R_t=\E_{\pibt{t}}R_t$. Since both density ratios have
$\pibt{t}$-mean one,
\[
 \E_{u_K}R_t-\E_{\pi_t}R_t
 =\E_{\pibt{t}}\!\left[
 \left(\frac{u_K}{\pibt{t}}-\frac{\pi_t}{\pibt{t}}\right)
 (R_t-\bar R_t)\right].
\]
Cauchy--Schwarz and the coverage assumptions in Proposition~\ref{prop:refresh-loss} then give pointwise
\[
 \E_{u_K}R_t-\E_{\pi_t}R_t
 \le\sqrt{2C\operatorname{Var}_{\pibt{t}}(R_t)}.
\]
Indeed, $\E_{\pibt{t}}[(u_K/\pibt{t}-\pi_t/\pibt{t})^2]\le 2C$, because both
densities integrate to one and their cross product is nonnegative.
Using Eq.~\eqref{eq:refresh-variance} and Jensen over prompts bounds this
residual difference by
$\{4\beta C e^DL_t(\pi_t)/(1-1/m)\}^{1/2}$.
The exact identity
\begin{align*}
 J(u_K)-J(\pi_t)
 ={}&\beta[\mathcal K(u_K,\pi_{t-1})-\mathcal K(u_K,\pi_t)
                           -\mathcal K(\pi_t,\pi_{t-1})]\\
 &+\E_X[\E_{u_K}R_t-\E_{\pi_t}R_t]
\end{align*}
then telescopes over rounds. Dropping nonnegative terminal and step KL
terms and using $\mathcal K(u_K,\pi_0)\le\log K$ yields the population
inequality
\[
 J_K(\pi_0)-J(\bar\pi_T)\le\frac{\beta\log K}{T}
 +\frac1T\sum_{t=1}^T
 \sqrt{\frac{4\beta C e^D}{1-1/m}L_t(\pi_t)}.
\]
Finally, $T^{-1}\sum_{t=1}^T\sqrt{L_t(\pi_t)}\le\sqrt{\bar L_T}$ gives
Eq.~\eqref{eq:refresh-loss}.

\end{proof}

\subsection{Relation to AGRO: approximation and weighting}\label{app:variance}
This calculation supports the comparison in Section~\ref{sec:related-work}:
AGRO is the local quadratic approximation to the population contrastive loss when
adjusted rewards have small spread. The finite-group comparison below also
explains the role of exponential weighting and the slack in a uniform
variance upper bound. Neither comparison globally orders the unscaled losses.

For the local relation to AGRO \citep{tang2025agro}, let
$\bar R_\theta=\E_{\pib}\Rth$ and $\tilde R=\Rth-\bar R_\theta$. The population contrastive loss can be written as the centered log moment
\begin{equation}
 \Jbeh=\beta\log\E_{\pib} e^{\tilde R/\beta}.
 \label{eq:centered-log-moment}
\end{equation}
Because $\E_{\pib}\tilde R=0$, Taylor expansion as
$\lVert\tilde R\rVert_\infty/\beta\to0$ gives
\begin{equation}
 \Jbeh
 =\frac{1}{2\beta}\E_{\pib}\tilde R^2
 +O\!\left(\frac{\lVert\tilde R\rVert_\infty^3}{\beta^2}\right)
 =\frac{1}{2\beta}\operatorname{Var}_{\pib}[\Rth]
 +O\!\left(\frac{\lVert\tilde R\rVert_\infty^3}{\beta^2}\right).
 \label{eq:agro-local-relation}
\end{equation}
AGRO's off-policy population loss is
$\mathcal L_{\mathrm{AGRO}}=\tfrac12\operatorname{Var}_{\pib}[\Rth]$, so locally
$\Jbeh=\mathcal L_{\mathrm{AGRO}}/\beta$ up to higher-order terms. Equation~(4) of
\citet{tang2025agro} gives the leave-one-out finite-group stochastic gradient
of this variance objective. Our implementation divides out its exterior
$\beta$ factor and averages over response tokens
(Eq.~\eqref{eq:agro-surrogate}). The corresponding
coefficient relation follows by defining the
exponentially tilted distribution
\begin{equation}
 \frac{dq_\theta}{d\pib}
 =\frac{e^{\Rth/\beta}}{\E_{\pib} e^{\Rth/\beta}}
 =1+\frac{\tilde R}{\beta}
 +O\!\left(\frac{\lVert\tilde R\rVert_\infty^2}{\beta^2}\right).
 \label{eq:agro-coefficient-taylor}
\end{equation}
Thus the raw GGPO coefficient multiplying the policy score is
\begin{equation}
 -\beta\left(\frac{dq_\theta}{d\pib}-1\right)
 =-\tilde R
 +O\!\left(\frac{\lVert\tilde R\rVert_\infty^2}{\beta}\right).
 \label{eq:agro-gradient-coefficient}
\end{equation}
The leading term $-\tilde R$ is exactly the score-function coefficient obtained
by differentiating the normalized AGRO loss $\mathcal L_{\mathrm{AGRO}}/\beta$, holding the
behavior distribution fixed. Hence the scaled AGRO loss is the local quadratic
approximation to the raw population GGPO loss, and its centered linear
coefficient is the first-order approximation to the raw GGPO coefficient when
$\lVert\tilde R\rVert_\infty/\beta\to0$. Our default training setting uses a smaller
temperature relative to verifier-reward differences, so we treat AGRO as a
distinct objective rather than an approximation along the observed training
trajectory.

\begin{remark}[Finite-group weighting and a variance upper bound]
\label{rem:agro-group-weighting}
For a fixed group of size $m\ge2$, write $\bar R=m^{-1}\sum_iR_i$ and
$A_m=(2m)^{-1}\sum_i(R_i-\bar R)^2$, the half-variance with denominator $m$.
Define $q_{s,i}=e^{sR_i/\beta}/\sum_j e^{sR_j/\beta}$ for $s\in[0,1]$.
The raw GGPO loss has the exact representation and conservative bound
\begin{equation}
 \widehat{\mathcal J}_m
 =\frac1\beta\int_0^1(1-s)\operatorname{Var}_{q_s}(R)\,ds
 \le\frac{m}{\beta}A_m.
 \label{eq:agro-group-weighting}
\end{equation}
Indeed, for $g(s)=\log(m^{-1}\sum_i e^{sR_i/\beta})$,
$g''(s)=\operatorname{Var}_{q_s}(R)/\beta^2$; Taylor's integral formula gives
the equality. Also
$\operatorname{Var}_{q_s}(R)\le\sum_iq_{s,i}(R_i-\bar R)^2
\le\sum_i(R_i-\bar R)^2=2mA_m$, which gives the inequality.
Locally, as adjusted-reward spread divided by $\beta$ vanishes,
$\widehat{\mathcal J}_m\sim A_m/\beta$ for nonconstant groups. Thus discarding
the adaptive weights yields an upper bound whose ratio to the local loss
approaches $m$. This is slack in this particular bound, not an unavoidable
factor for every variance-based analysis or a global ordering of
$\widehat{\mathcal J}_m$ and $A_m/\beta$.
For the unbiased half-variance
$A_m^{\mathrm{unb}}=mA_m/(m-1)$, the same bound reads
$\widehat{\mathcal J}_m\le(m-1)A_m^{\mathrm{unb}}/\beta$.
These are raw objective comparisons, separate from the normalization of the
implemented updates.

Let
$\mathcal V_{\mathrm{AGRO}}(\pi_\theta)
:=\E_X[\tfrac12\operatorname{Var}_{\pib}(\Rth)]$
be the prompt-averaged population half-variance. Taking expectations gives
$\E A_m^{\mathrm{unb}}=\mathcal V_{\mathrm{AGRO}}(\pi_\theta)$.
Proposition~\ref{prop:gcpo-minimizer} therefore yields, under its conditions,
\begin{equation}
\begin{aligned}
 L_m(\pi_\theta)
 &\le\frac{m-1}{\beta}\mathcal V_{\mathrm{AGRO}}(\pi_\theta),\\
 J_K(\pib)-J(\pi_\theta)
 &\le\kappa_{K,m}\left\{\beta(\log m+\Gamma)
 +\frac{C(m-1)}{\beta}\mathcal V_{\mathrm{AGRO}}(\pi_\theta)\right\}.
\end{aligned}
\label{eq:agro-reward-bound}
\end{equation}
\end{remark}

\begingroup
\section{Formal final-policy guarantee and proof}
\label{app:integration-proof}
\subsection{Formal setup and final-policy guarantee}
\label{sec:integration-formal-setup}
The population and finite-group bounds in
Eqs.~\eqref{eq:main-population-coverage} and~\eqref{eq:main-finite-group-bon}
use sequence coverage, which can grow rapidly with response length.
We relate behavior and learner distributions through moments of their
response likelihood ratio. Conditional token log moments provide a
sufficient bound on these coverage moments. Together with
local reference control, this yields a finite-group loss calibration and,
after repeated reference updates, a finite-data guarantee for the final policy.

Proposition~\ref{prop:refresh-loss} controls an average output using
population losses and behavior coverage of the learner and best-of-$K$
comparator. Here we assume binary rewards and initial correct-response
coverage. Each exact Gibbs update multiplies correct-response odds by
$e^{1/\beta}$; the theorem quantifies how errors in fitting these updates
affect the final reward.

\paragraph{Reference-update rounds.}
Responses have at most $H$ tokens (pad with a deterministic terminal token
when necessary). Write $y$ for a complete response and $a$ for a token.
For a policy $\pi$, $\pi_h(a\mid s)$ is its token conditional at position
$h$, where a legal history $s=(x,y_{<h})$ consists of a prompt and an
allowed response prefix. Each round holds the reference fixed while fitting
a new policy.
In round $t$, set the reference to $\pi_{t-1}$ and freeze
behavior $\pibt{t}$ and a candidate class $\Pi_t$ before drawing data.
Draw $n$ independent prompt groups from $\rho$, each with $m\ge2$ independent
responses under $\pibt{t}$. Minimize the raw loss in
Eq.~\eqref{eq:empirical_group_loss}, with reference $\pi_{t-1}$, over
$\Pi_t$ to the empirical accuracy specified below. Denote the output by
$\pi_t$ and use it as the next reference. Groups are unfiltered and fresh across rounds;
data may be reused within a round.

\begin{assumption}[Local reference control]
\label{ass:integration-local}
Rewards satisfy $r(x,y)\in\{0,1\}$, and all policies are positive on the
same finite response support. The temperature
satisfies $\beta\ge2H+1$, and every candidate in $\Pi_t$ satisfies
\begin{equation}\label{eq:integration-local}
 \left|\log\frac{\pi_h(a\mid s)}{\pi_{t-1,h}(a\mid s)}\right|
 \le\frac1\beta\qquad(h=1,\ldots,H)
\end{equation}
at every legal history $s$ and token $a$, in every round $t$.
\end{assumption}
The local bound limits variation in adjusted rewards. Together with
$\beta\ge2H+1$, it keeps the group loss in $[0,\beta]$, allowing us to
estimate its expectation from finitely many groups.

The exact KL-regularized optimum for reference $\mu=\pi_{t-1}$ is
$p_\mu^*(y\mid x)\propto\mu(y\mid x)e^{r(x,y)/\beta}$.
Its conditional normalizer $Z(s)=\E_\mu[e^{r/\beta}\mid s]$ lies in
$[1,e^{1/\beta}]$ because rewards are binary. Hence
\[
 \frac{p_{\mu,h}^*(a\mid s)}{\mu_h(a\mid s)}
 =\frac{Z(sa)}{Z(s)}\in[e^{-1/\beta},e^{1/\beta}],
 \qquad
 \left|\log\frac{p_{\mu,h}^*(a\mid s)}{\mu_h(a\mid s)}\right|
 \le\frac1\beta.
\]
Thus each round's optimum satisfies the constraint; the final comparator
$\pi_\star$ need not, since repeated updates can move far from the
initial policy. The errors below allow approximate fitting within the
constrained class. We can also relax the token bound to
$(1+\gamma)/\beta$ for $\gamma\ge0$ if $\beta\ge1+2H(1+\gamma)$;
Appendix~\ref{sec:integration-local-relaxation} gives the details,
including the revised behavior-lag bound.

The local constraint also motivates the practical token clipping in
Eq.~\eqref{eq:integration-optional-clipping}. That variant limits token
log-ratio contributions to the response coefficients, with some bias,
but leaves the policy class unconstrained. The theorem therefore analyzes
the raw loss under Eq.~\eqref{eq:integration-local}, whereas
Section~\ref{sec:token-clipping-experiment} evaluates clipping and
Appendix~\ref{app:token-clipping} specifies its implementation.

\begin{definition}[Round losses]
\label{def:integration-losses}
For reference $\pi_{t-1}$ and a group $g=(x,y_{1:m})$, let
$\ell_t(\pi;g)$ be the raw group loss in
Eq.~\eqref{eq:empirical_group_loss}. Define its population and empirical
averages by
\[
 L_t(\pi):=\E_{X\sim\rho}H_m(\pi;\pibt{t},X;\pi_{t-1}),\qquad
 \widehat L_t(\pi):=\frac1n\sum_{j=1}^n\ell_t(\pi;g_j),
\]
where $g_1,\ldots,g_n$ are the sampled groups in round $t$.
\end{definition}

Estimating the loss of a single fixed policy is not enough, because the
returned policy is chosen using the same data. The finite-data guarantees
therefore depend on the complexity of the group-loss functions. We measure
this complexity using the following uniform covering number.
\begin{definition}[Uniform loss covering number]
\label{def:integration-covering-number}
For a family $\mathcal F$ of real-valued functions on prompt-response groups,
define
\begin{equation}\label{eq:integration-cover-metric}
 \|f-f'\|_\infty:=\sup_g|f(g)-f'(g)|.
\end{equation}
The uniform $L_\infty$ covering number $\mathcal N_\infty(\eta,\mathcal F)$
is the smallest cardinality of a subset $\mathcal F_\eta\subseteq\mathcal F$
such that, for every $f\in\mathcal F$, some $f'\in\mathcal F_\eta$ satisfies
$\|f-f'\|_\infty\le\eta$. Set $\mathcal N_\infty=\infty$ if no finite
cover exists. The supremum ranges over all possible groups.
\end{definition}
A uniform cover therefore controls approximation even on groups absent
from the sampled data, which allows us to bound finite-data error.

\begin{assumption}[Representation, optimization, and loss complexity]
\label{ass:integration-errors}
Fix nonnegative bounds $\alpha$, $\xi$, and $\eta$, and an integer $M\ge1$.
In every round,
\begin{equation}\label{eq:integration-error-assumptions}
\begin{aligned}
 \inf_{\pi\in\Pi_t}L_t(\pi)
 &\le\alpha &&\text{(representation)},\\
 \widehat L_t(\pi_t)-\inf_{\pi\in\Pi_t}\widehat L_t(\pi)
 &\le\xi &&\text{(optimization)},\\
 \mathcal N_\infty(\eta,\{\ell_t(\pi;\cdot):\pi\in\Pi_t\})
 &\le M &&\text{(loss cover)}.
\end{aligned}
\end{equation}
The candidate class and cover are chosen before sampling in each round.
\end{assumption}

\paragraph{Relation to approximation assumptions in prior work.}
Bounds on approximation and estimation errors also appear in compatible
function-approximation analyses of policy gradients
\citep[Section~6]{agarwal2021pg} and inexact policy mirror descent
\citep[Section~3]{yuan2023loglinear}. Closure under policy improvement is
another related structural condition \citep{bhandari2024global}.
Here we bound population raw-loss representation error and achieved
empirical optimization error; these are distinct from the loss-cover
condition used for generalization. The Gibbs target satisfying the token
constraint does not by itself place it in a chosen model class.
For a tabular class containing that target,
$r-\beta\log(p_\mu^*/\mu)=\beta\log Z_\mu(x)$ is constant over responses,
so every group loss is zero and the representation bound can use
$\alpha=0$. Loss complexity and achieved empirical accuracy remain separate
requirements. \citet[Remark~24]{agarwal2021pg} give linear-MDP examples of
zero transfer bias for their compatible approximation error; that statement
alone does not establish zero raw-loss representation error here.

\begin{definition}[Combined approximation error]
\label{def:integration-error}
Define
\begin{equation}\label{eq:integration-approximation}
 \varepsilon_A:=4\alpha+2\xi+9\eta.
\end{equation}
\end{definition}
The three components describe different sources of error. Here $\alpha$
describes how well the candidate class can fit the Gibbs target in population
loss, and $\xi$ describes the accuracy of empirical minimization. The cover error $\eta$ allows the finite-data argument to
apply to an infinite class. For a finite class, use all its loss functions
as centers, set $\eta=0$, and let $M$ bound its cardinality.

Increasing the sample size alone does not remove these errors. The case $\varepsilon_A=0$ requires arbitrarily small attainable
population loss, exact empirical minimization, and no loss-cover error.

\paragraph{Behavior mismatch.}
The loss is measured under behavior, whereas reward is evaluated under the
returned learner. Define the full-response ratio
$W_t(y\mid x)=\pi_t(y\mid x)/\pibt{t}(y\mid x)$ and the uniform coefficients
\begin{equation}\label{eq:integration-coverage-moment}
\begin{aligned}
 \mathcal C_p&=\operatorname*{ess\,sup}_{t,x,\,\mathrm{training}}
 \left(\E_{\pibt{t}(\cdot\mid x)}W_t^p\right)^{1/(p-1)},
 &&2\le p<\infty,\\
 \mathcal C_\infty&=\operatorname*{ess\,sup}_{t,x,y,\,\mathrm{training}}W_t(y\mid x).
\end{aligned}
\end{equation}
The supremum includes all training randomness and the data-dependent
returned policy. A uniform bound over the candidate class before sampling
is sufficient but stronger than necessary. These coefficients describe
distributional coverage, separately from the loss-cover size $M$;
$\log\mathcal C_p$ is the uniform order-$p$ R\'enyi divergence
\citep{vanerven2014renyi}.

A tokenwise sufficient condition uses
$b_{t,h}=\pi_{\mathrm{beh},t,h}(\cdot\mid s_h)$ and
$w_{t,h}(a)=\pi_{t,h}(a\mid s_h)/b_{t,h}(a)$, where
$s_h=(x,y_{<h})$. Let $V_{\log}<\infty$ be a deterministic bound such that
\begin{equation}\label{eq:integration-mismatch}
 \operatorname*{ess\,sup}_{t,\,\mathrm{training},\,x,y}
 \sum_{h=1}^H\E_{a\sim\pi_{t,h}(\cdot\mid s_h)}
       [(\log w_{t,h}(a))^2]\le V_{\log}.
\end{equation}
The sum follows one supported response path before taking the worst case
over paths, prompts, rounds, and training randomness. For a chosen $0<\nu\le1$, the additional bound
\begin{equation}\label{eq:integration-token-cap}
 |\log w_{t,h}(a)|\le\min\{\log2,3\nu/2\}
\end{equation}
uniformly over rounds, legal histories, actions, and training randomness
implies, by Lemma~\ref{lem:integration-log-moments},
\begin{equation}\label{eq:integration-bernstein-coefficient}
 \mathcal C_{1+1/\nu}\le c_\nu,
 \qquad c_\nu:=\exp((1+\nu^{-1})V_{\log}).
\end{equation}
This bound holds for any finite $V_{\log}$, with its size entering
through $c_\nu$. The learner-to-behavior token condition is separate
from the learner-to-reference constraint in
Assumption~\ref{ass:integration-local}. Both conditions are used in
the analysis; the loss contains no learner-to-behavior importance weights.

Finally, the initial policy needs to assign enough probability to correct
responses for repeated Gibbs updates to reach the desired reward. Fix a
comparator policy $\pi_\star$ and use the following coverage condition.
\begin{assumption}[Initial correct-response coverage]
\label{ass:integration-coverage}
There is a constant $C_\star\ge1$ such that
\begin{equation}\label{eq:integration-initial-coverage}
 \Pr_{y\sim\pi_\star(\cdot\mid x)}[r(x,y)=1]
 \le C_\star\Pr_{y\sim\pi_0(\cdot\mid x)}[r(x,y)=1]
 \qquad\text{for every prompt $x$}.
\end{equation}
\end{assumption}
Thus $C_\star$ bounds how much more often the
comparator can succeed than the initialization. Best-of-$K$ sampling
from $\pi_0$ satisfies this condition with $C_\star=K$: the probability
of at least one correct response among $K$ independent draws is at most
$K$ times the single-response success probability.

\begin{theorem}[Final reward and finite-data convergence]
\label{thm:integration-convergence}
Suppose Assumptions~\ref{ass:integration-local}, \ref{ass:integration-errors},
and~\ref{ass:integration-coverage} hold, and use the sampling procedure
above. Fix
$\varepsilon,\delta\in(0,1)$ and set
\begin{equation}\label{eq:integration-budget}
 T=\left\lceil\beta\log\frac{6C_\star}{\varepsilon}\right\rceil,
 \qquad z=\log(2TM/\delta),\qquad X=C_\star T^2/\varepsilon.
\end{equation}
For a fixed real $2\le p<\infty$ with
$\mathcal C_p<\infty$, let $\nu=1/(p-1)$ and choose
\begin{equation}\label{eq:integration-moment-budget}
 n\ge\left\lceil\max\left\{1,768\pi^2zX,
       24z(32\pi^2X)^{1+\nu}\mathcal C_p\right\}\right\rceil.
\end{equation}
With probability at least $1-\delta$ over the sampled training groups,
\begin{equation}\label{eq:integration-final}
 J(\pi_\star)-J(\pi_T)\le\varepsilon+
 \frac{2\pi^2C_\star T^2}{\beta}
 \left[4\varepsilon_A+
 4\beta^{1/p}(\mathcal C_p^{p-1}-1)^{1/p}
                 \varepsilon_A^{1-1/p}\right].
\end{equation}
Here $\varepsilon_A=4\alpha+2\xi+9\eta$ is defined in
Eq.~\eqref{eq:integration-approximation}, and $J$ is expected
single-response reward. Under Eqs.~\eqref{eq:integration-mismatch}
and~\eqref{eq:integration-token-cap}, one may replace $\mathcal C_p$ by
$c_\nu$ in both the budget and fitting residual. Choosing the smallest
permitted integer gives
\[
 n=O(zX)+O(zc_\nu X^{1+\nu}),\qquad 0<\nu\le1,
\]
with absolute constants uniformly in $\nu$: the two terms can use
$768\pi^2$ and $24(32\pi^2)^2$, respectively.

Each run uses $Tn$ fresh prompt groups and $mTn$ scored responses.
The scalar $\pi$ in the constants is the circle constant.
\end{theorem}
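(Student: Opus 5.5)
The plan has three parts. First, convert each round's small raw loss into a small per-round KL error relative to the exact Gibbs target. Then propagate these errors through $T$ rounds using the odds-multiplication structure of binary rewards. Finally, make everything simultaneous over rounds with a union bound.

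\emph{Per-round oracle step.} Under Assumption~\ref{ass:integration-local} with $\beta\ge2H+1$, every candidate's adjusted reward has oscillation at most $1+2H\le\beta$. Hence the raw group loss lies in $[0,\beta]$, and we may take $\Lambda=\beta$. We apply Proposition~\ref{prop:gcpo-oracle} in round $t$, conditional on the preceding rounds, since the class, behavior and reference are frozen before sampling. Setting $z=\log(2TM/\delta)$ and taking a union bound over the $T$ rounds gives, with probability $1-\delta$ and simultaneously for all $t$,
\[
L_t(\pi_t)\le 4\alpha+2\xi+3(4\beta z/n+3\eta)=\varepsilon_A+12\beta z/n .
\]

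\emph{Loss-to-KL transfer under moment coverage.} This is the step I expect to be the main obstacle: the cited Lemma~\ref{lem:integration-moment-transfer}, which I would prove here. The target is an on-learner quantity, namely the per-round regularized gap $\beta\KL(\pi_t\|p^*_{\pi_{t-1}})=\Psi_x(\pi_t,\nu)$ minimized over $\nu$ (Lemma~\ref{lem:scalar-duality}). That quantity equals $\beta\E_{\pi_t}h((R_t-\nu)/\beta)$, whereas the loss controls the corresponding behavior expectation.

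I would first use the finite-group variance lower bound from the proof of Proposition~\ref{prop:refresh-loss}. With oscillation $D\le1$ this gives $\operatorname{Var}_{\pibt{t}}(R_t)\lesssim\beta L_t$. Next, since $|R_t-\nu|/\beta\le1$, the function $h$ is comparable to a quadratic, so $\beta\E_{\pibt t}h\asymp\operatorname{Var}/\beta\lesssim L_t$. To transfer to $\pi_t$, I would write $\E_{\pi_t}f=\E_{\pib}f+\E_{\pib}[(W_t-1)f]$ with $0\le f\le c$ and apply Hölder with exponents $p$ and $p/(p-1)$:
\[
\E_{\pib}[(W_t-1)f]\le \|W_t-1\|_p\,\|f\|_{p/(p-1)}\le(\mathcal C_p^{p-1}-1)^{1/p}\,c^{1/p}\,(\E_{\pib}f)^{1-1/p}.
\]
This produces exactly the $\beta^{1/p}(\mathcal C_p^{p-1}-1)^{1/p}\varepsilon_A^{1-1/p}$ shape of Eq.~\eqref{eq:integration-final}. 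The statistical part $12\beta z/n$ must also be propagated through the same transfer. Using the $n$ budget in Eq.~\eqref{eq:integration-moment-budget}, each round then has
\[
\beta\KL(\pi_t\|p^*_{\pi_{t-1}})\le 4\varepsilon_A+4\beta^{1/p}(\mathcal C_p^{p-1}-1)^{1/p}\varepsilon_A^{1-1/p}+\beta\varepsilon/(2\pi^2 C_\star T^2)\cdot(\text{const}).
\]
Substituting $c_\nu$ for $\mathcal C_p$ then follows from Eq.~\eqref{eq:integration-bernstein-coefficient}.

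\emph{Propagation over rounds.} Fix a prompt. Let $s_t=\Pr_{\pi_t}[r=1]$ and track the log-odds $\lambda_t=\log(s_t/(1-s_t))$. The exact update sets $\lambda_t=\lambda_{t-1}+1/\beta$. For the approximate update, Pinsker on the Bernoulli projection of $\KL(\pi_t\|p^*)$ bounds the deviation of $s_t$ from the exact target success probability by $\sqrt{2e_t/\beta}$, where $e_t$ is the per-round error.

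The awkward part is converting a success-probability error into a log-odds error without losing a factor of $1/s$. To avoid that loss, I would work directly with $1-s_t$ and compare against the exact sequence through a product of per-step ratios. Summing the per-step losses with weights gives a factor of $\sum_t t\le T^2$; bounding the sum more carefully, via weights like $1/k^2$, is where the constant $\pi^2$, i.e.\ $\sum 1/k^2=\pi^2/6$, should enter, together with the factor $C_\star$.

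The zero-error chain yields $\pi_T\propto\pi_0e^{Tr/\beta}$. Combined with Assumption~\ref{ass:integration-coverage}, this gives
\[
s^\star-s_T\le C_\star s_0\frac{1-s_0}{s_0e^{T/\beta}+1-s_0}\le C_\star e^{-T/\beta}\le\varepsilon/6 .
\]
Averaging over prompts, adding the propagated per-round errors (at most $2\pi^2C_\star T^2/\beta$ times the fit bound, plus $\varepsilon/2$-type statistical slack), and collecting constants gives Eq.~\eqref{eq:integration-final}.
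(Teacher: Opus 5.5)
Your first two steps agree with the paper. The per-round bound $L_t(\pi_t)\le\varepsilon_A+12\beta z/n$ is Lemma~\ref{lem:integration-oracle} before its loosening to $24$. Your loss-to-gap step is a valid variant of Lemma~\ref{lem:integration-moment-transfer}. You transfer $f=\beta h((R_t-\E_{\pib}R_t)/\beta)\le(e-2)\beta$ directly by H\"older, whereas the paper transfers $u^2$ and then applies the upper curvature bound; both routes give the coefficient $(e-2)(2e)^{1-1/p}<4$. Your H\"older step silently needs $\E_{\pib}|W_t-1|^p\le\E_{\pib}W_t^p-1$, which holds for $p\ge2$ because $|w-1|^p\le w^p-1-p(w-1)$ on $w\ge0$.

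The genuine gap is the propagation over rounds, which is where the final-policy guarantee actually comes from. The paper does not carry out this step itself. It invokes the deterministic relative angular transfer bound, Lemma~\ref{lem:integration-angular}, valid for \emph{any} positive sequence $\pi_0,\ldots,\pi_T$:
\[
J(\pi_\star)-J(\pi_T)\le s+\frac{2\pi^2C_\star}{\beta}\Bigl(\sum_{t=1}^T\sqrt{\Delta_t}\Bigr)^2,
\qquad T\ge\lceil\beta\log(3C_\star/s)\rceil .
\]
It then applies Cauchy--Schwarz to get $T^2\max_t\Delta_t$, sets $s=\varepsilon/2$, and uses the slack $\zeta=\beta/(4\pi^2X)$, which contributes exactly $\varepsilon/2$. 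Your sketch (``work with $1-s_t$'', ``product of per-step ratios'', ``weights like $1/k^2$'') does not supply a replacement. The guess that $\pi^2$ comes from $\sum_k k^{-2}$ is unsupported; in the paper the constant is inherited from that lemma.

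The mechanism you outline cannot produce the stated result, for four reasons.
\begin{itemize}
\item Pinsker on the Bernoulli projection gives additive success-probability errors of order $\sqrt{\Delta_t/\beta}$. Any bound assembled from them is therefore of order $\sqrt{\Delta}$, whereas Eq.~\eqref{eq:integration-final} is linear in the per-round gap.
\item This matters quantitatively. The budget in Eq.~\eqref{eq:integration-moment-budget} only guarantees per-round statistical slack $\zeta=\beta\varepsilon/(4\pi^2C_\star T^2)$, and $T\sqrt{\zeta/\beta}=\sqrt{\varepsilon/C_\star}/(2\pi)$, which is not $O(\varepsilon)$. So neither the residual nor the sample budget would come out.
\item Near $s=0$, exactly the prompts where the comparator can gain a factor $C_\star$, each later exact tilt $s\mapsto se^{1/\beta}/(1-s+se^{1/\beta})$ multiplies a perturbation by up to $e^{1/\beta}$. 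The total amplification is up to $e^{(T-t)/\beta}\approx6C_\star/\varepsilon$, and rewriting in terms of $1-s_t$ leaves this derivative unchanged.
\item $\Delta_t$ controls only a prompt-averaged KL, so compounding errors at a fixed prompt across rounds needs an averaging device that you do not provide.
\end{itemize}

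The $(\sum_t\sqrt{\Delta_t})^2$ form shows what is needed. You need a per-round distance of order $\sqrt{\Delta_t}$ that obeys a triangle inequality through the subsequent tilts, with the final reward deficit quadratic in the accumulated distance and weighted by $C_\star/\beta$. Either prove such a lemma or cite Lemma~\ref{lem:integration-angular}. Your exact-chain estimate $s^\star-s_T\le C_\star e^{-T/\beta}\le\varepsilon/6$ is correct, but it covers only the zero-error term $s$.
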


\begin{remark}[Fourth moments and bounded coverage]\label{rem:integration-bounded-coverage}
Under the same local reference, sampling, approximation, and initial
coverage conditions, retain $T$, $z$, and $X$ from
Theorem~\ref{thm:integration-convergence}.

\emph{Fourth moments ($p=4$).} Here $\nu=1/3$. The budget in
Eq.~\eqref{eq:integration-moment-budget} gives
$n=O(zX)+O(z\mathcal C_4X^{4/3})$ and, with probability at least $1-\delta$,
\[
 J(\pi_\star)-J(\pi_T)\le\varepsilon+
 \frac{2\pi^2C_\star T^2}{\beta}
 \left[4\varepsilon_A+
 4\beta^{1/4}(\mathcal C_4^3-1)^{1/4}\varepsilon_A^{3/4}\right].
\]
Under the pathwise budget in Eq.~\eqref{eq:integration-mismatch} and
$|\log w_{t,h}|\le1/2$, one may use $\mathcal C_4\le e^{4V_{\log}}$.
Thus the additional accuracy exponent is $1/3$, with the coverage
coefficient and fitting residual still explicit.

\emph{Bounded coverage ($p=\infty$).} If $\mathcal C_\infty<\infty$,
the alternative choice
\begin{equation}\label{eq:integration-uniform-budget}
 n\ge\left\lceil\max\{1,384\pi^2\mathcal C_\infty zX\}\right\rceil
\end{equation}
gives, with probability at least $1-\delta$,
\begin{equation}\label{eq:integration-uniform-reward}
 J(\pi_\star)-J(\pi_T)\le\varepsilon+
 \frac{2\pi^2C_\star T^2}{\beta}\,4\mathcal C_\infty\varepsilon_A.
\end{equation}
This branch has linear dependence on $X$, corresponding to $\nu=0$ in
the accuracy exponent; $c_\nu$ is defined only
for $\nu>0$. When each behavior policy is the current frozen reference,
$\mathcal C_\infty\le e^{H/\beta}<\sqrt e$ by
Assumption~\ref{ass:integration-local}, giving $n=O(zX)$.
\end{remark}

\paragraph{Interpretation.}
The budgets control statistical reward error by $\varepsilon$. Prompt
draws may repeat, but additional groups on a fixed prompt pool do not
supply fresh independent draws from $\rho$. The fitting residual retains
representation, achieved empirical optimization, and loss-cover errors;
increasing $n$ alone does not remove them. When $\varepsilon_A=0$, it
vanishes. Applying the result to a chosen model requires these error
bounds as well as the actual policy constraints. The proof in
Appendix~\ref{sec:integration-finite-data} combines raw-loss calibration,
finite-data concentration, and relative angular transfer.

We first relate group loss to the error in one Gibbs update. We then track how
errors in these updates affect the final success probability. For a reference
policy $\mu$, define
$F_\mu(\pi)=\E_X V_\beta(\pi;X)$ using $\mu$ in place of $\piref$ in
Eq.~\eqref{eq:main-regularized-value}. Let $p_\mu^*$ be its Gibbs maximizer
and write $F_\mu^*=F_\mu(p_\mu^*)$.
Write $\mathcal K(\pi,p_\mu^*)=\E_X\KL(\pi(\cdot\mid X)\|p_\mu^*(\cdot\mid X))$.
The Gibbs identity gives
$\Delta_t=F_{\pi_{t-1}}^*-F_{\pi_{t-1}}(\pi_t)
=\beta\mathcal K(\pi_t,p_{\pi_{t-1}}^*)$.
Throughout, $\operatorname{osc}$ means maximum minus minimum over responses.

\subsection{Probability transfer and raw-loss calibration}
The next two lemmas restate the probability results of
\citet[Lemmas~\ProximalLogMoment{} and~\ProximalTransfer]{bai2026proximalkl}
in our notation. Their proofs are given there.

\begin{lemma}[Sequential log moments]\label{lem:integration-log-moments}
Fix a prompt and realized positive learner and behavior policies on a common
finite autoregressive support of at most $H$ tokens. At each history, let
$w_h(a)=\pi_h(a\mid s_h)/\pi_{\mathrm{beh},h}(a\mid s_h)$.
Pad terminated paths with ratio one. Write $Y_h=\log w_h$ and
$W=\prod_{h=1}^Hw_h$.
Suppose the sum of $\E_{\pi_h}Y_h^2$ along each supported path is at most
$V_{\log}$, $w_h\ge1/2$, and
$Y_h-\E_{\pi_h}Y_h\le b$ at every history for a deterministic $b>0$.
Then for every $\lambda\ge0$,
\begin{equation}\label{eq:integration-log-moments}
 \log\E_{\mathrm{beh}}W^{1+\lambda}
 \le V_{\log}\left[\lambda+
       \frac{e^{b\lambda}-1-b\lambda}{b^2}\right].
\end{equation}
For $b\lambda<3$, the right side is at most
$V_{\log}[\lambda+\lambda^2/(2(1-b\lambda/3))]$.
The same bounds hold for every prefix ratio. Under
Eq.~\eqref{eq:integration-token-cap}, take $\lambda=1/\nu$ and
$b=3\nu/2$ to obtain Eq.~\eqref{eq:integration-bernstein-coefficient}.
\end{lemma}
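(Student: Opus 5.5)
The plan is a change of measure followed by a token-by-token exponential supermartingale argument under the learner. Since $\E_{\mathrm{beh}}W^{1+\lambda}=\E_{\pi}W^{\lambda}=\E_\pi\exp(\lambda S_H)$ with $S_h=\sum_{j\le h}Y_j$, it suffices to bound the learner-side exponential moment of the accumulated token log ratios. Write $m_h=\E_{\pi_h}Y_h$ and $v_h=\E_{\pi_h}Y_h^2$, both evaluated at the current history. Set
\[
\psi(\lambda)=\frac{e^{b\lambda}-1-b\lambda}{b^2},
\]
so the target constant is $\lambda+\psi(\lambda)$.

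\emph{Step 1 (one-token bound).} Fix a history. The function $x\mapsto(e^{\lambda x}-1-\lambda x)/x^2$ is increasing, so for $x\le b$ we have $e^{\lambda x}\le1+\lambda x+x^2\psi(\lambda)$. Apply this to $x=Y_h-m_h\le b$ and take the $\pi_h$-expectation. The linear term vanishes, and $\operatorname{Var}_{\pi_h}(Y_h)\le v_h$, so
\[
\E_{\pi_h}e^{\lambda Y_h}\le e^{\lambda m_h}\bigl(1+v_h\psi(\lambda)\bigr)\le\exp\bigl(\lambda m_h+v_h\psi(\lambda)\bigr).
\]

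\emph{Step 2 (mean bounded by second moment).} Here $m_h=\KL(\pi_h\|\pi_{\mathrm{beh},h})\ge0$, and I must show $m_h\le v_h$. Using $\E_{\mathrm{beh}}w=1$, write $m_h=\E_{\mathrm{beh}}[w\log w-w+1]$ and $v_h=\E_{\mathrm{beh}}[w(\log w)^2]$. With $w=e^{y}$, the pointwise inequality $w\log w-w+1\le w(\log w)^2$ becomes $e^{u}-1-u\le u^2$ for $u=-y=-\log w\le\log2$, using $w\ge1/2$. This holds on that range, and checking it is the one spot that uses $w_h\ge1/2$. Combining with Step 1 gives $\E_{\pi_h}e^{\lambda Y_h}\le\exp(v_h[\lambda+\psi(\lambda)])$.

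\emph{Step 3 (chaining).} Define $M_h=\exp\bigl(\lambda S_h-[\lambda+\psi(\lambda)]\sum_{j\le h}v_j\bigr)$. Each $v_j$ is determined by the history, so Step 2 makes $M_h$ a supermartingale under $\pi$, and hence $\E_\pi M_H\le1$. Padded steps have $Y_h=0$ and contribute nothing. Because $\sum_h v_h\le V_{\log}$ on every supported path deterministically, $\exp(\lambda S_H)\le M_H\,e^{V_{\log}[\lambda+\psi(\lambda)]}$, which gives Eq.~\eqref{eq:integration-log-moments}. Stopping at any $h$ gives the same bound for every prefix ratio.

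\emph{Step 4 (series bound and corollary).} For $b\lambda<3$, expand
\[
\psi(\lambda)=\sum_{k\ge2}\frac{b^{k-2}\lambda^k}{k!}\le\frac{\lambda^2}{2}\sum_{j\ge0}\Bigl(\frac{b\lambda}{3}\Bigr)^j,
\]
using $k!\ge2\cdot3^{k-2}$. This yields $\psi(\lambda)\le\lambda^2/(2(1-b\lambda/3))$.

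Under Eq.~\eqref{eq:integration-token-cap} we have $w_h\ge1/2$. Moreover $Y_h-m_h\le Y_h\le3\nu/2=b$ since $m_h\ge0$. Take $\lambda=1/\nu$, so $b\lambda=3/2$ and the bracket is at most $\lambda+\lambda^2=\lambda(1+\lambda)$. With $p=1+\lambda$,
\[
\mathcal C_p=(\E_{\mathrm{beh}}W^p)^{1/(p-1)}\le\exp\bigl((1+\nu^{-1})V_{\log}\bigr),
\]
which is Eq.~\eqref{eq:integration-bernstein-coefficient}. I expect the main obstacle to be Step 2, controlling the drift $m_h$ by $v_h$ without extra constants, because the choice of the $\log2$ cap is exactly what makes that inequality work.
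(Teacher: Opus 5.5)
Your proof is correct. Each piece checks out: the change of measure $\E_{\mathrm{beh}}W^{1+\lambda}=\E_\pi W^{\lambda}$, the per-token Bennett bound, and the pointwise inequality $e^u-1-u\le u^2$ for $u\le\log 2$, which is exactly where $w_h\ge 1/2$ gives $\E_{\pi_h}Y_h\le\E_{\pi_h}Y_h^2$. The supermartingale with predictable compensator $[\lambda+\psi(\lambda)]\sum_{j\le h}v_j$ correctly handles the pathwise budget $V_{\log}$, and the series estimate and the choice $\lambda=1/\nu$, $b=3\nu/2$ also check out.

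The paper does not prove this lemma itself. It defers to \citet[Lemma~\ProximalLogMoment]{bai2026proximalkl}, remarking only that nonnegativity of the conditional KL lets the token cap bound the centered log ratio, which is your Step~4. Your self-contained argument supplies the proof along the Bernstein-plus-successive-conditioning lines that the paper indicates.
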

The conditional mean $\E_{\pi_h}Y_h$ is a nonnegative KL divergence, so
an upper bound on $Y_h$ also bounds its centered value. These statements
are conditional on the realized policies; the deterministic uniform budget
in Eq.~\eqref{eq:integration-mismatch} covers prompts, rounds, and training
randomness. See \citet[Lemma~\ProximalLogMoment, proof]{bai2026proximalkl}.

\begin{lemma}[Bounded-function moment transfer]\label{lem:integration-function-transfer}
For distributions $P,Q$ on a common finite support, let $W=dP/dQ$,
$0\le f\le B_f$, and $a=\E_Qf$. For every real $p\ge2$,
\[
 \E_Pf\le a+(\E_QW^p-1)^{1/p}B_f^{1/p}a^{1-1/p}.
\]
If $W\le\mathcal C_\infty$ almost surely, then
$\E_Pf\le\mathcal C_\infty a$.
\end{lemma}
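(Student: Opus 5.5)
The plan is to write the $P$-expectation as a $Q$-expectation and isolate the excess over $a$. Since $W=dP/dQ$ on a common finite support, $\E_Pf=\E_Q[Wf]=a+\E_Q[(W-1)f]$. Because $f\ge0$, I will bound $\E_Q[(W-1)f]\le\E_Q[(W-1)_+f]$, which discards the favorable region $W<1$. This one-sided reduction is the key choice, because it makes the moment comparison below elementary.

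Next I apply H\"older with exponents $p$ and $q=p/(p-1)$:
\[
 \E_Q[(W-1)_+f]\le\bigl(\E_Q(W-1)_+^p\bigr)^{1/p}\bigl(\E_Qf^{q}\bigr)^{1/q}.
\]
For the second factor, $0\le f\le B_f$ gives $f^q\le B_f^{q-1}f$. Hence
\[
 \bigl(\E_Qf^q\bigr)^{1/q}\le B_f^{(q-1)/q}a^{1/q}=B_f^{1/p}a^{1-1/p}.
\]
This already has the stated form.

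It then remains to show $\E_Q(W-1)_+^p\le\E_QW^p-1$; I expect this to be the only nontrivial step. I will prove the pointwise inequality
\[
 (w-1)_+^p\le w^p-1-p(w-1)\qquad(w\ge0,\ p\ge2),
\]
and then take $Q$-expectations using $\E_QW=1$, which kills the linear term.
\begin{itemize}
 \item For $w<1$, the left side is zero and the right side is nonnegative by convexity of $w\mapsto w^p$ (tangent line at $w=1$).
 \item For $w=1+u$ with $u\ge0$, set $g(u)=(1+u)^p-1-pu-u^p$. Then $g(0)=0$ and $g'(u)=p[(1+u)^{p-1}-1-u^{p-1}]$. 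Since $t\mapsto t^{p-1}$ is convex and vanishes at $0$ when $p-1\ge1$, it is superadditive, so $(1+u)^{p-1}\ge1+u^{p-1}$. Thus $g'\ge0$ and $g\ge0$.
\end{itemize}
Combining the three steps yields $\E_Pf\le a+(\E_QW^p-1)^{1/p}B_f^{1/p}a^{1-1/p}$.

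For the bounded-coverage claim, $W\le\mathcal C_\infty$ almost surely and $f\ge0$ give directly $\E_Pf=\E_Q[Wf]\le\mathcal C_\infty\E_Qf=\mathcal C_\infty a$.
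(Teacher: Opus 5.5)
Your proof is correct and complete. The identity $\E_Pf=a+\E_Q[(W-1)f]$, the truncation to $(W-1)_+$, and H\"older with $f^q\le B_f^{q-1}f$ (which gives exactly $B_f^{1/p}a^{1-1/p}$) all check out. So does the pointwise bound $(w-1)_+^p\le w^p-1-p(w-1)$, whose linear term vanishes because $\E_QW=1$ on a common support, and so does the bounded-coverage case.

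The paper gives no argument of its own here. It restates the result from \citet[Lemma~\ProximalTransfer]{bai2026proximalkl} and defers to that proof, so there is no in-paper route to compare against, and your argument serves as a self-contained verification.

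Two small remarks. First, the truncation is a convenience rather than a necessity. For $p\ge2$ the two-sided bound $|w-1|^p\le w^p-1-p(w-1)$ also holds on $[0,1)$: differentiate in $v=1-w$ and use $(1-v)^{p-1}+v^{p-1}\le1$. So H\"older could be applied to $|W-1|$ directly, whereas your version reduces the $w<1$ branch to a tangent-line check.

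Second, the hypothesis $p\ge2$ enters exactly at your superadditivity step and cannot be dropped. For $1<p<2$ the map $t\mapsto t^{p-1}$ is subadditive, so the pointwise inequality reverses for every $w>1$. Moreover, a two-point $W$ with a small atom at $1+u$, taking $f$ to be the indicator of that atom, violates the lemma's conclusion once the atom's $Q$-mass is small enough.
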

See \citet[Lemma~\ProximalTransfer, proof]{bai2026proximalkl} for both
inequalities.

\begin{lemma}[Raw-loss calibration]\label{lem:integration-moment-transfer}
Let learner $\pi$ and behavior $b$ be positive on a common support, and
let the response score satisfy $\operatorname{osc}_y(R/\beta)\le1$ at every
prompt. For $m\ge2$ independent responses from $b$, let $L$ be the expected
raw group loss, averaged over prompts, and define
\[
 G_\pi(R)=\E_{X\sim\rho}\left[
 \beta\log\E_{y\sim\pi(\cdot\mid X)}e^{R(X,y)/\beta}
 -\E_{y\sim\pi(\cdot\mid X)}R(X,y)\right].
\]
If $\sup_x\E_b(\pi/b)^p\le\mathcal C_p^{p-1}$ for $2\le p<\infty$, then
\begin{equation}\label{eq:integration-moment-transfer}
 G_\pi(R)\le4L+
 4\beta^{1/p}(\mathcal C_p^{p-1}-1)^{1/p}L^{1-1/p}.
\end{equation}
If instead $\pi/b\le\mathcal C_\infty$ uniformly, then
$G_\pi(R)\le4\mathcal C_\infty L$.
For $R=r-\beta\log(\pi/\pi_{\mathrm{ref}})$,
$G_\pi(R)=F_{\pi_{\mathrm{ref}}}^*-F_{\pi_{\mathrm{ref}}}(\pi)$.
\end{lemma}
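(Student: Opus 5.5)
The plan is to bound the prompt-level quantity $G_\pi(R;x)=\beta\log\E_{\pi}e^{R/\beta}-\E_\pi R$ by a learner expectation of a bounded nonnegative function. That expectation is then moved to the behavior distribution with Lemma~\ref{lem:integration-function-transfer}, and the resulting behavior quantity is compared with the finite-group loss.

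\textbf{Step 1: profiled form.} Exactly as in the proof of Proposition~\ref{prop:population-duality}, with $\pi$ in place of $\pib$, profiling over $\nu$ gives
\[
 G_\pi(R;x)=\min_{\nu\in\mathbb R}\beta\E_{\pi}h\bigl((R-\nu)/\beta\bigr),\qquad h(z)=e^z-z-1 .
\]
I will not optimize $\nu$ for $\pi$. Instead I take the behavior mean $\nu=\bar R:=\E_b R$, which gives
\[
 G_\pi(R;x)\le\E_\pi f,\qquad f=\beta h\bigl((R-\bar R)/\beta\bigr)\ge0 .
\]
Since $\operatorname{osc}(R/\beta)\le1$ and $\bar R$ lies in the range of $R$, we have $|R-\bar R|/\beta\le1$. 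Because $h(z)/z^2$ is increasing, $h(z)\le(e-2)z^2$ for $|z|\le1$. Hence $0\le f\le(e-2)\beta\le\beta$, so the envelope is $B_f=\beta$, and
\[
 a:=\E_b f\le(e-2)\operatorname{Var}_b(R)/\beta .
\]

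\textbf{Step 2: finite-group lower bound (main obstacle).} The constant $4$ requires a sharper version of the variance bound in Eq.~\eqref{eq:refresh-variance}. For a realized group, use the Taylor representation of Remark~\ref{rem:agro-group-weighting}:
\[
 \widehat{\mathcal J}_m=\beta\int_0^1(1-s)\operatorname{Var}_{q_s}(z)\,ds ,
\]
where $z_i=R_i/\beta$ and $q_s$ is the within-group tilt at $s$. Because the group oscillation of $z$ is at most $1$, each tilt weight satisfies $q_{s,i}\ge e^{-s}/m$. The minimum-mean-square characterization of variance then gives $\operatorname{Var}_{q_s}(z)\ge e^{-s}\operatorname{Var}_{\rm group}(z)$. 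I keep the factor $e^{-s}$ inside the integral rather than replacing it by $e^{-1}$, and use $\int_0^1(1-s)e^{-s}\,ds=e^{-1}$. This yields
\[
 \widehat{\mathcal J}_m\ge\beta e^{-1}\operatorname{Var}_{\rm group}(z).
\]
Taking expectations, and using that the denominator-$m$ empirical variance has mean $(1-1/m)\operatorname{Var}_b$ with $m\ge2$, gives
\[
 H_m\ge\frac{\operatorname{Var}_b(R)}{2e\beta}.
\]
Comparing with Step~1, $a\le2e(e-2)H_m\le3.91\,H_m\le4H_m$ at each prompt, so averaging over prompts gives $\E_X a\le4L$. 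I expect this constant-tracking step to be the delicate one; if the bound came out slightly above $4$, I would refine the choice of $\nu$ (for example the behavior log-mean-exp) rather than the mean.

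\textbf{Step 3: transfer.} At each prompt, apply Lemma~\ref{lem:integration-function-transfer} with $P=\pi$, $Q=b$, and $B_f=\beta$:
\[
 G_\pi(R;x)\le a_x+(\E_bW^p-1)^{1/p}\beta^{1/p}a_x^{1-1/p},\qquad \E_bW^p-1\le\mathcal C_p^{p-1}-1 .
\]
Average over prompts, using concavity of $t\mapsto t^{1-1/p}$ (Jensen) to get $\E_X a_X^{1-1/p}\le(\E_X a_X)^{1-1/p}\le(4L)^{1-1/p}$. Since $4^{1-1/p}\le4$, this gives Eq.~\eqref{eq:integration-moment-transfer}. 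In the bounded-coverage case, the second part of the transfer lemma gives $G_\pi\le\mathcal C_\infty a$, hence $G_\pi\le4\mathcal C_\infty L$.

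\textbf{Step 4: identity.} For $R=r-\beta\log(\pi/\piref)$ we have $\pi e^{R/\beta}=\piref e^{r/\beta}$. Therefore $\beta\log\E_\pi e^{R/\beta}=\beta\log Z_\beta=V_\beta^*$ and $\E_\pi R=V_\beta(\pi)$, which is also $\min_\nu\Psi_x$ in Lemma~\ref{lem:scalar-duality}. Averaging over prompts gives $G_\pi(R)=F^*_{\piref}-F_{\piref}(\pi)$.
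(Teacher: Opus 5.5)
Your proof is correct and follows essentially the paper's route: the same sharp curvature lower bound $H_m\ge\Var_b(R)/(2e\beta)$ (keeping $e^{-s}$ inside the Taylor integral), Lemma~\ref{lem:integration-function-transfer}, and Jensen over prompts, giving the coefficient $2e(e-2)<4$. The only difference is the order of steps on the learner side: you bound $G_{\pi,x}$ by $\E_\pi[\beta h((R-\E_bR)/\beta)]$, transfer that function, and only then apply $h(z)\le(e-2)z^2$ under $b$, whereas the paper first uses the curvature upper bound $G_{\pi,x}\le(e-2)\Var_\pi R/\beta$ and then transfers $u^2=(R-\E_bR)^2$ with envelope $\beta^2$; both orderings give remainder coefficients below $4$.
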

\begin{proof}
Fix a prompt and any response distribution $P$. For $w=R/\beta$ and
$g(\tau)=\log\E_Pe^{\tau w}$, tilting changes densities by factors between
$e^{-\tau}$ and $e^\tau$. The minimum-over-centers formula for variance gives
$e^{-\tau}\Var_Pw\le g''(\tau)\le e^\tau\Var_Pw$.
Integrating with weight $1-\tau$ yields
\begin{equation}\label{eq:integration-curvature}
 \frac{\Var_P R}{e\beta}
 \le\beta\log\E_Pe^{R/\beta}-\E_PR
 \le\frac{e-2}{\beta}\Var_P R.
\end{equation}
For the uniform distribution on a sampled group, the expected empirical
variance is $(1-1/m)\Var_bR$. Hence the conditional loss $L_x$ satisfies
$v:=\Var_bR\le2e\beta L_x$. Put $u=R-\E_bR$, so $|u|\le\beta$.
Apply Lemma~\ref{lem:integration-function-transfer} with $P=\pi$, $Q=b$,
$f=u^2$, $B_f=\beta^2$, and $D_p=\mathcal C_p^{p-1}-1$:
\[
 \Var_\pi R\le\E_\pi u^2
 \le v+D_p^{1/p}\beta^{2/p}v^{1-1/p}.
\]
The upper curvature bound now gives
\[
 G_{\pi,x}(R)\le2e(e-2)L_x+
 (e-2)(2e)^{1-1/p}\beta^{1/p}D_p^{1/p}L_x^{1-1/p}.
\]
Both coefficients are below $4$. Average over prompts and use concavity
of $L_x^{1-1/p}$. Under bounded coverage the same local transfer lemma gives
$\Var_\pi R\le\mathcal C_\infty v$, proving the second conclusion.
\end{proof}

\subsection{Oracle inequality for each refresh round}
\label{sec:integration-oracle}
The next lemma compares the population raw loss of each returned policy
with the best population loss in that round's candidate class. Its
probability statement holds simultaneously over all refresh rounds, so
it applies to the entire sequence used in the final-reward analysis.

\begin{lemma}[Per-refresh oracle inequality]\label{lem:integration-oracle}
Suppose Assumptions~\ref{ass:integration-local} and~\ref{ass:integration-errors}
hold. In each of $T$ rounds, freeze the reference, behavior policy, candidate
class, and loss cover before sampling $n$ fresh independent prompt groups
as in Appendix~\ref{sec:integration-formal-setup}. Each group contains
$m\ge2$ independent behavior-policy responses and is unfiltered.
Let $\pi_t\in\Pi_t$ be the returned policy with empirical suboptimality
at most $\xi$, as specified in Eq.~\eqref{eq:integration-error-assumptions}.
Fix $\delta\in(0,1)$ and set $z=\log(2TM/\delta)$.
With probability at least $1-\delta$, simultaneously for every
$t=1,\ldots,T$,
\begin{equation}\label{eq:integration-oracle}
 L_t(\pi_t)\le4\inf_{\pi\in\Pi_t}L_t(\pi)
       +2\xi+\frac{24\beta z}{n}+9\eta.
\end{equation}
In particular, with $\varepsilon_A=4\alpha+2\xi+9\eta$,
\begin{equation}\label{eq:integration-population-loss-bound}
 L_t(\pi_t)\le\varepsilon_A+\frac{24\beta z}{n}.
\end{equation}
\end{lemma}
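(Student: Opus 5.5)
The plan is to rerun the argument of Proposition~\ref{prop:gcpo-oracle} round by round, with the local envelope $\Lambda=\beta$, and then take a union bound over rounds.

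\textbf{Step 1: the envelope $\Lambda=\beta$.} Fix round $t$ and any candidate $\pi\in\Pi_t$. By Eq.~\eqref{eq:integration-local}, each token log ratio $\log(\pi_h/\pi_{t-1,h})$ lies in $[-1/\beta,1/\beta]$. Summing over at most $H$ tokens, the response log ratio has oscillation at most $2H/\beta$. Binary rewards add oscillation at most $1$. So the adjusted rewards $R=r-\beta\log(\pi/\pi_{t-1})$ have range at most $1+2H\le\beta$, using $\beta\ge2H+1$.

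The raw group loss $\widehat{\mathcal J}_m=\beta\log\E_Ue^{R/\beta}-\E_UR$ is nonnegative. Since $\log\E_Ue^{R/\beta}\le\max_iR_i/\beta$, it is also at most $\max_iR_i-\E_UR\le\operatorname{osc}R\le\beta$. This holds on every possible group, not only observed ones. Hence $0\le\ell_t(\pi;\cdot)\le\beta$ for all $\pi\in\Pi_t$, and we may take $\Lambda=\beta$.

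\textbf{Step 2: conditioning on the past.} The reference $\pi_{t-1}$, behavior $\pibt{t}$, class $\Pi_t$ and cover are all frozen before round $t$ sampling. Conditional on everything from earlier rounds, the $n$ round-$t$ groups are fresh i.i.d.\ draws from a fixed distribution. The per-round cover has at most $M$ centers, each of which is the loss of a candidate policy.

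For each center I apply Bernstein's inequality with the variance bound $\operatorname{Var}\le\Lambda L_t$, exactly as in the proof of Proposition~\ref{prop:gcpo-oracle}. I allot failure probability $\delta/(TM)$ per center per round, i.e.\ $2e^{-z}$ with $z=\log(2TM/\delta)$. This gives, for every center in round $t$,
\[
L\le2\widehat L+4\beta z/n,\qquad \widehat L\le2L+4\beta z/n.
\]
A union bound over the $M$ centers and $T$ rounds makes all of these hold at once with probability at least $1-\delta$. Since each round's conditional failure probability is bounded deterministically, the bound survives integrating over the past rounds.

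\textbf{Step 3: the chaining.} Extending from centers to all of $\Pi_t$ through the uniform $\eta$-cover gives Eq.~\eqref{eq:group-generalization} in round $t$ with $\gamma_n=4\beta z/n+3\eta$.

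The chain from Proposition~\ref{prop:gcpo-oracle}, using solver accuracy $\xi$, then yields $L_t(\pi_t)\le4L_t(\pi)+2\xi+3\gamma_n$ for every $\pi\in\Pi_t$. Expanding $3\gamma_n$ gives $12\beta z/n+9\eta$. The stated $24\beta z/n$ is larger, so Eq.~\eqref{eq:integration-oracle} follows; a looser Bernstein rounding would also land at $24$.

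Taking the infimum over $\pi\in\Pi_t$ needs no attained minimizer. Using $\inf_{\pi\in\Pi_t}L_t(\pi)\le\alpha$ then gives Eq.~\eqref{eq:integration-population-loss-bound}.

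\textbf{Main obstacle.} The delicate step is Step 2. The data-dependent $\pi_{t-1}$ determines round $t$'s loss class, so the union bound must be justified conditionally on the past rather than over a fixed family. Freezing the class and cover before sampling, and the fact that the per-round failure bound does not depend on the past, are what resolve this.
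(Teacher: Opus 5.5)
Your proposal is correct and follows essentially the same route as the paper. You take the envelope $\Lambda=\beta$ from the local reference constraint and $\beta\ge 2H+1$, run the raw-loss oracle argument of Proposition~\ref{prop:gcpo-oracle} conditionally on the preceding rounds (your per-center failure $\delta/(TM)$ matches the paper's per-round $\delta/T$ with $z=\log(2TM/\delta)$), union-bound over rounds, loosen $12\beta z/n$ to $24\beta z/n$, and your explicit check that the group loss is at most $\max_i R_i-\E_U R\le\operatorname{osc}R\le\beta$ on every possible group fills in a step the paper only asserts.
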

\begin{proof}
Condition on all data and policies from the preceding rounds. The current
reference, behavior, candidate class, and loss cover are then fixed, and
the new groups are independent. Assumption~\ref{ass:integration-local}
bounds the adjusted-reward range by $1+2H\le\beta$, so every candidate's
raw group loss lies in $[0,\beta]$.

Apply the raw-loss oracle inequality in
Proposition~\ref{prop:gcpo-oracle}, Eq.~\eqref{eq:gcpo-raw-loss-oracle},
with $\Lambda=\beta$, candidate class $\Pi_t$, and failure probability
$\delta/T$. Its population and empirical losses are now $L_t$ and
$\widehat L_t$. The round's loss-cover size is at most $M$, so the
logarithmic term in that proposition is at most
\[
 \log\frac{2M}{\delta/T}=\log\frac{2MT}{\delta}=z.
\]
The proposition therefore gives, with conditional probability at least
$1-\delta/T$,
\[
 L_t(\pi_t)\le4\inf_{\pi\in\Pi_t}L_t(\pi)
       +2\xi+\frac{12\beta z}{n}+9\eta.
\]
Here the statistical term is $3\gamma_n$ with
$\gamma_n\le4\beta z/n+3\eta$. This application uses only the raw-loss
conclusion of the proposition; its best-of-$K$ reward assumptions are
not needed.

The conditional failure bound holds for every preceding history.
Averaging over that history gives unconditional failure probability at
most $\delta/T$ for each round. A union bound over the $T$ rounds then
gives one event of probability at least $1-\delta$ on which all the
round-wise oracle inequalities hold. Loosening $12$ to $24$ gives
Eq.~\eqref{eq:integration-oracle}. Finally, the representation bound
$\inf_{\pi\in\Pi_t}L_t(\pi)\le\alpha$ and the definition
$\varepsilon_A=4\alpha+2\xi+9\eta$ give
Eq.~\eqref{eq:integration-population-loss-bound}.
\end{proof}

\subsection{Reference refresh and final reward}
We use the relative angular transfer bound of
\citet[Lemma~\ProximalAngular]{bai2026proximalkl},
restated below in our notation. It converts the per-round Gibbs gaps
into a final-policy reward bound.

\begin{lemma}[Polynomial angular propagation]\label{lem:integration-angular}
Suppose rewards are binary, the KL coefficient in \eqref{eq:main-regularized-value}
is $\beta>0$, and the comparator has coverage constant $C_\star$
as defined in Appendix~\ref{sec:integration-formal-setup}. For a positive
reference policy $\mu$, recall the prompt-averaged regularized objective
and its optimal value:
\[
 F_\mu(\pi):=\E_{X\sim\rho}\!\left[
 \E_{Y\sim\pi(\cdot\mid X)}r(X,Y)
 -\beta\KL(\pi(\cdot\mid X)\|\mu(\cdot\mid X))\right],
 \qquad F_\mu^*:=\max_\pi F_\mu(\pi),
\]
where the maximum is over all response policies on the common support.
Fix $s\in(0,1)$ and
$T\ge\lceil\beta\log(3C_\star/s)\rceil$. For any positive policy sequence
$\pi_0,\ldots,\pi_T$, set
$\Delta_t=F_{\pi_{t-1}}^*-F_{\pi_{t-1}}(\pi_t)$.
Then
\begin{equation}\label{eq:integration-angular-gap}
 J(\pi_\star)-J(\pi_T)
 \le s+\frac{2\pi^2C_\star}{\beta}
                \left(\sum_{t=1}^T\sqrt{\Delta_t}\right)^2.
\end{equation}
In particular,
\begin{equation}\label{eq:integration-angular-uniform}
 J(\pi_\star)-J(\pi_T)
 \le s+\frac{2\pi^2C_\star T}{\beta}\sum_{t=1}^T\Delta_t
 \le s+\frac{2\pi^2C_\star T^2}{\beta}\max_t\Delta_t.
\end{equation}
Here $C_\star$ controls initial correct-solution coverage and the
scalar $\pi$ is the circle constant. The error coefficient grows
polynomially with $T$.
\end{lemma}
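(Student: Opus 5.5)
The plan is to reduce everything to a per-prompt scalar recursion on success probabilities and to measure errors in the Bhattacharyya (Hellinger) angle. Fix a prompt $x$ and write $p_t=\Pr_{\pi_t}[r=1]$ and $\phi_t=\arcsin\sqrt{p_t}\in[0,\pi/2]$.

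\textbf{Exact map and regularized-gap bound.} The exact Gibbs map for reference $\pi_{t-1}$ acts on success probabilities by multiplying the odds by $e^{1/\beta}$. Call this map $\Phi$, so $p^*_t=\Phi(p_{t-1})$, with angle $\phi^*_t$. For the per-round error, I would use data processing from the full response distribution to the binary outcome:
\[
\KL(\mathrm{Bern}(p_t)\|\mathrm{Bern}(p^*_t))\le\KL(\pi_t(\cdot\mid x)\|p^*_{\pi_{t-1}}(\cdot\mid x)).
\]
Next, the squared Hellinger distance between two Bernoullis is $1-\cos(\phi_t-\phi^*_t)$, and it is bounded by KL. With $1-\cos a\ge 2a^2/\pi^2$ on $[0,\pi]$, this gives a per-prompt angular error
\[
|\phi_t-\phi^*_t|\le \pi\sqrt{\Delta_{t,x}/(2\beta)},
\]
where $\Delta_{t,x}$ is the prompt-level regularized gap, using $\Delta_{t,x}=\beta\KL(\pi_t\|p^*)$ from the Gibbs identity of Lemma~\ref{lem:scalar-duality}.

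\textbf{Propagation.} Compare the realized trajectory $\phi_t$ with the exact trajectory $\psi_t=\Phi^{(t)}(\phi_0)$. The exact trajectory corresponds to $\pi_0$ tilted at temperature $\beta/t$, as in Lemma~\ref{lem:exact-refresh-composition}. The idea is a telescoping/triangle inequality in which each round's error $|\phi_t-\phi^*_t|$ is carried forward by later applications of $\Phi$.

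The main obstacle is that $\Phi$ is not nonexpansive in $\phi$. Its derivative is $\sin 2\Phi(\phi)/\sin2\phi$, which can be large when $\phi$ is small, i.e.\ when the success probability is tiny. I would therefore work with a relative angle, normalizing deviations by a quantity comparable to $\sqrt{p_0}$. On the exact orbit, the amplification of an error injected at round $t$ is bounded by $\sqrt{\text{odds}}$ ratios, and these are controlled relative to $p_0$. Initial coverage, $\Pr_{\pi_\star}[r=1]\le C_\star p_0$, then converts a relative-angle deviation $\epsilon$ into a reward loss of order $C_\star\epsilon^2$. This is where the factor $C_\star$ and the square of the accumulated angle $\sum_t\sqrt{\Delta_t}$ should come from.

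Concretely, I would aim to prove, per prompt,
\[
J(\pi_\star;x)-J(\pi_T;x)\le \Big(J(\pi_\star;x)-\Phi^{(T)}(p_0)\Big)_+ + \frac{2\pi^2 C_\star}{\beta}\Big(\sum_t\sqrt{\Delta_{t,x}}\Big)^2 \frac{\Pr_{\pi_\star}[r=1]}{C_\star p_0}\cdot(\cdots).
\]
For the error term I would first check the worst case $p_0\to0$, where odds-space linearization gives exactly this scaling.

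\textbf{Bias term and averaging over prompts.} For the bias term, the exact iterate satisfies $1-\Phi^{(T)}(p_0)\le (1-p_0)/(p_0e^{T/\beta})$. With $T\ge\beta\log(3C_\star/s)$ this is at most $s/(3C_\star p_0)$, so after multiplying by the comparator's success probability $\le C_\star p_0$, the gap is at most $s$. Finally I would average over prompts. Cauchy--Schwarz (Minkowski in $L^2(\rho)$) gives $\E_X(\sum_t\sqrt{\Delta_{t,X}})^2$ bounded by $(\sum_t\sqrt{\Delta_t})^2$, since $\Delta_t=\E_X\Delta_{t,X}$; this yields \eqref{eq:integration-angular-gap}. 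A further Cauchy--Schwarz, $(\sum_t\sqrt{\Delta_t})^2\le T\sum_t\Delta_t\le T^2\max_t\Delta_t$, gives \eqref{eq:integration-angular-uniform}.
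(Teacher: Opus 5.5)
\textbf{Verdict: the proposal has a genuine gap at the propagation step.}

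Your reduction steps are sound. These are data processing to the success indicator, the Bhattacharyya angle $\cos(\phi_t-\phi_t^*)$, the odds bound for the bias, Minkowski over prompts, and Cauchy--Schwarz for \eqref{eq:integration-angular-uniform}. That last step, with the identification $\Delta_t=\beta\E_X\KL(\pi_t\|p^*_{\pi_{t-1}})$, is all the paper itself does. It imports \eqref{eq:integration-angular-gap} from Lemma F.9 of Bai et al.

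The gap is the propagation step, which is the whole content of that lemma. Your target inequality still contains an unspecified factor $(\cdots)$. Bounding amplification along the exact orbit, or checking the linearization as $p_0\to0$, does not control the realized trajectory. That trajectory can fall below $p_0$. There, turning an angle error into a log-odds or relative-angle error costs a factor of order $1/\sin\phi_t$, and nothing in your sketch bounds it.

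Nor can the bias be split off as $(J(\pi_\star;x)-\Phi^{(T)}(p_0))_+$ plus a purely quadratic error. Write $q_\star=\Pr_{\pi_\star}[r=1\mid x]$. The hypotheses allow $q_\star\approx\Phi^{(T)}(p_0)\approx 1-s/3$: take $C_\star$ large, $p_0=q_\star/C_\star$, and $T=\beta\log(3C_\star/s)$. A final-round angle error $\delta$ with $\Delta_{T,x}=O(\delta^2)$ then lowers $p_T$ below $q_\star$ by order $\delta$, while the positive part is zero. The bias must instead enter as $q_\star(1-\Phi^{(T)}(p_0))\le s/3$ and be combined with the error inside a square.

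One way to close the step along your lines is as follows. Let $\delta_t=|\phi_t-\phi_t^*|$, $S=\sum_t\delta_t$, and $\psi_T=\Phi^{(T)}(\phi_0)$.

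(i) Since $\Phi(\phi)\ge\phi$, you get $\phi_t\ge\phi_{t-1}-\delta_t\ge\phi_0-S$.

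(ii) If $S\ge\sqrt{p_0}/2$, the claim is trivial, because $q_\star\le C_\star p_0\le 4C_\star S^2$. Otherwise $\phi_t\ge\phi_0/2$, so $\sin\phi_t\ge\sqrt{p_0}/2$ for every $t$.

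(iii) Work in $v=\log\tan(\phi/2)$, whose differential $d\phi/\sin\phi$ is exactly your relative angle. In this coordinate the exact map is increasing and $1$-Lipschitz, with derivative $\cos\Phi(\phi)/\cos\phi\le 1$. Telescoping over rounds gives $v(\phi_T)\ge v(\psi_T)-\sum_t\delta_t/\sin\phi_t\ge v(\psi_T)-2S/\sqrt{p_0}$.

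(iv) Since $\cos\phi=-\tanh v$ and $\tanh$ is $1$-Lipschitz, $\cos\phi_T\le\cos\psi_T+2S/\sqrt{p_0}$. Hence
\[
 q_\star-p_T\le q_\star\cos^2\phi_T\le 3q_\star\cos^2\psi_T+\frac{6q_\star S^2}{p_0}\le s+6C_\star S^2,
\]
which is where the factor $3$ in the round count is spent.

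For the stated constant, use $\KL\ge-2\log\cos(\phi_t-\phi_t^*)\ge 2\{1-\cos(\phi_t-\phi_t^*)\}$. This gives $\delta_t\le(\pi/2)\sqrt{\Delta_{t,x}/\beta}$ and a coefficient $3\pi^2/2\le 2\pi^2$. Your Hellinger constant $\pi/\sqrt2$ would give $3\pi^2$ in this route. Your Minkowski averaging over prompts then finishes the proof.
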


The cited result assumes binary rewards, positive policies, and the initial
correct-response coverage in Assumption~\ref{ass:integration-coverage}.
Its round count $K$ is our $T$, and its proximal gap
$D_{t-1}=\beta\E_X\KL(\pi_t\|p_{\pi_{t-1}}^*)$ equals $\Delta_t$
by the Gibbs identity above. This is a deterministic result for the realized
policy sequence; its proof is given in the cited lemma. Substituting gives
Eq.~\eqref{eq:integration-angular-gap}; Cauchy--Schwarz gives
Eq.~\eqref{eq:integration-angular-uniform}.

\subsection{Proof of Theorem~\ref{thm:integration-convergence}}
\label{sec:integration-finite-data}
\begin{proof}[Proof of Theorem~\ref{thm:integration-convergence} and Remark~\ref{rem:integration-bounded-coverage}]
\leavevmode\par\noindent
\textbf{Step 1: Bound the population raw loss in every round.}
Apply Lemma~\ref{lem:integration-oracle} with
$z=\log(2TM/\delta)$ as in Eq.~\eqref{eq:integration-budget}.
With probability at least $1-\delta$, the population-loss bound
in Eq.~\eqref{eq:integration-population-loss-bound} holds for every round.
Work on this simultaneous event for the remainder of the proof.
The moment bound is uniform over training randomness and therefore applies
to every realized returned policy.

\textbf{Step 2: Convert raw loss into a per-refresh optimality gap.}
Apply Lemma~\ref{lem:integration-moment-transfer} to
$R=r-\beta\log(\pi_t/\pi_{t-1})$, whose score range is bounded by the
local reference and temperature conditions.
For finite $p$, put $q=1-1/p$ and $D_p=\mathcal C_p^{p-1}-1$.
Lemma~\ref{lem:integration-moment-transfer} and
$(a+b)^q\le a^q+b^q$ give
\[
 \Delta_t\le
 4\varepsilon_A+4\beta^{1/p}D_p^{1/p}\varepsilon_A^q
 +\frac{96\beta z}{n}+4\beta D_p^{1/p}(24z/n)^q.
\]
Set $\zeta=\beta/(4\pi^2X)$. The first statistical term is at
most $\zeta/2$ when $n\ge768\pi^2zX$. The second is at most
$\zeta/2$ when
$n\ge24z(32\pi^2X)^{1+\nu}D_p^\nu$; use
$D_p^\nu\le\mathcal C_p$ to obtain
Eq.~\eqref{eq:integration-moment-budget}.
Under bounded coverage, Lemma~\ref{lem:integration-moment-transfer} gives
$\Delta_t\le4\mathcal C_\infty L_t(\pi_t)$. Its statistical term
$96\mathcal C_\infty\beta z/n$ is at most $\zeta$ under
Eq.~\eqref{eq:integration-uniform-budget}.

\textbf{Step 3: Propagate the gaps to the final reward.}
Apply Lemma~\ref{lem:integration-angular}, using
Eq.~\eqref{eq:integration-angular-uniform} with $s=\varepsilon/2$,
on the simultaneous event from Step 1. The finite-moment gap bound gives
Eq.~\eqref{eq:integration-final}, and the bounded-coverage gap bound gives
Eq.~\eqref{eq:integration-uniform-reward}.
This proves the bounded-coverage case of Remark~\ref{rem:integration-bounded-coverage};
its fourth-moment case follows by setting $p=4$ in the finite-moment bound.
Finally, Lemma~\ref{lem:integration-log-moments} supplies
$\mathcal C_p\le c_\nu$ under the stated token conditions, and
$(32\pi^2)^{1+\nu}\le(32\pi^2)^2$ for $0<\nu\le1$ gives the absolute
order constants. Only groups count as independent prompt observations.
\end{proof}

\paragraph{Token moments and sequence coverage.}\label{sec:integration-comparison}
The coefficient $\mathcal C_\infty$ controls a full-response ratio, whereas
$V_{\log}$ aggregates conditional token second moments along a path.
For example, $|\log w_h|\le a/\sqrt H$ allows $V_{\log}=a^2$ and the
sequence-ratio envelope $\exp(a\sqrt H)$. When the token cap also holds,
the Bernstein coefficient is at most $\exp((1+\nu^{-1})a^2)$.
The representation and achieved optimization errors remain separate
conditions on the model class and solver.
Uniform coverage is the infinite-order branch because
$\E_bW^p\le\mathcal C_\infty^{p-1}\E_bW=\mathcal C_\infty^{p-1}$.
If a returned learner equals behavior, its response ratio is one and
the direct calibration reduces to $\Delta_t\le4L_t(\pi_t)$. In the
analyzed procedure behavior is frozen before fitting, so sampling from
the current reference generally still produces learner--behavior mismatch.

\subsection{The Gibbs target and behavior lag}
\label{sec:integration-local-relaxation}
The conditional-normalizer calculation after
Assumption~\ref{ass:integration-local} shows that the exact Gibbs target
satisfies the local constraint. Under its relaxed version
$|\log(\pi_h/\pi_{t-1,h})|\le(1+\gamma)/\beta$ for $\gamma\ge0$,
the score range is at most $1+2H(1+\gamma)$. Thus
$\beta\ge1+2H(1+\gamma)$ preserves the calibration and concentration
arguments, provided representation and loss complexity hold for the
relaxed class.

For stored-policy behavior $\pibt{t}=\pi_{t-d_t}$ with
$1\le d_t\le\min(d,t)$, telescoping gives
$|\log w_{t,h}|\le d(1+\gamma)/\beta$ and
one may take $V_{\log}=Hd^2(1+\gamma)^2/\beta^2$.
This supplies the Bernstein token condition when
$d(1+\gamma)/\beta\le\min\{\log2,3\nu/2\}$. Independently of this cap,
multiplying token ratios gives
$\mathcal C_\infty\le\exp(Hd(1+\gamma)/\beta)$, so the direct branch
has a sufficient budget $n=O(zX\exp(Hd(1+\gamma)/\beta))$. For behavior equal to the
current reference, $d=1$; the relaxed temperature condition makes this
coverage coefficient smaller than $\sqrt e$. The moment branch instead
uses $c_\nu=\exp((1+\nu^{-1})Hd^2(1+\gamma)^2/\beta^2)$ with an
$X^{1+\nu}$ term. Which budget is smaller depends on both these
coefficients and the target accuracy.
The lag $d$ counts reference rounds, separately from experimental
synchronization delay. Additional training--inference mismatch requires
bounds on the actual learner-to-behavior ratios in
Eqs.~\eqref{eq:integration-mismatch}
and~\eqref{eq:integration-token-cap}.

\paragraph{Alternative bound from token spans.}
Let $b_h^{\rm span}$ be deterministic bounds, uniform over histories,
prompts, rounds, and training randomness, on the span across actions of
$\log w_h$. Put $r=\sum_h(b_h^{\rm span})^2/8$.
At each history write $Y_h=\log w_h$.
Normalization $\E_{\pi_h}e^{-Y_h}=1$ and Hoeffding's lemma give
$\E_{\pi_h}Y_h\le(b_h^{\rm span})^2/8$ and
$\log\E_{b_h}e^{pY_h}\le p(p-1)(b_h^{\rm span})^2/8$.
Successive conditioning gives $\mathcal C_p\le e^{rp}$.
This bound applies without the lower-ratio condition of
Lemma~\ref{lem:integration-log-moments}. For stored-policy lag,
$r\le Hd^2(1+\gamma)^2/(2\beta^2)$. Use $e^{rp}$ as an upper bound
on $\mathcal C_p$ in Theorem~\ref{thm:integration-convergence}.

\endgroup
\section{Experimental details}
\label{app:experiments}

\subsection{Data and evaluation}
\label{app:data-evaluation}

\paragraph{Training data.}
We describe the data construction and evaluation protocol here so that the
reported comparisons can be reproduced. ALP-Math combines the mathematics split of
\href{https://huggingface.co/datasets/LLM360/guru-RL-92k}{Guru-RL}
\citep{cheng2025guru} with an
OpenR1-derived dataset \citeyearpar{weqweasdasOpenR1Scores}.\footnote{Dataset revisions:
Guru-RL, \texttt{2e2790a962a3c099bfb5ea61389cbf98a5ea439b};
OpenR1, \texttt{da327b36b4dc077240aa8efd7f25c4017cd74ef4}.}
We use the 54,404 problems in \nolinkurl{train/math__combined_54.4k.parquet}
from \nolinkurl{LLM360/guru-RL-92k}.
From \nolinkurl{weqweasdas/from_default_filtered_openr1_with_scores},
we retain the 23,036 examples
whose upstream mean verifier score lies strictly between $0.2$ and $0.8$.
We merge the two sources and deduplicate by exact user-content string after
applying the common prompt format, within and across sources. For each duplicated prompt, we keep the first occurrence, processing
Guru-RL before OpenR1. This reduces 77,440
records to 73,293 unique prompts. Length filtering retains 73,290 prompts:
each input has at most 2,048 tokens after applying the chat template,
including system and user instructions and the generation prompt.
We independently checked the training data against all five evaluation
benchmarks and found no overlapping problems.
The \qwenone{} default-setting dataloaders shuffle with seeds 42, 1234, and 2027.
The $N=128$ condition also uses three training seeds; other settings with
periodic behavior synchronization use seed 42.

Training-time evaluation uses
\href{https://huggingface.co/datasets/HuggingFaceH4/MATH-500}{MATH-500}
from PRM800K. Final evaluation additionally uses
\href{https://huggingface.co/datasets/math-ai/aime24}{AIME 2024},
\href{https://huggingface.co/datasets/math-ai/aime25}{AIME 2025},
\href{https://huggingface.co/datasets/math-ai/olympiadbench}{OlympiadBench}
and \href{https://huggingface.co/datasets/math-ai/minervamath}{Minerva Math}
\citep{lewkowycz2022solving}. MATH-500 is the PRM800K subset of MATH
\citep{hendrycks2021math,lightman2024verify}, and OlympiadBench is the
text-only mathematics split \citep{he2024olympiadbench}. Ground truths are
their boxed answers. The evaluation sets contain 500 MATH-500 problems, 30
AIME 2024 problems, 30 AIME 2025 problems, 674 OlympiadBench problems, and
272 Minerva Math problems, for 1{,}506 problems in total. Every
problem-weighted average we report uses these dataset sizes as weights.
The accompanying preprocessing code pins these evaluation datasets to fixed
revisions.\footnote{MATH-500:
\href{https://huggingface.co/datasets/HuggingFaceH4/MATH-500/tree/6e4ed1a2a79af7d8630a6b768ec859cb5af4d3be}{\texttt{6e4ed1a}};
AIME 2024:
\href{https://huggingface.co/datasets/math-ai/aime24/tree/83a7f387baaa524a8bda0022eac0541582297103}{\texttt{83a7f38}};
AIME 2025:
\href{https://huggingface.co/datasets/math-ai/aime25/tree/563bb8404243c5f09de6ec262f2db674fe5bce9b}{\texttt{563bb84}};
OlympiadBench:
\href{https://huggingface.co/datasets/math-ai/olympiadbench/tree/4faaf1e6ec17d11a4218a9bf4c049ecaf954dd84}{\texttt{4faaf1e}};
Minerva Math:
\href{https://huggingface.co/datasets/math-ai/minervamath/tree/ee46ddc498933b1977577953250ca5c66be64f96}{\texttt{ee46ddc}}.}

\paragraph{Models and prompts.}
We use \href{https://huggingface.co/Qwen/Qwen3-1.7B-Base}{Qwen3-1.7B-Base} and
\href{https://huggingface.co/Qwen/Qwen3-4B-Base}{Qwen3-4B-Base}
with their corresponding tokenizers. The exact system prompt is ``Please reason step by
step, and put your final answer within \texttt{\textbackslash boxed\{\}}.'' Each
user problem is followed by ``Let's think step by step and output the final
answer within \texttt{\textbackslash boxed\{\}}.''

\paragraph{Reward and decoding.}
The binary reward uses Math-Verify \citep{kydlicek2025mathverify}
to verify answers against the ground truth. We use no separate format bonus
or length penalty.

Training rollouts use vLLM \citep{kwon2023vllm} at sampling temperature 1.0, top-$p=1.0$, top-$k$
disabled, 2{,}048-token prompt and response maxima, and $m=8$ responses per prompt.
Training-time and final evaluation use the same sampling temperature, top-$p$, top-$k$,
and length settings. Each checkpoint is evaluated once, drawing 32 responses
per problem.
The end-of-sequence (EOS) token is included in the response probability, and
the response mask includes generated tokens through the first EOS token while
excluding subsequent padding. If generation reaches the 2,048-token response
maximum without EOS, all generated tokens define the maximum-length-truncated
prefix; no unobserved EOS factor is included.

\paragraph{Trainer and evaluation policies.}
The bounds use $\pi_\theta$, the policy defined by the trainer likelihoods.
Evaluation samples from $\pi_\theta^{\mathrm{eval}}$, the rollout implementation
of the same checkpoint; mean@32 estimates $J(\pi_\theta^{\mathrm{eval}})$.
For a common prompt distribution $\rho$ and reward function $r\in[0,1]$,
\[
\bigl|J(\pi_\theta^{\mathrm{eval}})-J(\pi_\theta)\bigr|
\le \mathbb E_{X\sim\rho}\operatorname{TV}\!\left(
\pi_\theta^{\mathrm{eval}}(\cdot\mid X),\pi_\theta(\cdot\mid X)\right),
\]
where $\operatorname{TV}(p,q)=\tfrac12\sum_a|p(a)-q(a)|$ compares distributions
over complete responses. Subtracting this term transfers a learner reward
lower bound to the evaluation policy.

\paragraph{Best-of-$K$ evaluation.}
For binary rewards, best@$K$ and pass@$K$ both estimate $J_K(\pi_\theta^{\mathrm{eval}})$ in
Eq.~\eqref{eq:best-of-k-reward}. Given $c$ correct responses out of 32,
the unbiased estimator is $1-\binom{32-c}{K}/\binom{32}{K}$ for
$1\le K\le32$ \citep{chen2021codex}, with $\binom{32-c}{K}=0$ when $K>32-c$.
It averages the maximum reward over all size-$K$ subsets: at $K=1$, it
equals $c/32$ (mean@32); at $K=32$, it equals $\mathbf{1}\{c>0\}$.
Reported scores average these values over problems.

\paragraph{Mixed-reward filtering and refill.}
In the default setting ($N=8$), each behavior collection draws the next 256
prompts from the shuffled dataloader and generates $m$ responses per prompt
under one frozen pre-update policy. We retain groups containing at least one correct and one
incorrect response and continue sampling until the collection is full.
Each collection allows at most 32 generation batches, including the initial
batch; training stops with an error if the target batch cannot be filled.
For a prompt with success probability $p$, the probability that a sampled
group is retained is $1-p^m-(1-p)^m$. The procedure therefore changes the
training distribution by favoring prompts that produce mixed-reward groups.
Groups with equal rewards carry no verifier signal for GRPO, AGRO, or GGPO,
although reference regularization can still contribute an update.
Within each matched comparison, methods use the same filtering rule, refill
budget, batch geometry, and prompt order.

\paragraph{Mismatch diagnostics.}
K3 and ESS are computed on learner minibatches retained after mixed-reward filtering.
The behavior--learner mismatch curve reports token-mean K3 \citep{schulman2020klapprox},
$\exp(\lambda_\tau)-\lambda_\tau-1$, for
$\lambda_\tau=\log\pi_\theta(a^\tau\mid a^{<\tau})-\log\pib(a^\tau\mid a^{<\tau})$,
averaged over the logged response tokens.
For the $B_{\mathrm{resp}}$ responses in a logged batch, the learner-to-behavior sequence
importance weight is
$w_i=\exp(\sum_\tau M_{i\tau}\lambda_{i\tau})$, and the reported normalized effective
sample size is
$\mathrm{ESS}_{\mathrm{IS}}=(\sum_i w_i)^2/(B_{\mathrm{resp}}\sum_i w_i^2)\in[1/B_{\mathrm{resp}},1]$.
These sequence-level weights provide a mismatch diagnostic for all methods.
For Figure~\ref{fig:offline}, drift is the token mean of $|\lambda_\tau|$ over response tokens selected by the mask $M$. Its ESS uses the $B_{\mathrm{resp}}=2{,}048$ responses of each logged block of
eight learner minibatches, so the minimum normalized ESS is $1/2{,}048$. Gradient norms use each method's native actor-loss normalization and
are therefore interpreted only within a method over training.

\subsection{Training setup}

Training uses AdamW \citep{loshchilov2019adamw}: learning rate $\actorlr$, betas $(0.9,0.999)$,
weight decay $0.01$, constant schedule, no warmup, and gradient clip 1. We use
one actor epoch per collection and no entropy bonus. In the default setting
($N=8$), each 256-prompt collection
contains $m=8$ responses per prompt and is divided into 32-prompt minibatches,
giving $N=8$ optimizer updates. Unless stated otherwise, every method uses
$\beta=0.01$, and neither AGRO nor GGPO applies a token-level
current-to-pre-update ratio, clipping, learner-to-behavior importance weights,
or an additive KL term.

\paragraph{Objectives and reference policies.}
GGPO optimizes the loss in Eq.~\eqref{eq:gcpo_loss_norm}. In every main
comparison, AGRO and GGPO refresh $\piref\leftarrow\pi_\theta$ every
$\Tref=160$ optimizer updates, because their regularized targets depend on the
active reference, whereas our GRPO implementation retains its initial reference.
Appendix~\ref{sec:reference-regularization}
compares a fixed reference, $\Tref=160$, and $\Tref=320$ for all three
methods.

\paragraph{Default GRPO.}
GRPO standardizes rewards within each group of $m=8$ responses by subtracting
the group mean and dividing by the sample standard deviation plus $10^{-6}$;
the resulting advantages are held fixed during differentiation.
The token-level PPO ratio \citep{schulman2017ppo} divides the current learner
probability by the behavior probability recorded during rollout.
The surrogate clips this ratio to $[0.8,1.2]$, applies dual clipping with
$c=10$ for negative advantages, and averages over response tokens selected
by the mask. It uses the recorded rollout denominator directly, with no
old-trainer recomputation or additional off-policy correction.
A K3 KL penalty (\texttt{low\_var\_kl}) with coefficient $0.01$ regularizes
the learner toward the fixed initial reference; the entropy coefficient is zero.

\paragraph{GRPO with token-level truncated importance sampling.}
\label{sec:grpo-tis}
GRPO+TIS uses a token-level importance-weight cap of 2 in the $N=128$
staleness comparison and offline training. Let $\mu$ be the behavior policy
that generated the data and $\pi_{\mathrm{old}}$ the trainer policy at the
start of an update block. At $N=128$, each update block consists of
8 optimizer updates. Offline training uses blocks of 8 optimizer
updates, refreshing $\pi_{\mathrm{old}}$ at each block boundary while
keeping $\mu$ and its stored token log probabilities fixed throughout
training. We compute each block's old-trainer token log probabilities once
under its $\pi_{\mathrm{old}}$.
For token $a_{i\tau}$ with prompt-and-response-prefix context $h_{i\tau}$,
the TIS weight and PPO ratio are
\begin{equation}
\begin{aligned}
 w_{i\tau}
 &=\min\!\left\{\frac{\pi_{\mathrm{old}}(a_{i\tau}\mid h_{i\tau})}
 {\mu(a_{i\tau}\mid h_{i\tau})},\,2\right\},\\
 r_{i\tau}(\theta)
 &=\frac{\pi_\theta(a_{i\tau}\mid h_{i\tau})}
 {\pi_{\mathrm{old}}(a_{i\tau}\mid h_{i\tau})}.
\end{aligned}
\label{eq:grpo-token-tis}
\end{equation}
The TIS weight multiplies the token-level PPO clipped surrogate. The
old-trainer log probabilities, TIS weights, and PPO denominator remain fixed
within each block; the current-policy numerator changes with each learner
update. Weights above 2 are capped at 2, without batch normalization or
rejection sampling. All other settings match the GRPO baseline in the
corresponding setting.

\paragraph{Default AGRO.}
AGRO \citep{tang2025agro} uses leave-one-out centered adjusted rewards.
For response $i$ in group $g$, let
\[
 s_{g,i}(\theta)=\sum_{\tau\in\mathrm{response}}
 \log\pi_\theta(a_{g,i}^{\tau}\mid x_g,a_{g,i}^{<\tau}),
 \qquad
 z_{g,i}=r_{g,i}-\beta\bigl(s_{g,i}(\theta)-s_{g,i}^{\mathrm{ref}}\bigr).
\]
With $\bar z_g=m^{-1}\sum_i z_{g,i}$, the leave-one-out advantage is
\[
 a_{g,i}^{\mathrm{LOO}}
 =z_{g,i}-\frac{1}{m-1}\sum_{j\ne i}z_{g,j}
 =\frac{m}{m-1}(z_{g,i}-\bar z_g).
\]
We remove the exterior $\beta$ factor from AGRO's gradient and use token-mean
aggregation to reuse GRPO's training hyperparameters. For a global learner
minibatch containing $G$ groups with mean response length $\bar\ell$, the
optimized surrogate is
\begin{equation}
 L_{\mathrm{AGRO}}
 =-\frac{1}{Gm\bar\ell}\sum_{g=1}^{G}\sum_{i=1}^{m}
 \operatorname{sg}\!\left(a_{g,i}^{\mathrm{LOO}}\right)s_{g,i}(\theta),
 \label{eq:agro-surrogate}
\end{equation}
where $\beta=0.01$ remains in the adjusted reward and $Gm\bar\ell$ is the
total number of valid response tokens selected by the mask. The operator
$\operatorname{sg}$ stops gradients through the
entire advantage, including its current-policy log ratio; only the final
$s_{g,i}(\theta)$ factor receives gradients. Rewards, reference log
probabilities, and response masks are held fixed. The surrogate uses no PPO
ratio, clipping, additional importance correction, additive KL loss, or
entropy bonus. The reference is refreshed every 160 optimizer updates.

\paragraph{Numerical precision.}
The default GRPO, AGRO, and GGPO runs, as well as the $N=128$
GRPO+TIS runs, store actor parameters in FP32,
execute the forward pass with BF16 mixed precision, and reduce gradients
in FP32. Rollout uses BF16; quantization and quantization-aware training
are disabled.

\paragraph{Implementation and seeds.}
All three objectives are implemented in one fork of verl
\citep{sheng2024hybridflow} (version 0.9.0.dev) with FSDP actor sharding and
vLLM \citep{kwon2023vllm} rollout, so they share the data pipeline, verifier,
refill logic, and evaluation code; their objectives and reference-policy
handling follow the complete configurations above.
Seeds and checkpoint reporting follow
Section~\ref{sec:setup}.

\subsection{Experimental protocols}
\label{app:protocols}

\paragraph{Default training and policy staleness.}
The default and controlled policy staleness runs use 8{,}000 optimizer updates.
Each behavior collection supplies $N$ distinct learner minibatches, with one
update per minibatch and behavior synchronization after the collection;
Section~\ref{sec:async} states the minibatch partition and sample ages.
For $N=128$, the final collection is generated in full, and 64 of its
minibatches are used to complete the 8{,}000-update budget.

Evaluation and checkpointing occur every 160 optimizer updates in every run,
independently of $\Tref$.

\paragraph{Offline setting.}
The fixed rollout pool contains only mixed-reward groups generated once by
\qwenone{}, which is also the initial reference policy. At generation
time, we store each response's behavior token log probabilities and reuse
them throughout training. Current learner token log probabilities are evaluated
on the stored responses for the K3, drift, and ESS measurements; GRPO's
learner-to-behavior ratios are formed against the stored behavior values.
Methods share the same groups and order without new generation. For each seed, we prepare 400
fixed learner minibatches and perform one optimizer update on each, with
evaluation every 80 updates and no response revisited. GRPO+TIS refreshes
its old-trainer policy every 8 updates as specified in
Appendix~\ref{sec:grpo-tis}. AGRO and GGPO use $\Tref=160$, while GRPO
and GRPO+TIS keep the initial reference fixed.

\subsection{GGPO implementation}
\label{app:implementation}

\paragraph{Differentiation semantics.}
For response $i$, the learner log likelihood is
$s_i=\sum_\tau M_{i\tau}\log\pi_\theta(a_i^\tau\mid a_i^{<\tau})$, with the response mask
$M$ selecting the valid generated tokens; the reference log likelihood is
formed over the same masked sequence. Sampled tokens, verifier rewards,
reference log likelihoods, and masks are held fixed during the actor update.
We form the adjusted rewards and directly differentiate the
log-mean-exp loss in Eq.~\eqref{eq:gcpo_loss_norm} with respect to the
learner log probabilities. A detached copy of $q$ is used only to report the
resulting coefficients and diagnostics, not as a separately optimized
surrogate. Autograd therefore produces the gradient in
Eq.~\eqref{eq:gcpo-gradient} exactly, followed by the external normalization in
Eq.~\eqref{eq:gcpo_loss_norm}.

\paragraph{End-to-end GGPO update.}
The update follows Algorithm~\ref{alg:ggpo}. Refill
continues until there are enough mixed-reward groups for $N$ distinct 32-prompt learner
minibatches; the loss is evaluated with a stable log-sum-exp, and each minibatch
receives one AdamW update.

When $N$ does not divide $\Tref$, as at $N=64$ and $N=128$, a refresh boundary can fall
between two minibatches from the same behavior collection. Algorithm~\ref{alg:ggpo} refreshes the reference at that
exact boundary;
the already sampled behavior data remain unchanged, and subsequent learner updates
evaluate their minibatches under the active reference.

\paragraph{Weight-concentration diagnostics.}
With $\mathcal S$ and $k$ as in Appendix~\ref{app:gradient},
over the $N\in\{8,16,32,64\}$ diagnostic subset of the
staleness ladder, the logged within-class normalized effective
sample size---the $\mathrm{ESS}$ of $q$ restricted to $\mathcal S$, divided by
$k$, over groups with $k\ge2$---has medians of $0.77$--$0.80$, while the whole-group
$\mathrm{ESS}_q=1/\sum_iq_i^2$ has medians of $1.96$--$2.28$ out of $m=8$. The
whole-group concentration reflects separation between reward classes.

\section{Additional experiments}
\label{app:additional-experiments}

\subsection{Learning by problem difficulty}
\label{sec:difficulty}

To examine where learning occurs, Figure~\ref{fig:difficulty} splits
MATH-500 by the base model's empirical success rate $\hat p_0$ over 32 responses per problem, into the 410 problems with
$0<\hat p_0\le0.75$ and the 63 with no correct response ($\hat p_0=0$), and tracks
the three default-setting runs on each subset. The remaining 27 problems
reach above 95\% mean@32 for every method within 640 optimizer updates. This division concerns the difficulty of the problems under the
base model. Figure~\ref{fig:group-weight} separately compares the methods'
analytic per-group coefficients.

\begin{figure}[!htbp]
  \centering
  \includegraphics[width=0.80\textwidth]{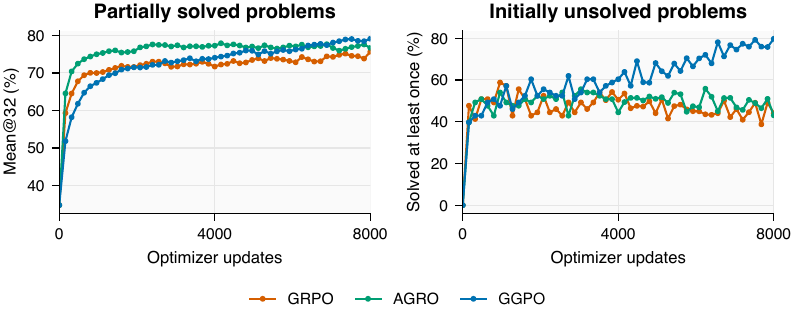}
  \caption{\textbf{MATH-500 learning curves by base-model difficulty on \qwenone{}.}
  Left: mean@32 on the 410 problems the base model solves part of the time
  ($0<\hat p_0\le0.75$). Right: fraction of the 63 initially unsolved problems
  (no correct response among the base model's 32 samples, $\hat p_0=0$) with at
  least one correct response among 32 samples. Both panels plot
  GRPO, AGRO, and GGPO in the default setting with seed 42 against optimizer
  updates.}
  \label{fig:difficulty}
\end{figure}

\begin{figure}[htb]
  \centering
  \includegraphics[width=0.5\textwidth]{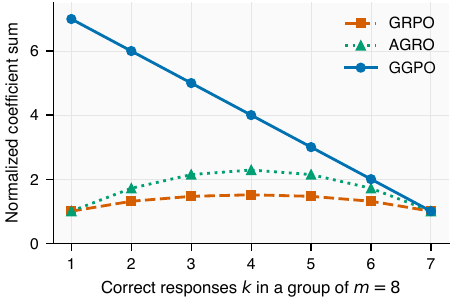}
  \caption{\textbf{Analytic coefficient sum for correct responses in a
  mixed-reward group.} For $m=8$, curves show the signed sum
  $W_{\mathrm{correct}}=\sum_{i:r_i=1}c_i$, normalized to its value at $k=7$
  for each method. GRPO uses standardized reward advantages,
  $W_{\mathrm{correct}}\propto\sqrt{k(m-k)}$; AGRO uses leave-one-out
  advantages with equal learner-to-reference log ratios within the group,
  $W_{\mathrm{correct}}\propto k(m-k)/(m-1)$; GGPO uses
  Eq.~\eqref{eq:normalized-gcpo-gradient} in the binary-reward limit
  $\beta\to0$ at fixed finite log ratios,
  $W_{\mathrm{correct}}=(m-k)/\sqrt{m-1}$.}
  \label{fig:group-weight}
\end{figure}

\subsection{\texorpdfstring{Pass@$K$}{Pass@K} evaluation}
\label{sec:passk}

For each problem, pass@$K$ is the probability that at least one of $K$
independently sampled responses is correct, estimated per problem from the 32
evaluation samples using the unbiased estimator in
Appendix~\ref{app:data-evaluation}. For binary rewards, pass@$K$ and
best@$K$ coincide; the estimated pass@1 equals mean@32. The training curves
report MATH-500 best@32, while Table~\ref{tab:passk-results} reports
problem-weighted pass@$K$ over all five benchmarks at each method's final
8{,}000-update checkpoint.

\begin{table}[htb]
  \centering
  \small
  \renewcommand{\arraystretch}{0.92}
  \setlength{\tabcolsep}{11pt}
  \caption{\textbf{Pass@$K$ across five benchmarks on \qwenone{}.}
  Problem-weighted pass@$K$ (\%) at the final 8{,}000-update checkpoint for
  training seed 42. The base model is evaluated under the same protocol.
  Bold marks the best method at each $K$.}
  \label{tab:passk-results}
  \begin{tabular}{@{}lccccc@{}}
    \toprule
    Method & $K=1$ & $K=4$ & $K=8$ & $K=16$ & $K=32$ \\
    \midrule
    Base model & 19.24 & 38.57 & 47.43 & 55.43 & 62.68 \\
    GRPO & 42.55 & 56.10 & 61.38 & 65.74 & 69.39 \\
    AGRO & 45.06 & 57.41 & 61.96 & 65.85 & 69.19 \\
    GGPO & \textbf{46.60} & \textbf{60.91} & \textbf{66.22} & \textbf{70.46} & \textbf{73.90} \\
    \bottomrule
  \end{tabular}
\end{table}

\paragraph{Results.}
All three methods raise pass@$K$ above the base model at every $K$, and GGPO
has the highest value at every $K$ (Table~\ref{tab:passk-results}).
Its margin over the strongest baseline grows from
1.5 points at $K=1$ to about 4.5 points at $K\ge16$, while AGRO's advantage
over GRPO disappears at larger sampling budgets. GGPO's advantage also
appears in the MATH-500 best@32 curves and the fraction of initially
unsolved problems it solves within 32 samples
(Figures~\ref{fig:main-dynamics} and~\ref{fig:difficulty}).
For a rarely solved problem with success probability $p$ and $Kp\ll1$,
the contribution to pass@$K$ is approximately $Kp$, compared with $p$ to
pass@1. Improvements on such problems can therefore produce larger gains
at higher sampling budgets, consistent with the difficulty analysis.

\subsection{Full results under controlled policy staleness}
\label{sec:staleness-details}

Table~\ref{tab:lag-results-full} gives the per-benchmark final-checkpoint
results across synchronization delays.

\begin{table}[htb]
  \centering
  \small
  \renewcommand{\arraystretch}{0.92}
  \setlength{\tabcolsep}{2.25pt}
  \caption{\textbf{Full evaluation across controlled policy staleness on
  \qwenone{}.} Final-checkpoint mean@32 accuracy after 8{,}000 optimizer updates
  on five benchmarks (\%), by synchronization delay $N$. Results at $N=8,128$
  are three-seed means; $N=8$ reuses Table~\ref{tab:main-results}, and
  $N=16,32,64$ use seed 42. The final column gives the problem-weighted
  average. Sample standard deviations across seeds are reported for all
  columns at $N=8,128$.}
  \label{tab:lag-results-full}
  \begin{tabular}{@{}lcccccc@{}}
    \toprule
    Method ($N$) & MATH-500 & AIME 2024 & AIME 2025 & OlympiadBench & Minerva Math & Avg. \\
    \midrule
    GRPO ($N=8$)  & $68.86\pm0.32$ & $9.03\pm0.90$ & $5.14\pm0.63$ & $32.13\pm0.27$ & $28.33\pm0.40$ & $42.64\pm0.18$ \\
    GRPO ($N=16$) & 68.56 & 8.96 & 5.00 & 31.87 & 28.15 & 42.39 \\
    GRPO ($N=32$) & 68.43 & 8.96 & 5.00 & 31.76 & 28.07 & 42.28 \\
    GRPO ($N=64$) & 68.28 & 8.85 & 5.00 & 31.62 & 27.98 & 42.15 \\
    GRPO ($N=128$) & $64.53\pm0.85$ & $7.71\pm2.35$ & $3.96\pm1.51$ & $28.30\pm0.69$ & $26.02\pm1.01$ & $39.02\pm0.45$ \\
    \shortstack[l]{GRPO+TIS\\($N=128$)} & $66.46\pm0.62$ & $8.30\pm1.20$ & $4.51\pm1.05$ & $30.12\pm0.47$ & $27.09\pm0.64$ & $40.69\pm0.38$ \\
    \midrule
    AGRO ($N=8$)  & $70.68\pm0.33$ & $9.27\pm1.15$ & $6.56\pm0.81$ & $35.73\pm0.30$ & $29.30\pm0.44$ & $45.07\pm0.19$ \\
    AGRO ($N=16$) & 70.88 & 9.06 & 6.25 & 35.91 & 29.11 & 45.17 \\
    AGRO ($N=32$) & 69.98 & 8.85 & 6.04 & 34.48 & 28.83 & 44.17 \\
    AGRO ($N=64$) & 68.29 & 7.29 & 6.25 & 31.07 & 28.10 & 41.92 \\
    AGRO ($N=128$) & $66.94\pm0.67$ & $6.35\pm2.26$ & $5.10\pm1.60$ & $30.12\pm0.60$ & $27.38\pm0.92$ & $40.88\pm0.39$ \\
    \midrule
    GGPO ($N=8$)  & $72.43\pm0.29$ & $11.11\pm0.89$ & $8.30\pm0.68$ & $37.11\pm0.25$ & $30.64\pm0.36$ & $46.58\pm0.16$ \\
    GGPO ($N=16$) & 72.17 & 9.58 & 8.33 & 37.37 & 30.46 & 46.54 \\
    GGPO ($N=32$) & 71.70 & 9.27 & 8.96 & 37.51 & 29.70 & 46.32 \\
    GGPO ($N=64$) & 72.72 & 10.21 & 8.44 & 37.34 & 29.97 & 46.64 \\
    GGPO ($N=128$) & $71.94\pm0.40$ & $10.10\pm1.10$ & $8.33\pm0.90$ & $37.32\pm0.33$ & $30.09\pm0.48$ & $46.39\pm0.21$ \\
    \bottomrule
  \end{tabular}
\end{table}

\begin{figure}[htb]
  \centering
  \includegraphics[width=\textwidth]{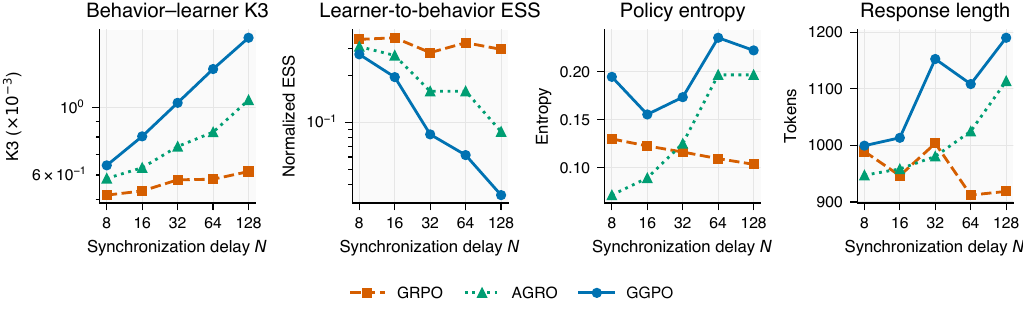}
  \caption{\textbf{Training-time diagnostics across the staleness ladder on
  \qwenone{}.} Each panel plots one metric against synchronization delay $N$
  (equally spaced positions), averaged over the last 800 optimizer updates of
  each run: the token-mean K3 diagnostic of behavior--learner mismatch
  ($\times10^{-3}$),
  the normalized ESS of the diagnostic sequence-level learner-to-behavior
  importance weights (Appendix~\ref{app:data-evaluation}), policy entropy, and
  mean response length in tokens (GRPO orange squares, AGRO green triangles,
  GGPO blue circles). All diagnostics use seed 42; K3 and ESS are computed
  on mixed-reward-filtered training responses.
  K3 and ESS use logarithmic vertical axes.
  Corresponding accuracies are in Table~\ref{tab:lag-results-full}.}
  \label{fig:staleness-mismatch}
\end{figure}

\paragraph{Results.}
At $N=128$, GRPO+TIS improves on GRPO across all five benchmarks, while
GGPO achieves higher accuracy than GRPO+TIS on each benchmark
(Table~\ref{tab:lag-results-full}).
GGPO also retains higher entropy than GRPO at every delay, as in the default
setting (Figure~\ref{fig:staleness-mismatch}).

\subsection{Offline training details}
\label{sec:offline}

Appendix~\ref{app:protocols} gives the generation, logging, and refresh
protocol. Table~\ref{tab:offline-results} reports the mean and sample standard
deviation of MATH-500 accuracy across the three seeds; Figure~\ref{fig:offline}
provides the corresponding curves and seed-42 diagnostics.

\begin{table}[htb]
  \centering
  \small
  \renewcommand{\arraystretch}{0.92}
  \setlength{\tabcolsep}{9pt}
  \caption{\textbf{Offline MATH-500 accuracy across three seeds on
  \qwenone{}.} Mean@32 accuracy (\%) at each evaluated optimizer update,
  reported as the mean $\pm$ sample standard deviation across training seeds
  42, 1234, and 2027. Update 0 is the base model.}
  \label{tab:offline-results}
  \begin{tabular}{@{}rcccc@{}}
    \toprule
    Optimizer updates & GRPO & GRPO+TIS & AGRO & GGPO \\
    \midrule
      0 & 32.96 & 32.96 & 32.96 & 32.96 \\
     80 & 37.63 $\pm$ 0.38 & 54.58 $\pm$ 0.11 & 54.58 $\pm$ 0.35 & 54.83 $\pm$ 0.62 \\
    160 & 38.36 $\pm$ 0.27 & 57.93 $\pm$ 0.33 & 53.72 $\pm$ 0.44 & 59.31 $\pm$ 0.35 \\
    240 & 38.93 $\pm$ 0.31 & 58.28 $\pm$ 0.42 & 57.91 $\pm$ 0.73 & 59.79 $\pm$ 0.69 \\
    320 & 39.79 $\pm$ 0.32 & 54.51 $\pm$ 0.49 & 53.03 $\pm$ 0.68 & 55.83 $\pm$ 0.34 \\
    400 & 39.64 $\pm$ 0.72 & 42.71 $\pm$ 1.23 & 46.43 $\pm$ 0.71 & 48.31 $\pm$ 0.36 \\
    \bottomrule
  \end{tabular}
\end{table}

\subsection{Reference-refresh cadence and regularization}
\label{sec:reference-regularization}

All three methods regularize toward a reference policy; the main comparisons
use the settings of Section~\ref{sec:setup}. We compare each method under a fixed reference,
$\Tref=160$, and $\Tref=320$, and evaluate GGPO at
$\beta=0.1$ with a fixed reference and $\Tref=160$. Table~\ref{tab:reference-refresh} reports the eleven-condition
grid across all five benchmarks at the final checkpoint.

\begin{table}[htb]
  \centering
  \small
  \renewcommand{\arraystretch}{0.92}
  \setlength{\tabcolsep}{2pt}
  \caption{\textbf{Reference-refresh cadence and regularization on \qwenone{}.}
  Final-checkpoint mean@32 accuracy after 8{,}000 optimizer updates on five
  benchmarks (\%). ``Fixed'' keeps the initial reference throughout training;
  $\Tref$ is the refresh cadence in optimizer updates. The $\beta=0.01$ rows at
  each method's default cadence reuse the three-seed means from
  Table~\ref{tab:main-results}; the remaining conditions use seed 42.
  The final column uses the problem-weighted average of
  Table~\ref{tab:main-results}. The compact main-text summary in
  Table~\ref{tab:reference-refresh-summary} retains this seed convention.}
  \label{tab:reference-refresh}
  \begin{tabular}{@{}lcccccc@{}}
    \toprule
    Method, $\beta$, reference & MATH-500 & AIME 2024 & AIME 2025 & OlympiadBench & Minerva Math & Avg. \\
    \midrule
    GRPO, $\beta=0.01$, fixed & 68.86 & 9.03 & 5.14 & 32.13 & 28.33 & 42.64 \\
    GRPO, $\beta=0.01$, $\Tref=160$ & 68.45 & 9.27 & 4.90 & 31.79 & 28.57 & 42.40 \\
    GRPO, $\beta=0.01$, $\Tref=320$ & 68.36 & 9.06 & 4.69 & 32.29 & 27.99 & 42.48 \\
    \midrule
    AGRO, $\beta=0.01$, fixed & 66.61 & 8.13 & 4.06 & 31.64 & 27.48 & 41.48 \\
    AGRO, $\beta=0.01$, $\Tref=160$ & 70.68 & 9.27 & 6.56 & 35.73 & 29.30 & 45.07 \\
    AGRO, $\beta=0.01$, $\Tref=320$ & 69.94 & 8.13 & 4.58 & 34.07 & 28.47 & 43.86 \\
    \midrule
    GGPO, $\beta=0.01$, fixed & 56.52 & 4.79 & 2.92 & 23.26 & 20.11 & 32.96 \\
    GGPO, $\beta=0.01$, $\Tref=160$ & 72.43 & 11.11 & 8.30 & 37.11 & 30.64 & 46.58 \\
    GGPO, $\beta=0.01$, $\Tref=320$ & 70.13 & 9.69 & 6.98 & 35.73 & 29.62 & 44.96 \\
    GGPO, $\beta=0.1$, fixed & 54.09 & 3.44 & 2.08 & 20.98 & 19.49 & 30.98 \\
    GGPO, $\beta=0.1$, $\Tref=160$ & 72.04 & 9.69 & 6.88 & 35.48 & 29.73 & 45.50 \\
    \bottomrule
  \end{tabular}
\end{table}

\begin{table}[htb]
  \centering
  \small
  \renewcommand{\arraystretch}{0.92}
  \setlength{\tabcolsep}{9pt}
  \caption{\textbf{Learner-to-reference log-ratio and entropy diagnostics on
  \qwenone{}.} Mean sequence log ratio $\log[\pi_\theta/\piref]$ over
  the responses in the training batch, and policy entropy, each averaged over
  the last 800 optimizer updates of the seed-42 run.}
  \label{tab:refresh-diagnostics}
  \begin{tabular}{@{}llccc@{}}
    \toprule
    Method & Metric & Fixed & $\Tref=160$ & $\Tref=320$ \\
    \midrule
    AGRO & Log ratio (nats) & 19.2 & 3.8 & 6.6 \\
         & Entropy & 0.19 & 0.07 & 0.10 \\
    \midrule
    GGPO & Log ratio (nats) & 6.0 & 0.6 & 0.7 \\
         & Entropy & 0.35 & 0.19 & 0.18 \\
    \bottomrule
  \end{tabular}
\end{table}

\begin{figure}[!htb]
  \centering
  \includegraphics[width=0.70\textwidth]{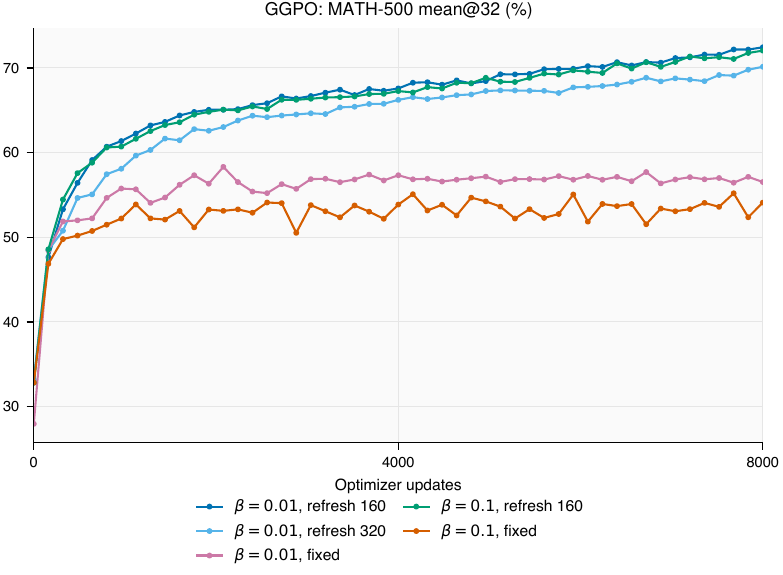}
  \caption{\textbf{GGPO training under different reference-refresh cadences
  and regularization strengths on \qwenone{}.} MATH-500 mean@32 accuracy
  against optimizer updates for $\beta\in\{0.01,0.1\}$ with a fixed reference
  or refresh every 160 updates, and for $\beta=0.01$ with refresh every 320
  updates. All curves use seed 42. Final results across all five benchmarks are reported in
  Table~\ref{tab:reference-refresh}.}
  \label{fig:reference-refresh}
\end{figure}

\paragraph{Results.}
The ablation shows that periodic reference refresh is an empirically important
part of GGPO, rather than only a theoretical extension. At $\beta=0.01$, the
problem-weighted average rises from $32.96$ with a fixed reference to $46.58$
with the default $\Tref=160$ schedule; $\Tref=320$ reaches $44.96$
(Table~\ref{tab:reference-refresh}). Thus a fixed reference costs GGPO 13.6
points and leaves it below fixed-reference GRPO. AGRO also performs best with
$\Tref=160$, but its fixed-reference penalty is smaller at 3.6 points, while
GRPO is essentially unchanged across the three conditions.

Figure~\ref{fig:reference-refresh} shows that fixed-reference GGPO levels
off after its early gains, with a lower plateau at $\beta=0.1$.
With reference refresh, accuracy continues to improve through 8{,}000
optimizer updates at both temperatures.

The reference shapes AGRO's and GGPO's training targets, whereas GRPO
uses it only in an additive KL penalty
(Appendix~\ref{app:experiments}). This distinction is consistent with
GRPO's small accuracy changes across refresh cadences.
GGPO shows a larger accuracy improvement from reference refresh than
AGRO and GRPO in this comparison. The following diagnostics examine the associated reference drift
and changes in response weights.

\paragraph{Reference-drift diagnostics.}
Table~\ref{tab:refresh-diagnostics} provides complementary empirical evidence.
With the common fixed reference, GGPO's mean sequence log ratio is $6.0$ nats,
compared with AGRO's $19.2$. The same ordering holds with $\Tref=160$
($0.6$ versus $3.8$) and $\Tref=320$ ($0.7$ versus $6.6$).
GGPO's consistently lower log ratios on the retained training responses are
consistent with stronger effective reference anchoring under the implemented
updates. The fixed-reference comparison is especially
informative because both methods use the same initial anchor.
These means are computed on each method's behavior-policy samples after
mixed-reward filtering.

\paragraph{Weight concentration among successful responses.}
Figure~\ref{fig:reference-weight-ess} compares the distribution of GGPO's
weights among successful responses with a fixed reference and with
$\Tref=160$ at $\beta=0.1$. For groups with
$k\ge2$ successful responses, let $\widetilde q_i=q_i/\sum_{j\in\mathcal S}q_j$
for $i\in\mathcal S$, where $\mathcal S$ is the successful subset. The
within-success normalized ESS is
$[k\sum_{i\in\mathcal S}\widetilde q_i^2]^{-1}\in[1/k,1]$;
one denotes uniform weights, and smaller values indicate greater concentration.
Because these responses have equal rewards, Eq.~\eqref{eq:q-logits} gives
$\widetilde q_i\propto e^{-\Delta_i}$, where
$\Delta_i=\log[\pi_\theta(a_i\mid x)/\piref(a_i\mid x)]$.
Thus this diagnostic isolates how learner-to-reference log-ratio differences
redistribute weight within the successful subset.

\begin{figure}[!htb]
  \centering
  \includegraphics[width=0.70\textwidth]{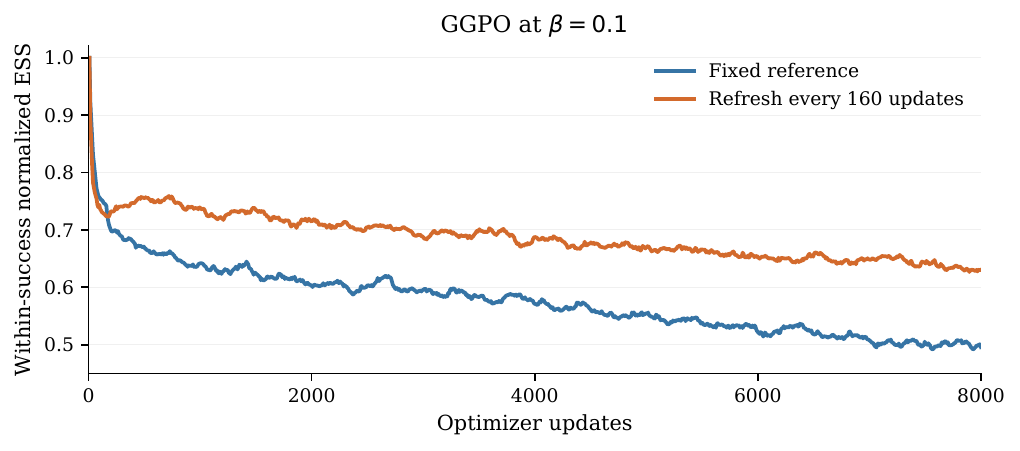}
  \caption{\textbf{Within-success weight concentration for GGPO on \qwenone{}.}
  Normalized ESS of the weights restricted to successful responses, for groups
  with at least two successes, at $\beta=0.1$ with seed 42. Blue uses a fixed
  reference; orange refreshes every 160 optimizer updates. Higher values
  indicate more uniform weights among successful responses.}
  \label{fig:reference-weight-ess}
\end{figure}

After the initial decline, the refreshed-reference curve remains above the
fixed-reference curve, although both decrease over training.
The higher ESS with refresh is consistent with limiting accumulated
log-ratio differences among successful responses, as described in
Appendix~\ref{app:anchoring}. This provides a weight-level
view of the reference dependence accompanying the accuracy trends in
Figure~\ref{fig:reference-refresh}.

Averaging over the final five recorded
refresh windows and the matching fixed-reference interval gives normalized
ESS values of $0.582$ with a fixed reference and $0.690$ with refresh;
the corresponding fractions of groups with $k\ge2$ are $0.758$ and $0.839$.
The raw group loss remains nonzero, averaging $0.619$ and $0.531$, respectively.
These diagnostics establish the presence of reference-dependent reweighting
during finite-loss training but do not isolate its contribution to the accuracy gain.

\subsection{Ablations on GGPO's group weights}
\label{sec:group-weight-ablation}

GGPO's weights are $q_i=\softmax_i(r_i/\beta-\Delta_i)$, where
$\Delta_i=\log[\pi_\theta(a_i\mid x)/\piref(a_i\mid x)]$.
We vary the reward temperature $\beta$ and remove the reference-dependent
term $-\Delta_i$ under the default \qwenone{} configuration with $N=8$.

\paragraph{Temperature choice.}
Reducing $\beta$ from $0.01$ to $0.001$ changes GGPO's problem-weighted
average from $46.58$ to $46.78$, while AGRO decreases from $45.07$ to $44.37$.
We use $\beta=0.01$ in the main comparisons.

\paragraph{Removing the reference-dependent term.}
Under the default $N=8$ setting, we isolate the contribution of
reference-dependent weighting by replacing $q_i=\softmax_i(r_i/\beta-\Delta_i)$
with $q_i=\softmax_i(r_i/\beta)$ at $\beta=0.01$, while retaining the
remaining GGPO update. The reward-only variant scores $45.12$ with seed 42,
compared with the full method's three-seed mean of $46.58$.
For mixed binary-reward groups, the resulting update coefficients approach
the success-normalized form of MaxRL \citep{tajwar2026maxrl} at small $\beta$, up to
GGPO's normalization (Appendix~\ref{app:gradient}). This component ablation
supports reference-dependent weighting in the default $N=8$ setting.

\subsection{Finite group size}
\label{sec:group-size-ablation}

We study the effect of group size at a fixed optimizer-update budget by
varying the number of responses per prompt over $m\in\{4,8,16\}$ for
\qwenone{} under the default GGPO configuration. Every run uses 32 groups
per learner minibatch and 8{,}000 optimizer updates; the $m=8$ condition
reuses the default-setting run. Each update uses $32m$ retained responses,
and raw generation cost also depends on mixed-group acceptance and refill.

\begin{table}[H]
  \centering
  \small
  \renewcommand{\arraystretch}{0.92}
  \setlength{\tabcolsep}{11pt}
  \caption{\textbf{Evaluation across GGPO group sizes on \qwenone{}.}
  Problem-weighted best@$K$ (\%) over the five benchmarks at the final
  8{,}000-update checkpoint for training seed 42. For binary rewards,
  best@$K$ equals pass@$K$. The $m=8$ row reuses the GGPO results in
  Table~\ref{tab:passk-results}.
  All runs use 32 groups per learner minibatch.}
  \label{tab:group-size}
  \begin{tabular}{@{}lccccc@{}}
    \toprule
    Group size & $K=1$ & $K=4$ & $K=8$ & $K=16$ & $K=32$ \\
    \midrule
    $m=4$  & 46.51 & 60.73 & 65.97 & 70.12 & 73.17 \\
    $m=8$  & 46.60 & 60.91 & 66.22 & 70.46 & 73.90 \\
    $m=16$ & 46.88 & 61.43 & 66.80 & 71.25 & 75.10 \\
    \bottomrule
  \end{tabular}
\end{table}

\paragraph{Results.}
Increasing $m$ from 4 to 16 improves best@$K$ at every reported $K$, with
gains growing from 0.37 points at $K=1$ to 1.93 points at $K=32$
(Table~\ref{tab:group-size}). Larger groups are
more likely to contain a correct response on difficult prompts, providing a
mixed-reward training signal that smaller groups may miss. The larger gains
at higher sampling budgets are consistent with improvements on problems that
the learner solves infrequently.

\subsection{Sampling budget}
\label{sec:sampling-budget}

Figure~\ref{fig:sampling-budget} complements the update-based comparison in
Section~\ref{sec:default-training} with cumulative generated training responses
for \qwenone{} under the default setting ($N=8$, $m=8$).
Counts include all generated candidate responses, including those discarded
by mixed-reward filtering and those generated during refill
(Appendix~\ref{app:data-evaluation}).

\subsection{Token-clipping training rewards}
\label{sec:token-clipping-results}

Figure~\ref{fig:token-clipping} gives the single-seed training-reward
comparison discussed in Section~\ref{sec:token-clipping-experiment}.
The rewards are measured on mixed-reward-filtered training responses;
Appendix~\ref{app:token-clipping} gives the implementation.

\begin{figure}[!htbp]
  \centering
  \includegraphics[width=0.70\textwidth]{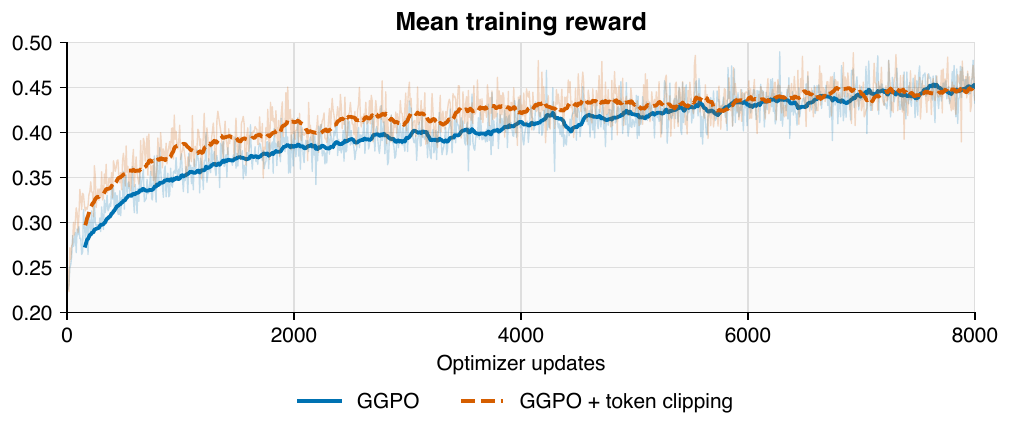}
  \captionsetup{font=small}
  \caption{\textbf{Token clipping improves early training progress.}
  Mean training reward (higher is better) on \qwenone{} against optimizer
  updates, with $N=8$, $m=8$, $\beta=0.01$, $\Tref=160$, and seed 42.
  Blue shows the original GGPO update; orange shows the detached-coefficient
  variant with token log-ratio clipping at $\pm0.05$.
  Rewards are measured on mixed-reward-filtered training data. Faint lines
  show values aggregated over eight updates; bold lines show trailing
  160-update means. Clipping raises early training reward, with a smaller
  difference later in training.}
  \label{fig:token-clipping}
\end{figure}

\clearpage
\section{Token-clipping implementation}
\label{app:token-clipping}
This appendix explains how the clipped adjusted rewards determine response
coefficients and how those coefficients enter the detached-gradient surrogate
used in Section~\ref{sec:token-clipping-experiment}. The original
GGPO algorithm uses no token log-ratio clipping.

\paragraph{Clipped reference correction.}
For response $a_i^g$ in group $g$, let
$h_{g,i,t}=(x_g,a^g_{i,<t})$ and let $M_{g,i,t}$ be the valid-token mask.
We clip each learner-to-reference token log ratio before summing it:
\begin{align}
 d_{g,i,t}(\theta)
 &=\log\frac{\pi_\theta(a^g_{i,t}\mid h_{g,i,t})}
                  {\piref(a^g_{i,t}\mid h_{g,i,t})},
 \nonumber\\
 \bar R_{g,i}(\theta)
 &=r(x_g,a_i^g)-\beta\sum_t M_{g,i,t}
       \operatorname{clip}(d_{g,i,t}(\theta),-c_{\mathrm{clip}},c_{\mathrm{clip}}),
 \label{eq:clip-adjusted-reward}\\
 \bar q_{g,i}(\theta)
 &=\frac{\exp(\bar R_{g,i}(\theta)/\beta)}
         {\sum_j\exp(\bar R_{g,j}(\theta)/\beta)},
 \qquad
 \bar c_{g,i}(\theta)=\frac{m\bar q_{g,i}(\theta)-1}{\sqrt{m-1}}.
 \label{eq:clip-response-coefficient}
\end{align}
We use $c_{\mathrm{clip}}=0.05$, equivalent to clipping the token probability
ratio to $[e^{-0.05},e^{0.05}]$. The denominator is the reference policy;
this is not a learner-to-behavior importance weight.

\paragraph{Objective and stop-gradient.}
Let $T_{\mathrm{tok}}=\sum_{g,i,t}M_{g,i,t}$ be the total valid-token count
in the learner minibatch, and let $\theta_0$ denote the parameters at which
the response coefficients are computed for the current update. We minimize
the local surrogate
\begin{equation}
 L_{\mathrm{clip}}(\theta;\theta_0)
 =-\frac{1}{T_{\mathrm{tok}}}\sum_{g,i}
   \operatorname{sg}[\bar c_{g,i}(\theta_0)]
   \sum_t M_{g,i,t}\log\pi_\theta(a^g_{i,t}\mid h_{g,i,t}),
 \label{eq:clip-surrogate}
\end{equation}
where $\operatorname{sg}$ holds its entire argument constant during
differentiation. Clipping changes the coefficients, while
differentiation uses the original, unclipped token log probabilities. Tokens outside the clipping
interval still contribute their score gradients, and all valid tokens
remain in $T_{\mathrm{tok}}$. Coefficients are recomputed for each update;
rewards, sampled tokens, reference probabilities, and masks are held fixed.
The normalization matches Eq.~\eqref{eq:gcpo_loss_norm}, with no additional
exterior factor of $\beta$.

For logging, the implementation retains the clipped log-mean-exp statistic
\begin{equation}
 \widehat{\mathcal J}_{m,c_{\mathrm{clip}}}^{\,g}
 =\beta\log\!\left(\frac1m\sum_i e^{\bar R_{g,i}/\beta}\right)
  -\frac1m\sum_i\bar R_{g,i}.
 \label{eq:clip-gap-statistic}
\end{equation}
The update is the gradient of Eq.~\eqref{eq:clip-surrogate} at
$\theta=\theta_0$, rather than ordinary differentiation through
Eq.~\eqref{eq:clip-gap-statistic}. The latter would introduce a clamp
derivative that suppresses gradients at saturated tokens. Removing clipping
recovers the normalized original GGPO gradient on the same minibatch.

\end{document}